\crefname{ineq}{Ineq.}{Ineqs.}
\newcommand{\citet}[1]{\cite{#1}}
\newcommand\msmfmt{00.00}
\newcommand\mssep{\ensuremath{_\pm}}
\newcommand\mssfmt{_{00.00}}
\newcommand\msstdalign{l}
\newcommand{\basicmeanstyle}[1]{#1}%
\newcommand{\basicstdstyle}[1]{#1}%
\newsavebox\CBox 
\newcommand*\bfmeanstyle[1]{\sbox\CBox{\ensuremath{\basicmeanstyle{#1}}}\resizebox{\wd\CBox}{\ht\CBox}{\ensuremath{\mathbf{\basicmeanstyle{#1}}}}}%
\newcommand*\bfstdstyle[1]{\sbox\CBox{\ensuremath{_{\basicstdstyle{#1}}}}\resizebox{\wd\CBox}{\ht\CBox}{\ensuremath{\mathbf{_{\basicstdstyle{#1}}}}}}%
\newlength{\meanlen}
\newlength{\stdlen}
\newcommand{\meanstyle}[1]{\basicstdstyle{#1}}%
\newcommand{\stdstyle}[1]{\basicstdstyle{#1}}%
\newcommand{\msprintmean}[1]{\ensuremath{\mathmakebox[\meanlen][r]{\meanstyle{#1}}}}
\newcommand{\msprintstd}[1]{\mssep\ensuremath{\mathmakebox[\stdlen][\msstdalign]{_{\stdstyle{#1}}}}}
\newcommand{\setcellformat}[4][l]{
    \renewcommand\msstdalign{#1}%
    \renewcommand\msmfmt{#2}%
    \renewcommand\mssep{#3}%
    \renewcommand\mssfmt{_{#4}}%
    \settowidth{\meanlen}{\ensuremath{\msmfmt}}%
    \settowidth{\stdlen}{\ensuremath{\mssfmt}}%
}
\newcommand{\msc}[2][n]{%
    \let\meanstyle\basicmeanstyle%
    \let\stdstyle\basicstdstyle%
    \ifx b#1\let\meanstyle\bfmeanstyle\let\stdstyle\bfstdstyle\fi%
	\setsepchar{,}%
    \reademptyitems%
	\readlist*\theparam{#2}%
    \ignoreemptyitems%
	\readlist*\theneparam{#2}%
    \msprintmean{\theparam[1]}%
    \ifnum \theparamlen=2 
        \ifnum \theneparamlen=2
            \msprintstd{\theparam[2]}%
        \else
            \phantom{\msprintstd{0}}%
        \fi
    \fi
}
\newcommand{\msb}[1]{\msc[b]{#1}}
\newtheorem{theorem}{Theorem}
\newtheorem{definition}[theorem]{Definition}
\newtheorem{proposition}[theorem]{Proposition}
\newtheorem{corollary}[theorem]{Corollary}
\DeclareMathOperator{\auc}{AUC}
\DeclareMathOperator{\rpaucmo}{rpAUC}
\DeclareMathOperator{\tpr}{TPR}
\DeclareMathOperator{\fpr}{FPR}
\DeclareMathOperator*{\argmin}{argmin}
\newcommand{\samult}{\textsc{Samult}}
\newcommand{\samultpu}{\ensuremath{\textsc{Samult}^\textsc{P+U}}}
\newcommand{\rpauc}{{WSAUC}}
\newcommand{\aucbarrier}{AUC-B}
\newcommand{\aucramp}{AUC-R}
\newcommand{\auchinge}{AUC-H}
\newcommand{\aucunhinged}{AUC-U}
\DeclareMathOperator*{\Ebb}{\mathbb{E}}
\newcommand{\Ibb}{\mathbb{I}}
\newcommand{\elloi}{\ell_{01}}
\newcommand{\simiid}{\stackrel{\mathrm{i.i.d.}}{\sim}}
\newcommand{\Rcal}{\mathcal{R}}
\newcommand{\Xcal}{\mathcal{X}}
\newcommand{\Xcalp}{\mathcal{X}_{P}}
\newcommand{\Xcaln}{\mathcal{X}_{N}}
\newcommand{\Xcalu}{\mathcal{X}_{U}}
\newcommand{\Xcall}{\mathcal{X}_{L}}
\newcommand{\Xcala}{\mathcal{X}_{A}}
\newcommand{\Xcalb}{\mathcal{X}_{B}}
\newcommand{\Xcalpt}{\mathcal{X}_{\tilde{P}}}
\newcommand{\Xcalnt}{\mathcal{X}_{\tilde{N}}}
\newcommand{\risk}[1]{R_{{#1}}}
\newcommand{\emprisk}[1]{\hat{R}_{{#1}}}
\newcommand{\Rpn}{\risk{PN}}
\newcommand{\empRpn}{\emprisk{PN}}
\newcommand{\Rpu}{\risk{PU}}
\newcommand{\empRpu}{\emprisk{PU}}
\newcommand{\Run}{\risk{UN}}
\newcommand{\Rpnu}{\risk{PNU}}
\newcommand{\empRpnu}{\emprisk{PNU}}
\newcommand{\Rpp}{\risk{PP}}
\newcommand{\Rnn}{\risk{NN}}
\newcommand{\Rab}{\risk{AB}}
\newcommand{\empRab}{\emprisk{AB}}
\newcommand{\xv}{\boldsymbol{x}}
\newcommand{\xp}{\boldsymbol{x}}
\newcommand{\xn}{\boldsymbol{x}'}
\newcommand{\xpi}{\boldsymbol{x}_i}
\newcommand{\xnj}{\boldsymbol{x}'_j}
\newcommand{\xuk}{\boldsymbol{x}''_k}
\newcommand{\EP}{\Ebb_{\xp \sim p_P (\xv)}}
\newcommand{\EN}{\Ebb_{\xn \sim p_N (\xv)}}
\newcommand{\mnist}{{MNIST}}
\newcommand{\fmnist}{{FashionMNIST}}
\newcommand{\cifar}{{CIFAR10}}
\newcommand{\cifarb}{{CIFAR100}}
\begin{document}
%
\title{Weakly Supervised {AUC} Optimization: \\A Unified Partial {AUC} Approach}

\author{Zheng~Xie,
        Yu~Liu,
        Hao-Yuan~He,
        Ming~Li,~\IEEEmembership{Member,~IEEE},
        and~Zhi-Hua~Zhou,~\IEEEmembership{Fellow,~IEEE}%
\IEEEcompsocitemizethanks{
\IEEEcompsocthanksitem Zheng Xie, Yu Liu, Hao-Yuan He, Ming Li, and Zhi-Hua Zhou are with Key Laboratory for Novel Software Technology, Nanjing University, Nanjing 210023, China.\protect\\
Email: \{xiez, liuy, hehy, lim, zhouzh\}@lamda.nju.edu.cn%
\IEEEcompsocthanksitem Corresponding author: Ming Li.
}
\thanks{Manuscript received Apr 12, 2023; revised Oct 15, 2023.}%
}

%
%

\markboth{{IEEE} Transactions on Pattern Analysis and Machine Intelligence,~Vol.~xx, No.~x, Month~2024}%
{Xie et al.: Weakly Supervised {AUC} Optimization: A Unified Partial {AUC} Approach}
%



\IEEEtitleabstractindextext{%
\begin{abstract}
Since acquiring perfect supervision is usually difficult, real-world machine learning tasks often confront inaccurate, incomplete, or inexact supervision, collectively referred to as weak supervision. 
In this work, we present WSAUC, a unified framework for weakly supervised AUC optimization problems, which covers noisy label learning, positive-unlabeled learning, multi-instance learning, and semi-supervised learning scenarios.
Within the WSAUC framework, we first frame the AUC optimization problems in various weakly supervised scenarios as a common formulation of minimizing the AUC risk on contaminated sets, and demonstrate that the empirical risk minimization problems are consistent with the true AUC{}.
Then, we introduce a new type of partial AUC, specifically, the reversed partial AUC (rpAUC), which serves as a robust training objective for AUC maximization in the presence of contaminated labels.
WSAUC offers a universal solution for AUC optimization in various weakly supervised scenarios by maximizing the empirical rpAUC{}.
Theoretical and experimental results under multiple settings support the effectiveness of WSAUC on a range of weakly supervised AUC optimization tasks.
\end{abstract}

\begin{IEEEkeywords}
AUC optimization, weakly supervised learning, partial AUC{}.
\end{IEEEkeywords}}

\maketitle

\IEEEdisplaynontitleabstractindextext

%
\IEEEpeerreviewmaketitle


%
%
%
%

\IEEEraisesectionheading{\section{Introduction}\label{sec:introduction}}

\IEEEPARstart{T}{he} ROC curve (receiver operating characteristic curve) depicts the performance of a classifier by illustrating the variation of its true positive rate (TPR) against its false positive rate (FPR) for different thresholds. Its widespread use extends to fields such as military, biology, medicine, and others. Within machine learning, the area under ROC curve~\cite{auc,a1,a2,aucsurvey}, commonly referred to as AUC, is frequently utilized as a measure of the model's classification ability, without the explicit setting of a threshold. Practically, AUC is robust to class imbalance problem and effectively depicts the ranking ability of the model. Its utility lead to AUC optimization~\cite{cortes2003auc,aucsurvey} becoming a prevalent solution for building models with high AUC performance. By optimizing AUC, the models can better deal with imbalanced data and achieve improved ranking performance. Research on AUC optimization covers topics ranging from effective batch optimization~\cite{Freund2003,joachims2005} to online optimization~\cite{aucgd,Zhao2011oam,solam}, theoretical properties~\cite{consistency,aucgeb}, and combination with deep learning~\cite{Liu2020,Yuan2022}, etc. 

Recently, studies on partial AUC (pAUC) have emerged, as researchers have argued that for certain tasks, only the TPR or FPR within a specific range is of interest~\cite{Narasimhan2013opauc}. Two major variations, One-way Partial AUC (OPAUC)~\cite{Narasimhan2013opauc} and Two-way Partial AUC (TPAUC)~\cite{Yang2019tpauc}, have been proposed, followed by studies on efficient optimization of such metrics~\cite{Cheng2019,Iwata2020sslpauc,Yang2021tpaucpie}. It has been demonstrated that the partial AUC is useful in many real-world tasks including gene detection~\cite{Liu2010bio}, speech processing~\cite{bai2020icassp}, etc.

Despite the success of AUC optimization, most of the studies in this area focus on learning from clean data. However, in real-world learning tasks, collecting enough clean data is often difficult, and it is therefore necessary to study AUC optimization in the context of weakly supervised learning~(WSL)~\cite{wsl}. Common weakly supervised learning paradigms include semi-supervised learning~\cite{zhubook}, positive-unlabeled learning~\cite{pulsurvey}, noisy label learning~\cite{Han2021nllsurvey}, multi-instance learning~\cite{milsurvey}, and more. According to \citet{wsl}, these paradigms can be categorized as learning from incomplete, inaccurate, or inexact supervision.

Due to the pairwise formulation of the AUC risk, accuracy-oriented weakly supervised learning approaches cannot be easily adapted for AUC maximization. Thus, AUC optimization under weak supervision demands dedicated research. Currently, studies in this area remain relatively limited, such as in the contexts of semi-supervised learning~\cite{Fujino2016,puauc, Xie2018} and noisy label learning~\cite{Charoenphakdee2019}. While these studies focus on specific scenarios within WSL, no research to date offers a general understanding of the series of problems in weakly supervised AUC optimization.

In this work, we present WSAUC, a unified framework for weakly supervised AUC optimization problems. The framework provides a unified view of multiple WSL scenarios including (1) noisy label learning, (2) positive-unlabeled learning, (3) multi-instance learning, and (4) semi-supervised learning with or without label noise. 
Within the WSAUC framework, we first frame the different types of weakly supervised AUC optimization problems into a unified formulation, which is in the form of AUC optimization with contaminated labels. 
By proper construct the weakly supervised AUC risks under various scenarios, the empirical risk minimization (ERM) problems are proven to be consistent with the true AUC risk, so that the problems can be solved with a unified solution.

To mitigate the effect of contamination in the unified formulation of WSAUC, we propose a new type of partial AUC, called reversed partial AUC (rpAUC, c.f.~\cref{fig:rpauc}). We show the connection between minimizing empirical rpAUC risk (or rpAUC maximization) and commonly used robust training approaches in noisy label learning, which suggests that minimizing empirical rpAUC risk leads to robust AUC optimization with contaminated labels.
This method can be easily combined with the ERM problems of AUC optimization, and can be simply implemented by slightly modifying existing pAUC maximization algorithms.
Based on rpAUC, the WSAUC framework offers a unified approach for AUC optimization in various weakly supervised scenarios.
We evaluate the WSAUC framework through experiments under different WSL settings, and conclude its effectiveness.

The rest of this paper is organized as follows: We first discuss the related work in \cref{sec:related_work}, and then introduce preliminary knowledge in \cref{sec:preliminary}. In \cref{sec:unified}, we present the unified view of weakly supervised AUC optimization, which is followed by the theoretical analysis in \cref{sec:theory}. In \cref{sec:robust}, we introduce the novel concept of rpAUC for robust AUC optimization. Experimental results are presented in \cref{sec:exp}. Finally, \cref{sec:con} concludes with future work.

\begin{figure}[t]
	\centering
	\captionsetup[subfigure]{margin=0pt} 
	\subfloat[AUC]{
		\label{fig:auc}
		\includegraphics[width=0.22\textwidth,trim=6.5cm 1.4cm 6.5cm 1cm]{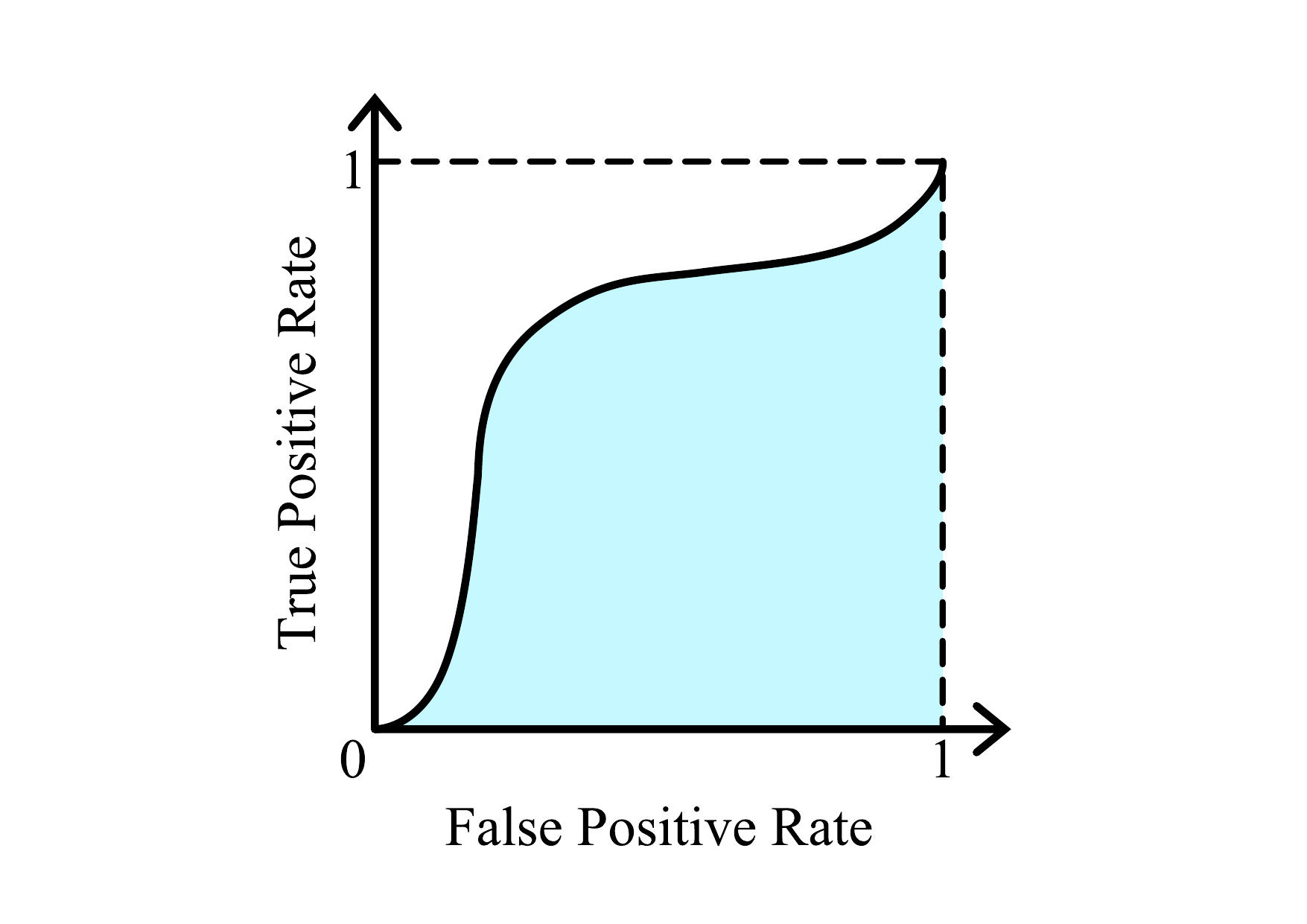}
  }
  \subfloat[OPAUC]{
		\label{fig:opauc}
		\includegraphics[width=0.22\textwidth,trim=6.5cm 1.4cm 6.5cm 1cm]{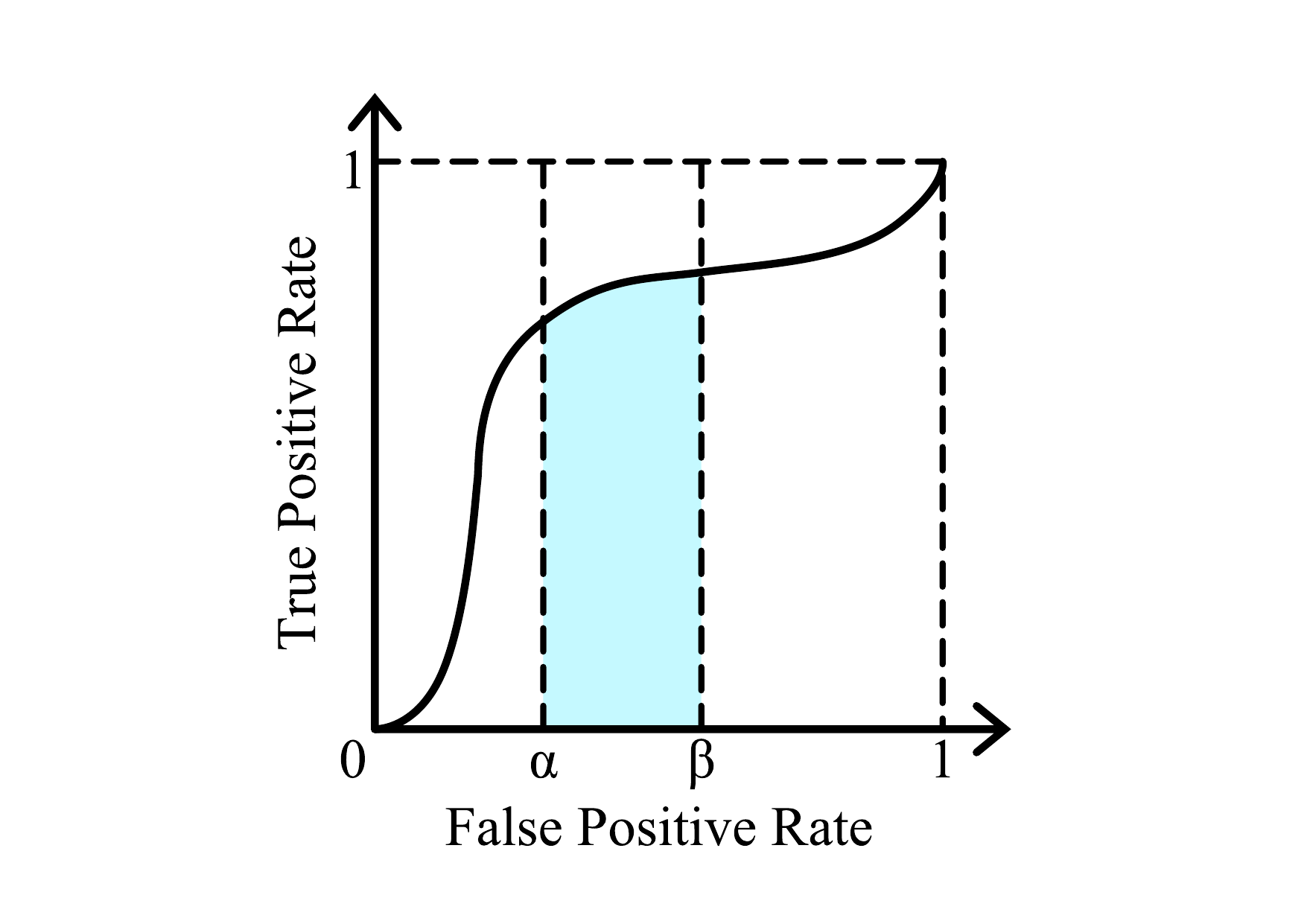}
	}\\\subfloat[TPAUC]{
		\label{fig:tpauc}
		\includegraphics[width=0.22\textwidth,trim=6.5cm 1.4cm 6.5cm 1cm]{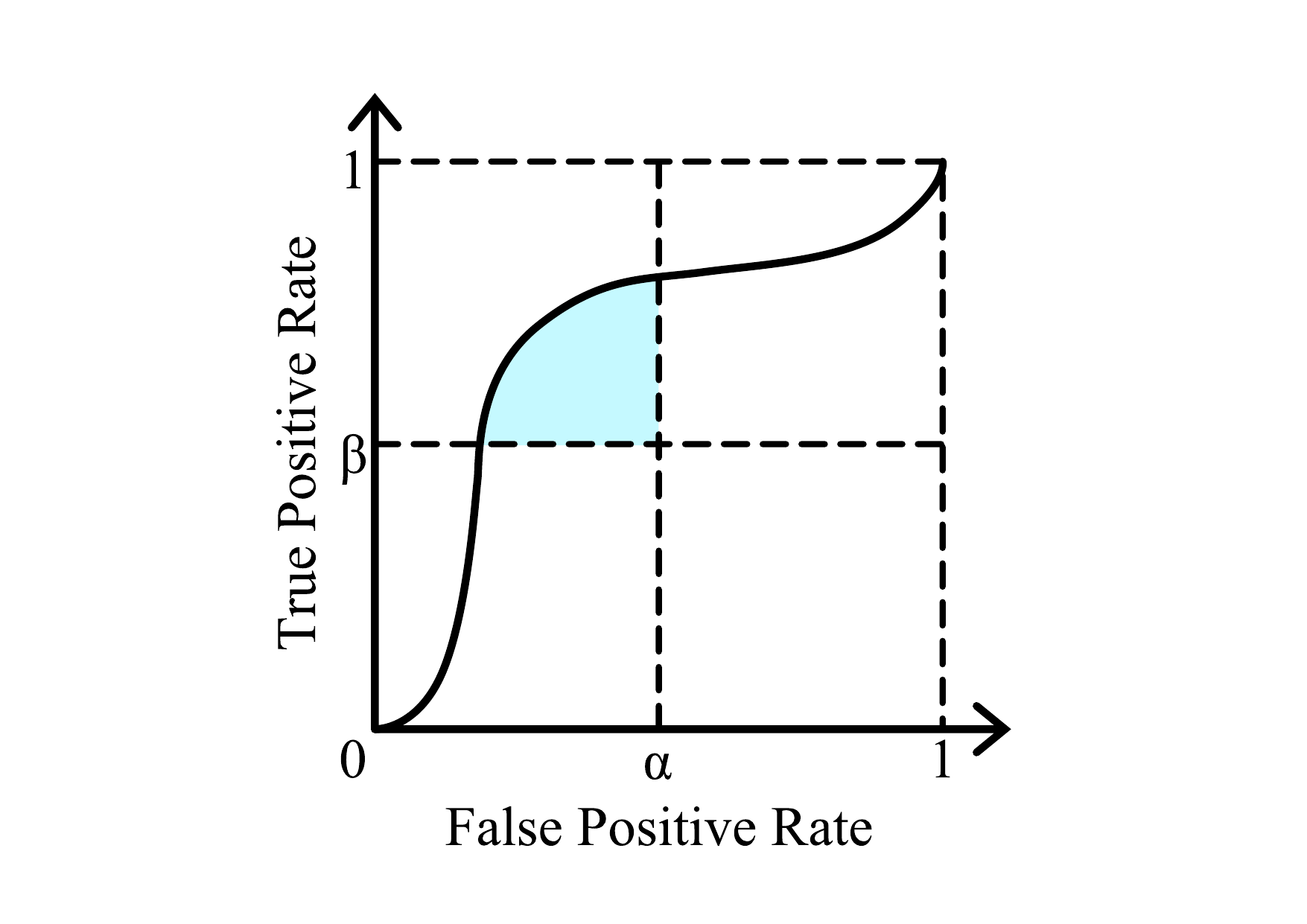}
	}
	\subfloat[rpAUC]{
		\label{fig:rpauc}
		\includegraphics[width=0.22\textwidth,trim=6.5cm 1.4cm 6.5cm 1cm]{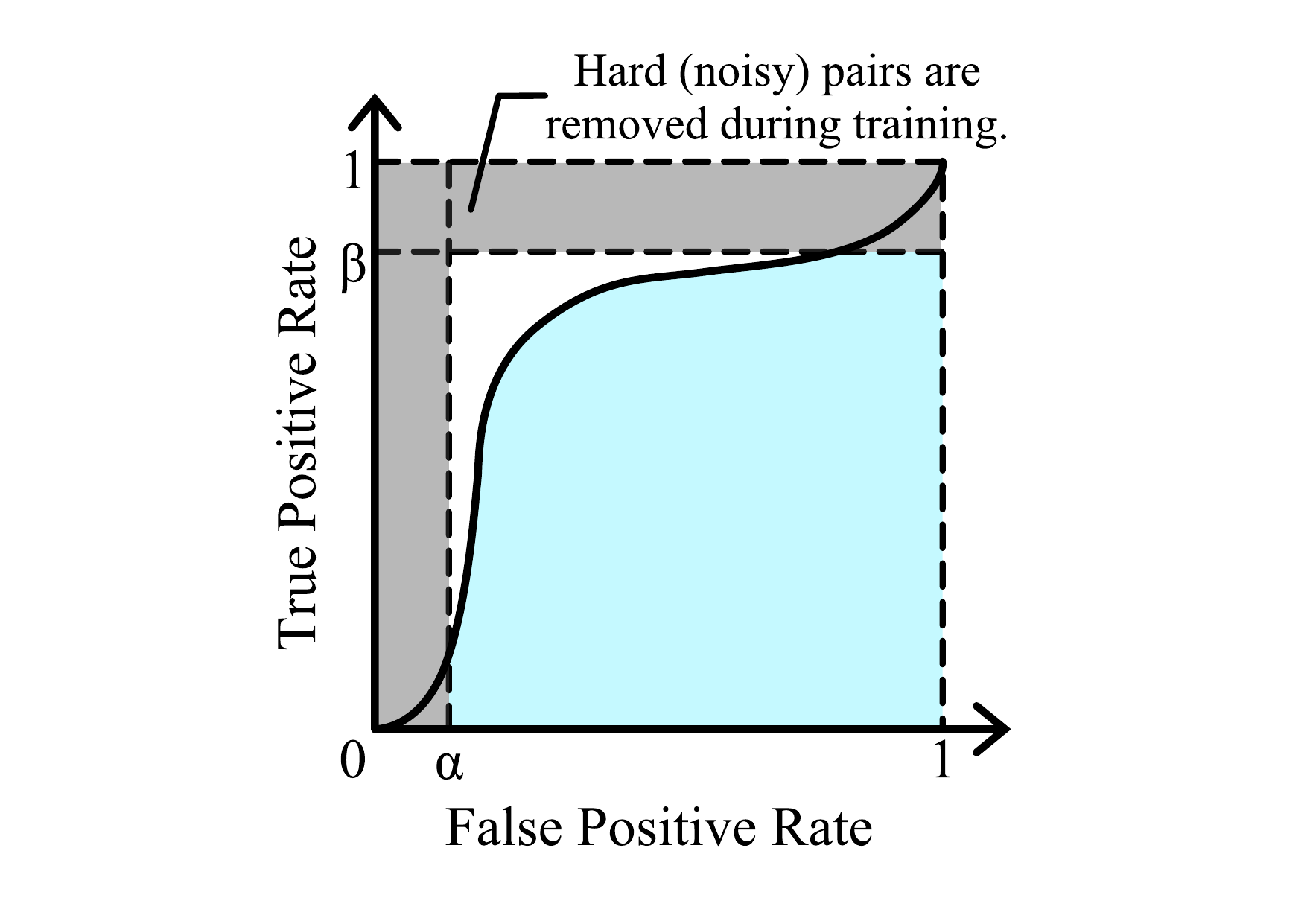}
	}
	\caption{Illustrations of ROC curves (black curves), AUC and Partial AUC variations (blue shades under the ROC curves). OPAUC and TPAUC are alternative performance measures specific to task needs. The proposed rpAUC aims for robust maximization of full AUC under weak supervision.}
	\label{fig:pauc-demo}
\end{figure}

\section{Related Work}\label{sec:related_work}
Weakly supervised learning (WSL) is a significant subfield of machine learning that tackles learning tasks with imperfect supervision. Unlike learning from fully and accurately labeled data, weak supervision can be incomplete, where only a subset of data is given with labels; inexact, where the data is given with only coarse-grind labels; or inaccurate, where the labels are not always true. For instance, semi-supervised learning~\cite{zhubook} and positive-unlabeled learning~\cite{pulsurvey} are common scenarios of learning with incomplete supervision. Multi-instance learning~\cite{milsurvey} and partial label learning~\cite{pll} belong to the inexact supervision category, whereas learning from inaccurate supervision encompasses label noise learning~\cite{Han2021nllsurvey} and crowdsourcing.

Among the topics mentioned, our work specifically relates to certain branches, i.e., unlabeled-unlabeled learning~\cite{uulearning} and noisy label learning (NLL)~\cite{Natarajan2013nll}. 
The proposed unified view shares a similar problem setup of unlabeled-unlabeled learning, but with the different objective of maximizing AUC{}. 
The property of noisy label AUC optimization problem have been preliminarily studied in literature~\cite{Menon2015,Ghosh2017}, which is related partially to the theoretical analysis of our framework.
The proposed robust method for weakly supervised AUC optimization through rpAUC maximization is closely related to the robust training approaches in noisy label learning.
We demonstrate that it has a similar mechanism to techniques such as the small loss trick~\cite{Liu2016,Jiang2018mentornet,Ren2018} and dynamic thresholding~\cite{Yang2021dythre}. 
These techniques remove or lower the weight of instances inducing high risk, and belong to sample selection in the context of noisy label learning~\cite{Song2022nllsurvey}. Other NLL approaches include robust surrogate losses~\cite{Zhang2018gce,Wang2019sce}, robust model architectures~\cite{Xiao_2015_CVPR,goldberger2017adaptlayer,han2018masking}, regularization methods~\cite{mixup,Tanno_2019_CVPR,lukasik2020labelsmooth}, etc. It is noteworthy that a key information, namely noise transition matrix, is usually required by the NLL approaches for mitigate the label noise problem. Since such information is usually unknown in practice, its estimation becomes a essential step for noisy label learning~\cite{R1,R2,R4}. In contrast, the proposed WSAUC does not require such knowledge for noisy label AUC optimization to achieve the consistency with the true risk.

Additionally, semi-supervised learning (SSL) is another major field of WSL, which aims to learn from both labeled and unlabeled data. Traditional SSL approaches include generative models~\cite{semi1994,nigam2000text,vae}, low-density separation-based models~\cite{tsvm,Chapelle2006,Li2010}, graph-based models~\cite{Blum2001,lpa}, and disagreement-based models~\cite{cotraining,tritraining}. With the rise of deep neural networks in recent years, SSL approaches based on consistency regularization~\cite{ladder,mixup,Miyato2019}, entropy minimization~\cite{ent_min}, graph neural networks~\cite{gnn,gcn,ggnn}, and others have demonstrated strong capabilities. 

Positive-unlabeled learning (PUL) is designed to learn from only positive and unlabeled data. Researchers have proposed various methods to address this problem, such as selecting reliable negative samples~\cite{Liu2002pul,Li2003pul}, reweighting the unlabeled samples~\cite{Lee2003pul,Ward2008,Elkan2008}, and more. Recently, approaches based on the unbiased risk estimators of PUL have proven to be effective~\cite{Plessis2014,Kiryo2017,Gong2021,Plessis2015}.

Multi-instance learning (MIL) tackles the challenge of learning from inexact supervision. In MIL, training instances are organized into bags, and labels are only given at the bag level. Instance-level approaches typically assume the bag label depends on the presence of any positive instance~\cite{Dietterich1997,mil,Carbonneau2016,Xiao2017}, while bag-level approaches assume the bag label is defined by multiple concepts or instances~\cite{Gaertner2002,Wei2017,Ilse2018,Wang2019}. In this work, we adopt the assumption of instance-level MIL{}.

Given the challenging nature of the task, only a limited number of studies have attempted to unify various weakly supervised learning scenarios. For instance, SAFEW~\cite{safew} focused on the safety aspect of WSL tasks, while LIISP~\cite{Zhangzy2022liisp} tackled the challenge of learning from incomplete and inaccurate supervision within a single framework. Meanwhile, CEGE~\cite{cege} proposed a general framework for WSL based on centroid estimation. DivideMix~\cite{R3} is a holistic model that address semi-supervised learning and noisy label learning simultaneously. However, these studies mainly concentrate on accuracy-oriented classification problems and cannot be readily applied to the AUC optimization problem. 

AUC is one of the most commonly used performance measure other than classification accuracy~\cite{auc}. It measures how well the model ranks the positive instances before the negative ones without considering the threshold, and therefore, it does not suffer from the inefficiency seen in accuracy when dealing with data imbalance. 
AUC optimization~\cite{cortes2003auc,aucsurvey} is the learning paradigm that aims to maximize the AUC performance of models. This approach makes the models more robust to data imbalance and leads to improved ranking performance.
Much effort has been devoted to AUC optimization problem from multiple aspects. In terms of theoretical properties, \citet{consistency} have studied the consistency of commonly used surrogate losses of AUC, while \citet{aucgeb} have studied the generalization bounds of AUC optimization models. In terms of online and stochastic optimization, \citet{opauc} have proposed an AUC optimization algorithm based on the covariance matrix, while \citet{solam} have converted the AUC optimization problem into a stochastic saddle point problem that can be optimized using stochastic gradient-based methods. \citet{Lei2021} have further proposed a stochastic proximal AUC maximization algorithm that achieves a linear convergence rate. Recently, AUC optimization with deep neural networks has gained increasing attention. \citet{Liu2020} have proposed deep AUC maximization based on pairwise AUC loss, while \citet{Yuan2022} have proposed an end-to-end AUC optimization method that solves a compositional objective. These research efforts have motivated many real-world applications of AUC optimization, including software build outcome prediction~\cite{ijcai18}, medical image classification~\cite{Azizi2021}, and abnormal behavior detection~\cite{Feizi2020}.

In contrast to the extensive research on accuracy-oriented weakly supervised learning, the study on the weakly supervised AUC optimization is rather few, and mostly restricted to specific situations of weakly supervised learning. \citet{puauc} study the semi-supervised AUC optimization based on positive-unlabeled learning. \citet{Xie2018} study unbiased risk estimator of the semi-supervised AUC optimization. \citet{Charoenphakdee2019} study the property of AUC optimization under label noise.
So far, no research has provided a unified approach for solving the AUC optimization problem across various weakly supervised learning scenarios.

\section{Preliminary}\label{sec:preliminary}

In supervised learning, we are provided with a dataset containing labeled data that has been sampled from a certain distribution, i.e.:
\begin{equation}
    \Xcall := {\{(\xv_i, y_i) \}}_{i=1}^{n}
	\simiid p(\xv, y)\,. 
\end{equation}
For convenience, we will refer to the subsets of positive and negative data in \(\Xcall\) as:
\begin{align*}
\Xcalp :=& {\{\xpi \}}_{i=1}^{n_P}
	\simiid
	p_P(\xv):=p(\xv\mid y=+1)\,, \text{\;and} \notag \\
\Xcaln :=& {\{\xnj \}}_{j=1}^{n_N}
	\simiid
	p_N(\xv):=p(\xv\mid y=-1)\,,
\end{align*}
so that \(\Xcall = \Xcalp \cup \Xcaln\).

Let \(f:\Xcal \rightarrow \Rcal\) be a model that is expected to output larger score for positive instances than negative ones. Given a classification threshold \(t\), we define the model's true positive rate \(\tpr_f(t)=\Pr(f(x)\ge t | y = 1)\), and the false positive rate \(\fpr_f(t)=\Pr(f(x)\ge t | y = 0)\). 
When \(t\) varies, the ROC curve (receiver operating characteristic curve) plots the TPR against FPR of the model \(f\), as illustrated in \cref{fig:auc}.
The AUC is defined as the area under the ROC curve:
\begin{equation}
\auc = \int_{-\infty}^{\infty} \tpr_f(t)\, \mathrm{d}\fpr_f(t).
\end{equation}

Since AUC is equivalent to the probability of a randomly drawn positive instance being ranked before a randomly drawn negative instance~\cite{auc,a1,a2}, the AUC of the model \(f\) can be formulated as:
\begin{equation}
\auc = 1-\EP[\EN[\elloi(f(\xp) - f(\xn))]]\,.\label{eq:auc}
\end{equation}
Here \(\elloi(z)=\Ibb[z<0]\).
Without introducing any ambiguity, we will use \(f(\xp, \xn)\) hereinafter to denote \(f(\xp) - f(\xn)\) for clarity.
Maximizing AUC is equivalent to minimizing the following AUC risk. To avoid confusion, we denote the true AUC risk as P-N AUC risk, since it measures the error rate of ranking positive instances before negative instances:
\begin{equation}
\Rpn(f) = \EP\left[\EN[\elloi(f(\xp, \xn))]\right]\,.\label{eq:rpn}
\end{equation}
In practice, we typically solve the following empirical risk minimization (ERM) problem with a finite sample:
\begin{equation}\label{eq:estimator:pn}
\min_f \quad \empRpn(f)=\frac{1}{|\Xcalp||\Xcaln|} \sum_{\xp \in \Xcalp}\sum_{\xn \in \Xcaln}\ell(f(\xp, \xn))\,.
\end{equation}

Recently, researchers have discovered that partial AUC (pAUC) is a useful metric for certain applications.
Partial AUC focuses on specific part of the area under the ROC curve.
There are two variations of pAUC{}: one-way pAUC (OPAUC~\cite{Narasimhan2013opauc}, c.f.\ \cref{fig:opauc}), which limits FPR in a range \([\alpha, \beta]\), and two-way pAUC (TPAUC~\cite{Yang2019tpauc}, c.f.\ \cref{fig:tpauc}), which limits \(\fpr < \alpha\) and \(\tpr > \beta\). In this paper, we focus on the two-way pAUC{}. 
It can be formulated as follows:
\begin{align}
  \mathrm{pAUC} &= 1- \Ebb_{\xp\sim p_P^-(\xv)}[\Ebb_{\xn\sim p_N^+(\xv)}[\elloi(f(\xp, \xn))]]\,,
\end{align}
where \(p_P^-(\xv) = p_P(\xv|f(\xv)\in(-\infty, \tpr^{-1}_f(\beta)])\) and \(p_N^+(\xv) = p_N(\xv|f(\xv)\in[\fpr^{-1}_f(\alpha), \infty))\) are difficult instances, i.e., positive instances with lower scores and negative instances with higher scores.
In \cref{sec:robust}, we develop a variation of TPAUC, namely reversed partial AUC (rpAUC), to achieve robust AUC optimization under weak supervision.

\section{The Unified View of WSAUC}\label{sec:unified}

\begin{table*}
  \caption{Summary of the connections between weakly supervised AUC optimization risks and supervised AUC risk \(\Rpn\).}
  \label{tab:summary}
  \small
  \centering
  \setlength{\tabcolsep}{0pt}
  \renewcommand{\arraystretch}{1.2}
  \begin{tabularx}{\textwidth}{      
    >{\hsize=.18\hsize}>{\hspace{2mm}\raggedright\arraybackslash}X
    >{\hsize=1.17\hsize}>{\centering\arraybackslash}X
    >{\hsize=.25\hsize}>{\centering\arraybackslash}X
    >{\hsize=.18\hsize}>{\centering\arraybackslash}X
    >{\hsize=.28\hsize}>{\centering\arraybackslash}X
    >{\hsize=.32\hsize}>{\centering\arraybackslash}X
    >{\hsize=.22\hsize}>{\centering\arraybackslash}X }
    \toprule
    Section    & Setting                                          & Data sets                   & \(R\)                         & \(a\)                      & \(b\)                                   & Risk def.              \\
    \midrule
    \S~\ref{sec:preliminary}& Supervised AUC Optimization                      & \(\Xcalp, \Xcaln\)                    & \(\Rpn\)                      & \(1\)                      & \(0\)                        & \cref{eq:rpn}          \\
    \S~\ref{subsec:nll}     & Noisy Label AUC Optimization                     & \(\Xcalpt, \Xcalnt\)                  & \(\risk{\tilde P \tilde N}\)  & \(1\!-\!\eta_P\!-\!\eta_N\)& \({(\eta_P\!+\!\eta_N)}/{2}\)& \cref{eq:risk:nll}     \\
    \S~\ref{subsec:pul}     & Positive-Unlabeled AUC Optimization              & \(\Xcalp, \Xcalu\)                    & \(\Rpu\)                      & \(\pi_N\)                  & \({\pi_P}/{2}\)              & \cref{eq:pu_risk}      \\
    \S~\ref{subsec:mil}     & Multi-Instance AUC Optimization                  & \(\Xcalpt, \Xcaln\)                   & \(\risk{\tilde P N}\)         & \(1\!-\!\eta_P\)           & \({\eta_P}/{2}\)             & \cref{eq:mil_risk}     \\
    \S~\ref{subsec:ssl}     & Semi-Supervised AUC Optimization                 & \(\Xcalp, \Xcalu, \Xcaln\)            & \(\risk{PNU}\)                & \(1\)                      & \(0\) (corrected)            & \cref{eq:risk:ssl}     \\
    \S~\ref{subsec:sslnl}   & Semi-Supervised AUC Optimization with Label Noise& \(\Xcalpt, \Xcalu, \Xcalnt\)          & \(\risk{\tilde P \tilde N U}\)& \(1\!-\!\eta_P\!-\!\eta_N\)& \({(\eta_P\!+\!\eta_N)}/{2}\)& \cref{eq:risk:sslnl}   \\
    \midrule
    \S~\ref{subsec:unified} & Unified Risk Formulation                         & \multicolumn{5}{c}{\(R\hspace{1.48em}=\hspace{1.48em} a \hspace{.9em} \Rpn \hspace{.9em}+\hspace{.9em} b\hspace{1.04em}\)}\\
    \bottomrule
  \end{tabularx}
\end{table*}

\subsection{The Unified Formulation}\label{subsec:unified}

The unified view of weakly supervised AUC optimization can be formulated based on the problem of minimizing AUC risk with two contaminated instance sets that have different class proportions. This learning problem can also be referred to as unlabeled-unlabeled learning~\cite{uulearning}, which focus on building classification models with minimal supervision, i.e., two unlabeled instances set with different class priors. Suppose we have two contaminated instance sets \(\Xcala\) and \(\Xcalb\), which can be regarded as samples from the mixture distributions of the positive and negative distributions with different proportions:
\begin{align*}
  \Xcala := {\{\xpi \}}_{i=1}^{n_A}
    \simiid
    &p_A(\xv):=\theta_A p_P(\xv) + (1-\theta_A)p_N(\xv)   \,; \\
  \Xcalb := {\{\xnj \}}_{j=1}^{n_B}
    \simiid
    &p_B(\xv):=\theta_B p_P(\xv) + (1-\theta_B)p_N(\xv)   \,,
\end{align*}
Without loss of generality, here we assume \(\theta_A > \theta_B\).
The AUC risk defined on the distribution \(p_A\) and \(p_B\) is:
\begin{equation}
  \Rab(f) := \Ebb_{\xp\sim p_A(\xv)}\left[\Ebb_{\xn\sim p_B(\xv)}[\elloi(f(\xp, \xn))]\right]\,, \label{eq:uni_emp}
\end{equation}
where \(f(\xp, \xn)\) is the shorthand for \(f(\xp) - f(\xn)\).
Then, the model can be obtained by solving the empirical risk minimization (ERM) problem:
\begin{equation}\label{eq:estimator:cf}
\min_f \quad \empRab(f)=\frac{1}{|\Xcala||\Xcalb|} \sum_{\xp \in \Xcala}\sum_{\xn \in \Xcalb}\ell(f(\xp, \xn))\,.
\end{equation}

The optimization problem is akin to treating both sets as positive and negative instance sets without accounting for their impurity. 
However, it can be shown that the impure AUC risk can be rewritten in a unified formulation which the true risk can be obtained with a linear transformation.

\begin{theorem}[unified formulation]\label{inaccurate}
  The impure AUC risk \(\Rab\) over two contaminated distributions \(p_A\) and \(p_B\)
  can be rewrote in the unified formulation as:
  \begin{equation}
    \Rab = a\Rpn + b \,,\label{eq:linear}\\
  \end{equation}
  where the bias term \(b=\sfrac{(1\!-\!a)}{2}\).
  Based on this formulation, the true risk \(\Rpn \) can be obtained from \(\Rab \) with a linear transformation.
\end{theorem}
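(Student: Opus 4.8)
The plan is to exploit the bilinearity of the double expectation in $\Rab$ together with the mixture structure of $p_A$ and $p_B$. First I would substitute $p_A = \theta_A p_P + (1-\theta_A)p_N$ and $p_B = \theta_B p_P + (1-\theta_B)p_N$ into \cref{eq:uni_emp}. Since the outer and inner expectations are taken over independent draws, the risk is linear in each argument separately, so $\Rab$ expands into a weighted sum of four ``pure'' cross risks:
\begin{equation*}
  \Rab = \theta_A\theta_B\,\Rpp + \theta_A(1-\theta_B)\,\Rpn + (1-\theta_A)\theta_B\,\risk{NP} + (1-\theta_A)(1-\theta_B)\,\Rnn\,,
\end{equation*}
where $\risk{NP}(f) := \EN[\EP[\elloi(f(\xn) - f(\xp))]]$ places the negative draw in the first slot.

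The next step is to evaluate the three cross risks other than $\Rpn$. For $\Rpp$ and $\Rnn$ both instances are drawn i.i.d.\ from the same distribution, so exchanging the two arguments leaves the risk unchanged; using $\elloi(z) + \elloi(-z) = 1$ almost surely, this symmetry forces $2\Rpp = 2\Rnn = 1$, hence $\Rpp = \Rnn = \sfrac{1}{2}$. For $\risk{NP}$ I would apply the same complement identity across the two distributions, $\elloi(f(\xn) - f(\xp)) = 1 - \elloi(f(\xp) - f(\xn))$ almost surely, which integrates to $\risk{NP} = 1 - \Rpn$.

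Substituting these values and collecting the coefficient of $\Rpn$ gives $a = \theta_A(1-\theta_B) - (1-\theta_A)\theta_B = \theta_A - \theta_B$, which is strictly positive by the assumption $\theta_A > \theta_B$, so the affine map from $\Rab$ to $\Rpn$ is invertible. A short computation of the remaining constant terms yields $b = \sfrac{1}{2}(1 - \theta_A + \theta_B) = \sfrac{(1-a)}{2}$, matching the claimed bias; inverting the relation then recovers $\Rpn = (\Rab - b)/a$.

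I expect the only real obstacle to be the careful treatment of ties in the $0/1$ loss $\elloi(z) = \Ibb[z<0]$: both the symmetry argument for $\Rpp, \Rnn$ and the complement identity for $\risk{NP}$ rely on $\Pr[f(\xv) = f(\xv')] = 0$. This holds under the standard assumption that $f$ induces an atomless score distribution (equivalently, that ties are resolved by a half-point convention); once ties are disposed of, the remainder is bookkeeping with the mixture coefficients $\theta_A, \theta_B$.
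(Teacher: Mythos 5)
Your proposal is correct and follows essentially the same route as the paper's proof: expand \(\Rab\) by bilinearity into the four cross risks \(\Rpn,\Rpp,\Rnn,\risk{NP}\), evaluate \(\Rpp=\Rnn=\sfrac{1}{2}\) and \(\risk{NP}=1-\Rpn\), and collect terms to get \(a=\theta_A-\theta_B\) and \(b=\sfrac{(1-a)}{2}\). Your explicit treatment of the tie issue (needing \(\Pr[f(\xv)=f(\xv')]=0\) or a half-point convention for \(\elloi\)) is a detail the paper glosses over, but it does not change the argument.
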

\begin{proof}
  For any $f$, we can rewrite the risk:
  {
  \allowdisplaybreaks
  \begin{align*}
       &\Rab(f)\\
      ={}&\underset{\xp \in \mathcal{X}_{A}}{\Ebb}\left[\underset{\xn \in \mathcal{X}_{B}}{\Ebb}\left[\ell\left(f(\xp, \xn)\right)\right]\right]\\
      ={}&\underset{\xp \in \mathcal{X}_{A}}{\Ebb}\left[(1\!-\!\theta_B)\!\underset{\xn \in \mathcal{X}_{N}}{\Ebb}\!\left[\ell\left(f(\xp, \xn)\right)\right]\!+\! \theta_B\!\underset{\xn \in \mathcal{X}_{P}}{\Ebb}\!\left[\ell\left(f(\xp, \xn)\right)\right]\right]\\
      ={}&\theta_A(1\!-\!\theta_B)
      \underset{\xp \in \mathcal{X}_{P}}{\Ebb}\left[\underset{\xn \in \mathcal{X}_{N}}{\Ebb}\left[\ell\left(f(\xp, \xn)\right)\right]\right]\\
      &\quad +\theta_A\theta_B
      \underset{\xp \in \mathcal{X}_{P}}{\Ebb}\left[\underset{\xn \in \mathcal{X}_{P}}{\Ebb}\left[\ell\left(f(\xp, \xn)\right)\right]\right]\\
      &\quad +(1\!-\!\theta_A)(1\!-\!\theta_B)
      \underset{\xp \in \mathcal{X}_{N}}{\Ebb}\left[\underset{\xn \in \mathcal{X}_{N}}{\Ebb}\left[\ell\left(f(\xp, \xn)\right)\right]\right]\\
      &\quad +(1\!-\!\theta_A)\theta_B
      \underset{\xp \in \mathcal{X}_{N}}{\Ebb}\left[\underset{\xn \in \mathcal{X}_{P}}{\Ebb}\left[\ell\left(f(\xp, \xn)\right)\right]\right]\\
      ={}&\theta_A(1-\theta_B)\Rpn (f)+\theta_A\theta_B \Rpp (f)\\
      &\quad +(1-\theta_A)(1-\theta_B)\Rnn (f)+(1-\theta_A)\theta_B \risk{NP}(f)\\
      ={}&\theta_A(1-\theta_B)\Rpn (f)+(1-\theta_A)\theta_B(1-\Rpn )\\
      &\quad +\frac{1}{2}(2\theta_A\theta_B+1-\theta_A-\theta_B)\\
      ={}&(\theta_A-\theta_B)\Rpn (f)+\frac{1-(\theta_A-\theta_B)}{2}\,.
  \end{align*}
}

  This completes the proof.
\end{proof}
\begin{corollary}[consistency of inaccurate case]
  In the AUC optimization problem described above, suppose \(f^*\) is a minimizer of the impure AUC risk \(\Rab\) over two contaminated distributions \(p_A\) and \(p_B\), i.e., \(f^* = \argmin \Rab\), then \(f^*\)  is also a minimizer of the true AUC risk \(\Rpn \), i.e., \(\Rab\) is consistent with \(\Rpn \).
\end{corollary}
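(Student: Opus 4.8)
The plan is to read off consistency directly from \cref{inaccurate}, without touching the risks again. That theorem already supplies the affine identity $\Rab(f) = a\Rpn(f) + b$ with slope $a = \theta_A - \theta_B$ and intercept $b = (1-a)/2$, both of which are constants that do not depend on the score function $f$. Consistency then reduces to the elementary fact that an affine reparametrization with positive slope preserves minimizers, so no further manipulation of the pairwise expectations is required.

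First I would isolate the two structural facts that drive the argument. The intercept $b$ is independent of $f$, hence irrelevant when comparing values of $\Rab$ across candidate score functions. The slope $a = \theta_A - \theta_B$ is strictly positive, because the unified formulation assumes without loss of generality that $\theta_A > \theta_B$. This sign is the only substantive ingredient: it guarantees that $\Rab$ is a strictly increasing affine function of $\Rpn$, so the ordering of risks is preserved rather than reversed.

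Next I would formalize the $\argmin$ equivalence. For any competing score function $g$, the identity yields $\Rab(g) - \Rab(f^*) = a\bigl(\Rpn(g) - \Rpn(f^*)\bigr)$. Since $f^*$ minimizes $\Rab$, the left-hand side is nonnegative for every $g$; dividing through by $a > 0$ gives $\Rpn(g) \ge \Rpn(f^*)$ for all $g$, so $f^*$ minimizes $\Rpn$ as well. Running the same computation in the reverse direction shows that every minimizer of $\Rpn$ also minimizes $\Rab$, giving $\argmin \Rab = \argmin \Rpn$ as sets, which is exactly the claimed consistency.

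The main obstacle is conceptual rather than computational: the whole conclusion hinges on the strict positivity of $a$. If $\theta_A = \theta_B$ the two contaminated distributions coincide, $a = 0$, and $\Rab$ degenerates to a constant in $f$ that carries no information about $\Rpn$; and if the convention $\theta_A > \theta_B$ were dropped, one would have to track the sign of $a$ and, when $a < 0$, flip the direction of optimization. I would therefore invoke the standing assumption $\theta_A > \theta_B$ explicitly at the step where $a>0$ is used, since this inequality is precisely what licenses transferring minimizers from $\Rab$ to $\Rpn$.
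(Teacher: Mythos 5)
Your proposal is correct and follows essentially the same route as the paper: both invoke the affine identity $\Rab(f) = a\Rpn(f) + \frac{1-a}{2}$ from \cref{inaccurate} and transfer the minimizer by dividing the risk difference by $a > 0$ (which holds since $\theta_A > \theta_B$). Your additional remarks on the degenerate case $\theta_A = \theta_B$ and the reverse inclusion of minimizers are fine refinements but do not change the argument.
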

\begin{proof}
  According to \cref{inaccurate}, for any $f$ we have
  $$
  \Rab(f) = a \Rpn (f)+\frac{1-a}{2}\,,
  $$
  where $a>0$.
  Thus, for any $f$,
  $$
  \Rpn (f^*)-\Rpn (f)=\frac{\Rab(f^*)-\Rab(f)}{a}\leq 0
  $$
  because $a>0$ and $f^*$ is a minimizer of $\Rab$. 
  This proves that $f^*$ is also a minimizer of $\Rpn $.
\end{proof}

In the rest of this section, we will demonstrate how different weakly supervised AUC optimization problems can be connected with this unified formulation.
This provides a simple way for handling AUC optimization problem in almost all common case of weakly supervised AUC optimization tasks by simply adopting any pairwise risk minimization algorithm.

It is noteworthy that when optimizing for accuracy, it is required to know the mixing proportions of the distributions (i.e., \(\theta_A\) and \(\theta_B\)) in order to correct estimation bias and achieve consistency. In AUC optimization, however, we can maintain statistical consistency even without knowledge of the mixing proportions.
Nevertheless, the presence of impurity can still have an impact on model learning with finite data, which is depicted in the coefficient \(a\) in \cref{eq:linear}. Directly solving the optimization problem above may be less robust when there is significant impurity in the instance sets. In \cref{sec:robust}, we address this issue by minimizing the empirical risk of a novel variety of partial AUC{}.

\subsection{Specialization to Common Cases}

\subsubsection{Noisy Label AUC Optimization}\label{subsec:nll}
We begin by discussing AUC optimization under label noise, which is the most common case of inaccurate supervision. In binary classification with label noise, an instance label may be flipped with a certain probability.  Consider the asymmetrical noise for two classes, a positive instance may be mislabeled with a probability of \(\eta_P\), and a negative instance may be mislabeled with a probability of \(\eta_N\). 
Such a noisy label AUC optimization problem can be easily converted to the problem defined in \cref{eq:uni_emp} by setting the noise ratio as the mixture proportion:
\begin{align*}
  \Xcal_{\tilde P} := {\{\xpi \}}_{i=1}^{n_P}
    \simiid
    &p_{\tilde P}(\xv):=(1-\eta_P) p_P(\xv) + \eta_P p_N(\xv)   \,;\\
  \Xcal_{\tilde N} := {\{\xnj \}}_{j=1}^{n_N}
    \simiid
    &p_{\tilde N}(\xv):=\eta_N p_P(\xv) + (1-\eta_N) p_N(\xv)   \,.
\end{align*}
The problem of learning with noisy labels can be seen as a variant of learning from two contaminated sets or unlabeled-unlabeled learning. The distinctions between these two problem formulations are discussed in \citet{uulearning}. Specifically, learning with noisy labels assumes a noisy rate of \(\eta_P+\eta_N < 0.5\), while the class prior remains the same regardless of the presence of noise. Without this assumption, the marginal distribution, 
\(p(\xv)\), may change, which would require addressing the problem under a covariate shift assumption.

By simply substituting the two noisy distributions for \(\Xcala\) and \(\Xcalb\) in \cref{eq:uni_emp}, we obtain the noisy AUC risk:
\begin{equation}\label{eq:risk:nll}
  \risk{\tilde{P}\tilde{N}}(f) := \Ebb_{\xp\sim p_{\tilde P}(\xv)}\left[\Ebb_{\xn\sim p_{\tilde N}(\xv)}[\elloi(f(\xp,\xn))]\right]\,. 
\end{equation}
And by letting \(\theta_A=1-\eta_P\) and \(\theta_B=\eta_N\), its easy to show the following corollary:
\begin{corollary}
  \(\risk{\tilde{P}\tilde{N}}\) is consistent with the AUC risk over the true distribution \(\Rpn \), and
  \begin{gather*}
    \risk{\tilde{P}\tilde{N}} = \tilde{a} \Rpn +  \frac{1-\tilde{a}}{2}\,,
    \\\tilde{a} = 1-\eta_P -\eta_N\,.
  \end{gather*}
\end{corollary}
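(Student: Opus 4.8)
The plan is to obtain this corollary as a direct specialization of \cref{inaccurate} rather than by redoing the four-term expansion from scratch. First I would identify the noisy-label setting with the unified two-contaminated-set formulation by matching the mixture distributions: since $p_{\tilde P} = (1-\eta_P) p_P + \eta_P p_N$ and $p_{\tilde N} = \eta_N p_P + (1-\eta_N) p_N$, I would set $\Xcala = \Xcalpt$ and $\Xcalb = \Xcalnt$, with mixing proportions $\theta_A = 1-\eta_P$ and $\theta_B = \eta_N$. This is exactly the identification recorded in the summary table, so $\risk{\tilde{P}\tilde{N}}$ coincides with $\Rab$ under this choice.

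Next I would verify the standing hypothesis $\theta_A > \theta_B$ of \cref{inaccurate}, which here reads $1-\eta_P > \eta_N$, i.e.\ $\eta_P + \eta_N < 1$. This follows from the label-noise assumption $\eta_P + \eta_N < 0.5$ stated in this subsection, so the theorem applies verbatim. Substituting the two proportions into its conclusion $\Rab = (\theta_A - \theta_B)\Rpn + \frac{1-(\theta_A-\theta_B)}{2}$ and computing $\theta_A - \theta_B = (1-\eta_P) - \eta_N = 1 - \eta_P - \eta_N$, which is precisely $\tilde a$, yields the claimed identity $\risk{\tilde{P}\tilde{N}} = \tilde a \Rpn + \frac{1-\tilde a}{2}$.

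For the consistency claim I would invoke the consistency corollary of the unified formulation, whose sole requirement is that the linear coefficient be strictly positive. Here $\tilde a = 1 - \eta_P - \eta_N$, and the assumption $\eta_P + \eta_N < 0.5$ guarantees $\tilde a > 0.5 > 0$. Hence any minimizer of $\risk{\tilde{P}\tilde{N}}$ is also a minimizer of $\Rpn$, which is the asserted consistency.

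There is essentially no hard step: the whole argument is a substitution into an already-proven theorem together with a sign check on the coefficient. The only point worth flagging explicitly is that positivity of $\tilde a$—and hence both the validity of the $\theta_A > \theta_B$ ordering and the consistency conclusion—rests on the label-noise assumption $\eta_P + \eta_N < 0.5$; without it the coefficient could vanish or the ordering of the two contaminated sets could reverse, so the transformation would no longer be invertible in the required direction.
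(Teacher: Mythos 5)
Your proposal is correct and matches the paper's own route exactly: the paper derives this corollary precisely by setting \(\theta_A = 1-\eta_P\) and \(\theta_B = \eta_N\) and substituting into \cref{inaccurate}, so that \(\tilde a = \theta_A - \theta_B = 1-\eta_P-\eta_N\), with consistency following from the consistency corollary since \(\tilde a > 0\). Your additional explicit check that the noise assumption \(\eta_P + \eta_N < 0.5\) guarantees both \(\theta_A > \theta_B\) and \(\tilde a > 0\) is a correct filling-in of details the paper leaves implicit.
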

This indicates that the noisy AUC risk is still consistent with the clean AUC risk. Empirically, we can solve the following ERM problem:
\begin{equation}\label{eq:estimator:nll}
  \min_f \quad \emprisk{\tilde{P}\tilde{N}}(f) = \frac{1}{|\Xcalpt||\Xcalnt|}\sum_{\xp\in \Xcalpt}\sum_{\xn\in \Xcalnt}\ell(f(\xp,\xn))\,. 
\end{equation}

\subsubsection{Positive-Unlabeled AUC Optimization}\label{subsec:pul}
We next discuss positive-unlabeled AUC optimization, in which only supervision of one class is available.
Suppose the underlying class prior probabilities of positive and negative class are \(\pi_P\) and \(\pi_N\), in this case, the instances with positive labels can be regarded as a pure positive set \(\Xcalp\) with \(\theta_A=1\), and the unlabeled data consists an contaminated set \(\Xcalu\) with \(\theta_B=\pi_P\). Then the two sets can be formulated as:
\begin{align*}
  \Xcalp := {\{\xpi \}}_{i=1}^{n_P}
    \simiid
    &p_P(\xv)   \,, \text{\;and} \\
  \Xcalu := {\{\xnj \}}_{j=1}^{n_U}
    \simiid
    &p_U(\xv):=\pi_P p_P(\xv) + \pi_N p_N(\xv)   \,.
\end{align*}
Then we define P-U AUC risk on the positive and unlabeled set:
\begin{equation}
  \Rpu (f) := \Ebb_{\xp\sim p_P(\xv)}\left[\Ebb_{\xn\sim p_U(\xv)}[\elloi(f(\xp,\xn))]\right]\,. \label{eq:pu_risk}
\end{equation}
And it is easy to show that:
\begin{corollary}
  The P-U AUC risk \(\Rpu \) is consistent with the true AUC risk \(\Rpn \), and
\begin{align*}
  \Rpu  &= \pi_N \Rpn +  \frac{\pi_P}{2}\,.
\end{align*}
\end{corollary}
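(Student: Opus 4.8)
The plan is to recognize this corollary as a direct specialization of \cref{inaccurate} rather than as a fresh computation. First I would match the positive-unlabeled setup to the two-contaminated-set template of the theorem: the pure positive set $\Xcalp$ plays the role of $\Xcala$ with mixing proportion $\theta_A = 1$, while the unlabeled set $\Xcalu$ plays the role of $\Xcalb$ with mixing proportion $\theta_B = \pi_P$, since $p_U = \pi_P p_P + \pi_N p_N$ is exactly the mixture form $\theta_B p_P + (1-\theta_B) p_N$ with $1 - \theta_B = \pi_N$. This identification requires checking the theorem's standing assumption $\theta_A > \theta_B$, which here reads $1 > \pi_P$; it holds whenever the unlabeled pool carries any negative mass, i.e. $\pi_N > 0$, which is the only nondegenerate case of interest.

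With the substitution in hand, I would invoke \cref{inaccurate} to obtain $\Rpu = a \Rpn + \tfrac{1-a}{2}$ with $a = \theta_A - \theta_B = 1 - \pi_P$. Using $\pi_P + \pi_N = 1$ gives $a = \pi_N$ and $\tfrac{1-a}{2} = \tfrac{\pi_P}{2}$, which yields the claimed identity $\Rpu = \pi_N \Rpn + \tfrac{\pi_P}{2}$. The consistency claim then follows immediately from the consistency corollary of the inaccurate case: since $a = \pi_N > 0$, any minimizer of $\Rpu$ is also a minimizer of $\Rpn$.

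Because the corollary is a pure specialization, I do not anticipate a genuine obstacle in the derivation itself. The only point demanding care is the bookkeeping of which prior ($\pi_P$ versus $\pi_N$) maps onto $\theta_B$ and onto the final coefficient $a$; a sign or index slip here would erroneously swap $\pi_N$ and $\pi_P$ in the conclusion. I would therefore confirm the coefficient by a sanity limit: as $\pi_P \to 0$ the unlabeled set becomes purely negative, $a \to 1$, and $\Rpu \to \Rpn$, recovering the clean supervised AUC risk exactly as expected. This check pins down the mapping and guards against the one plausible error.
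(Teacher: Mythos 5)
Your proposal is correct and takes essentially the same route as the paper: the paper likewise identifies the labeled positives as $\Xcala$ with $\theta_A=1$ and the unlabeled pool as $\Xcalb$ with $\theta_B=\pi_P$, then obtains the corollary as an immediate specialization of \cref{inaccurate}, giving $a=\theta_A-\theta_B=\pi_N$ and $b=\sfrac{(1-a)}{2}=\sfrac{\pi_P}{2}$, with consistency following from the consistency corollary since $a>0$. Your extra care about the nondegeneracy condition $\pi_N>0$ and the sanity limit $\pi_P\to 0$ goes slightly beyond what the paper records but changes nothing in substance.
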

Such a formulation connects the positive-unlabeled AUC optimization problem with the noisy label case by treating one set as having the smallest noise rate and the other set as having the largest noise ratio (as noisy as the marginal distribution \(p(\xv)\)). It shows that the AUC optimization with one-sided supervision can be solved by treating the unlabeled data as negative data.
Practically, we solve the positive-unlabeled AUC optimization problem by minimizing the following empirical risk:
\begin{equation}\label{eq:estimator:pu}
  \min_f \quad \empRpu(f)=\frac{1}{|\Xcalp||\Xcalu|} \sum_{\xp \in \Xcalp}\sum_{\xn \in \Xcalu}\ell(f(\xp, \xn))\,.
\end{equation}

\subsubsection{Multi-Instance AUC Optimization}\label{subsec:mil}
We demonstrate how our unified view can be applied to multi-instance AUC optimization, a learning problem with inexact, bag-level supervision. 
In this case, we have a set of positive bags \(S_P = \{B^+_i\}_{i=1}^{N_P}\) and a set of negative bags \(S_N = \{B^-_i\}_{j=1}^{N_N}\). Each positive bag \(B^+_i\) contains at least one positive instance, while the negative bag \(B^-_j\) contains none of positive instances. Following the formulation of the prior research~\cite{cege}, we regard the instance occurred in the negative bags as they are sampled from the pure negative distribution \(p_N\), while the instances occurred in the positive bags as from some mixture distribution \(p_{\tilde P}\) of the positive and negative distributions with some unknown proportion \(1-\eta_P\) and \(\eta_P\). Then, the AUC risk over the positive bag instances and negative bag instances can be defined as:
\begin{equation}
  \risk{\tilde{P}N}(f) := \Ebb_{\xp\sim p_{\tilde P}(\xv)}\left[\Ebb_{\xn\sim p_N(\xv)}[\elloi(f(\xp,\xn))]\right]\,. \label{eq:mil_risk}
\end{equation}
\begin{corollary}
  The multi-instance AUC risk \(\risk{\tilde{P}N}\) is consistent with the true AUC risk \(\Rpn \), and
\begin{align*}
  \risk{\tilde{P}N} &= (1-\eta_P) \Rpn +  \frac{\eta_P}{2}\,.
\end{align*}
\end{corollary}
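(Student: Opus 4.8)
The plan is to recognize that the multi-instance setting is already an instance of the unified two-contaminated-set formulation, so the result follows directly from \cref{inaccurate} once the mixing proportions are identified. First I would match the two data sets to the template of the unified view: the positive-bag instances, drawn from $p_{\tilde P} = (1-\eta_P)p_P + \eta_P p_N$, play the role of $\Xcala$, while the negative-bag instances, drawn from the pure negative distribution $p_N$, play the role of $\Xcalb$. Reading off the coefficients against $p_A = \theta_A p_P + (1-\theta_A)p_N$ and $p_B = \theta_B p_P + (1-\theta_B)p_N$ gives $\theta_A = 1-\eta_P$ and $\theta_B = 0$.

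With these assignments, $\risk{\tilde{P}N}$ is exactly $\Rab$ for the present $p_A$ and $p_B$, so \cref{inaccurate} applies verbatim and yields $\risk{\tilde{P}N} = (\theta_A - \theta_B)\Rpn + \tfrac{1-(\theta_A-\theta_B)}{2}$. Substituting $\theta_A - \theta_B = 1-\eta_P$ immediately recovers the claimed identity $\risk{\tilde{P}N} = (1-\eta_P)\Rpn + \tfrac{\eta_P}{2}$, so the affine relation needs no independent computation once the proportions are fixed.

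For consistency I would invoke the preceding consistency corollary (the inaccurate case), which requires only that the slope $a = \theta_A - \theta_B = 1-\eta_P$ be strictly positive. Under the standard instance-level MIL assumption $\eta_P < 1$ (a positive bag contains at least one positive instance, so $p_{\tilde P}$ is a non-degenerate mixture), we have $a>0$; the map $\Rpn \mapsto \risk{\tilde{P}N}$ is then strictly increasing, and hence every minimizer of $\risk{\tilde{P}N}$ is also a minimizer of $\Rpn$.

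The only point deserving care is verifying the modeling hypotheses rather than any substantive algebra: I must confirm that the negative bags genuinely contribute a \emph{pure} negative distribution, so that $\theta_B = 0$ holds exactly rather than approximately, and that the ordering $\theta_A > \theta_B$ assumed without loss of generality in \cref{inaccurate} indeed holds here, i.e.\ $1-\eta_P > 0$. Once these conditions are in place, the corollary is a one-line specialization of the unified formulation with no genuine obstacle.
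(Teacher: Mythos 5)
Your proposal is correct and follows exactly the paper's intended route: the corollary is a direct specialization of \cref{inaccurate} with $\theta_A = 1-\eta_P$ and $\theta_B = 0$, giving slope $a = 1-\eta_P$ and bias $b = (1-a)/2 = \eta_P/2$, with consistency inherited from the preceding corollary since $a>0$. Your additional care in checking that negative bags yield a pure negative distribution and that $\eta_P < 1$ is a sensible (and compatible) verification of the modeling hypotheses the paper adopts implicitly.
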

The above corollary provides us a way to handle the AUC optimization in multi-instance learning. 
Practically, we need to first construct the instance sets \(\Xcalpt\) and \(\Xcaln\) by taking the union of the instance bags:
\begin{equation}
  \Xcalpt = \bigcup_{i=1}^{N_P} B^+_i \,, \quad \Xcaln = \bigcup_{j=1}^{N_N} B^-_j\,,
\end{equation}
and solve the following ERM problem:
\begin{equation}\label{eq:estimator:mil}
  \min_f \quad \emprisk{\tilde P N}(f)=\frac{1}{|\Xcalpt||\Xcaln|} \sum_{\xp \in \Xcalpt}\sum_{\xn \in \Xcaln}\ell(f(\xp, \xn))\,.
\end{equation}
This approach produces scores on instance level. To obtain bag-level score, one can simply calculate the maximum of the instance scores in the bags.

\subsubsection{Semi-Supervised AUC Optimization}\label{subsec:ssl}
Building a model with limited labeled data and a relatively large amount of unlabeled data is a common scenario in semi-supervised learning. In contrast to the previous cases, the data can be divided into three sets based on their labels:
\begin{alignat*}{4}
  &\Xcalp := &&{\{\xpi \}}_{i=1}^{n_P}&&\simiid &&p_P(\xv)   \,,  \\
  &\Xcaln := &&{\{\xnj \}}_{j=1}^{n_N}&&\simiid &&p_N(\xv)   \,, \text{\;and} \\
  &\Xcalu := &&{\{\xuk \}}_{k=1}^{n_U}&&\simiid &&p_U(\xv):=\pi_P p_P(\xv) + \pi_N p_N(\xv)   \,.
\end{alignat*}
Symmetry to the P-U AUC risk (\cref{eq:pu_risk}), we can define U-N AUC risk over unlabeled and negative data:
\begin{equation}
  \Run (f) := \Ebb_{\xp\sim p_U(\xv)}\left[\Ebb_{\xn\sim p_N(\xv)}[\elloi(f(\xp,\xn))]\right]\,. \label{eq:un_risk}
\end{equation}

By combining each two the three data sets, we have three risk terms: \(\Rpn\), \(\Rpu\), and \(\Run\). Minimizing each risk term corresponds to a subproblem in the form of \cref{eq:estimator:cf}. However, the risk terms calculated with unlabeled data, i.e., \(\Rpu\) and \(\Run\), are biased.
Next, we show that, with a proper combination of all the three terms, the unbiasedness can be maintained while the unlabeled data is fully utilized in the learning process.

Firstly, it can be shown that that by combining the P-U AUC risk and U-N AUC risk, the summation risk yields an unbiased risk estimator even if the class prior probabilities are unknown~\cite{Xie2018}. 
\begin{theorem}
  The sum of \(\Rpu \) and \(\Run \) is consistent with the true AUC risk \(\Rpn \), and the bias is always \(\frac{1}{2}\).
\begin{align}
  \Rpu  + \Run  - \frac{1}{2} &= \Rpn \,. \label{eq:pnnu}
\end{align}
\end{theorem}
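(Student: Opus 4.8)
The plan is to reduce both weakly supervised risks to the unified template of \cref{inaccurate} and then simply add the two pieces, exploiting the fact that the class priors sum to one. Each of $\Rpu$ and $\Run$ is an AUC risk taken over one pure set and one contaminated set, so each fits the form $\Rab = a\Rpn + \frac{1-a}{2}$ with coefficient $a = \theta_A - \theta_B$ determined by the relevant mixing proportions. The whole argument is therefore two applications of the already-established formulation plus a one-line addition.

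First I would handle $\Rpu$, which is exactly the P-U corollary: its outer (positive-side) distribution is the pure $p_P$, so $\theta_A = 1$, and its inner (negative-side) distribution is $p_U = \pi_P p_P + \pi_N p_N$, so $\theta_B = \pi_P$; \cref{inaccurate} then gives $\Rpu = \pi_N \Rpn + \frac{\pi_P}{2}$. Next I would treat $\Run$ symmetrically. Here the roles swap: the outer distribution is the mixture $p_U$, so $\theta_A = \pi_P$, and the inner distribution is the pure $p_N$, so $\theta_B = 0$. Applying \cref{inaccurate} with these proportions yields the U-N analogue $\Run = \pi_P \Rpn + \frac{\pi_N}{2}$. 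This U-N decomposition is the only piece not written out earlier in the excerpt, so it is worth stating explicitly, but because $p_N$ is pure and $p_U$ is precisely the contaminated set already used in the P-U case, it is an immediate instance of the unified theorem rather than a genuinely new computation.

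Finally I would add the two expressions and collect terms, obtaining $\Rpu + \Run = (\pi_N + \pi_P)\Rpn + \frac{\pi_P + \pi_N}{2}$; substituting $\pi_P + \pi_N = 1$ collapses this to $\Rpn + \frac{1}{2}$, and subtracting $\frac{1}{2}$ delivers \cref{eq:pnnu}. I do not expect a real obstacle here; the one thing to verify carefully is that the coefficients $a$ for the two risks come out as $\pi_N$ and $\pi_P$, so that they sum to exactly one and the residual bias becomes the clean constant $\frac{1}{2}$ independent of the unknown priors. (An alternative, slightly longer route would expand both risks directly through $p_U = \pi_P p_P + \pi_N p_N$ and invoke the $\Rpp$, $\Rnn$, $\risk{NP}$ identities as in the proof of \cref{inaccurate}, but routing through the unified formulation avoids repeating that bookkeeping.)
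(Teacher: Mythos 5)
Your proposal is correct, but it takes a different route from the paper. The paper proves this identity from scratch: it expands $p_U = \pi_P p_P + \pi_N p_N$ inside both expectations, collects the four resulting terms into $(\pi_P+\pi_N)\Rpn + \pi_P \Rpp + \pi_N \Rnn$, and then uses $\Rpp = \Rnn = \tfrac{1}{2}$ to arrive at $\Rpn + \tfrac{1}{2}$. You instead instantiate the unified formulation (\cref{inaccurate}) twice --- with $(\theta_A,\theta_B)=(1,\pi_P)$ for $\Rpu$, giving $\Rpu = \pi_N\Rpn + \tfrac{\pi_P}{2}$, and with $(\theta_A,\theta_B)=(\pi_P,0)$ for $\Run$, giving $\Run = \pi_P\Rpn + \tfrac{\pi_N}{2}$ --- and add, using $\pi_P+\pi_N=1$. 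Both arguments rest on the same underlying identities ($\Rpp=\Rnn=\tfrac{1}{2}$ and $\risk{NP}=1-\Rpn$); in your version they are simply absorbed into the proof of \cref{inaccurate} rather than re-derived. What your route buys is modularity and a clear explanation of \emph{why} the bias is prior-independent: the two attenuation coefficients $\pi_N$ and $\pi_P$ are complementary by construction, so they must sum to one. What the paper's direct expansion buys is self-containedness, and it sidesteps the (minor) need to check that \cref{inaccurate} applies in the degenerate-mixture cases, i.e., that $\theta_A > \theta_B$ holds in both instantiations, which requires $\pi_P \in (0,1)$ --- a condition you should state explicitly if you keep your formulation, though it is harmless since the semi-supervised setting presumes both classes are present.
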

\begin{proof}
  For any $f$, we have
      $$
      \begin{aligned}
          &\Rpu (f) + \Run (f) \\
          =& \underset{\xp \in \mathcal{X}_{P}}{\Ebb}\left[\underset{\xn \in \mathcal{X}_{U}}{\Ebb}\left[\ell\left(f(\xp, \xn)\right)\right]\right]+ \underset{\xp \in \mathcal{X}_{U}}{\Ebb}\left[\underset{\xn \in \mathcal{X}_{N}}{\Ebb}\left[\ell\left(f(\xp, \xn)\right)\right]\right]\\
          =&  \underset{\xp \in \mathcal{X}_{P}}{\Ebb}\left[\pi_N \underset{\xn \in \mathcal{X}_{N}}{\Ebb}\left[\ell\left(f(\xp, \xn)\right)\right] + \pi_P\underset{\xn \in \mathcal{X}_{P}}{\Ebb}\left[\ell\left(f(\xp, \xn)\right)\right]\right]\\
           &+  \underset{\xn \in \mathcal{X}_{N}}{\Ebb}\left[\pi_P \underset{\xp \in \mathcal{X}_{P}}{\Ebb}\left[\ell\left(f(\xp, \xn)\right)\right] + \pi_N\underset{\xp \in \mathcal{X}_{N}}{\Ebb}\left[\ell\left(f(\xp, \xn)\right)\right]\right]\\
          =& (\pi_P + \pi_N) \Rpn (f) + \pi_P \Rpp (f) + \pi_N \Rnn (f)\\
          =& \Rpn (f) + \frac{1}{2}\,.
      \end{aligned}
      $$
      Thus we obtain the theorem.
\end{proof}
This also indicates that in the semi-supervised scenario, we can achieve unbiased AUC risk estimation by subtracting a \(\frac{1}{2}\) without knowing the class prior.

Secondly, to fully exploit the data for reducing the estimation variance, instead of directly minimizing \cref{eq:pnnu} through ERM, we define the following risk:
\begin{equation} \label{eq:risk:ssl}
  \Rpnu = \gamma \Rpn + (1-\gamma)(\Rpu + \Run - \frac{1}{2})\,,
\end{equation} 
where \(\gamma\) is the weighting coefficient. 

To calculate the empirical risk \(\empRpnu\), we need to sum up the pairwise losses over three data set pairs: \(\Xcalp \times \Xcaln\), \(\Xcalp \times \Xcalu\), and \(\Xcalu \times \Xcaln\). The bias induced by \((\Rpu + \Run)\) is constantly \(\sfrac{1}{2}\), which can be compensated by subtracting it from the empirical risk.
By doing so, the empirical risk \(\empRpnu\) becomes an unbiased risk estimator of the true AUC risk. Practically, whether to compensate the bias does not affect the training of the model, so we can simply ignore it.

\subsubsection{Semi-Supervised AUC Optimization with Label Noise}\label{subsec:sslnl}
We further consider the scenario where the available supervision is inaccurate and incomplete. Such a situation typically involves learning from a relatively small number of inaccurately labeled instances and a set of unlabeled instances. Let \(\eta_P\) and \(\eta_N\) denote the probabilities of a positive and a negative label being incorrect, respectively, and let \(\pi_P\) and \(\pi_N\) denote the class prior probabilities of the true distribution, with \(\pi_P + \pi_N = 1\).
It is noteworthy that we do not require knowledge of these values. The instances can then be categorized into three sets: a noisy positive set \(\Xcalpt \sim p_{\tilde P}\), a noisy negative set \(\Xcalnt \sim p_{\tilde N}\), and an unlabeled set \(\Xcalu \sim p_U\). 

Similar to the former case, by combining the noisy version of the P-U AUC risk and U-N AUC risk, the biased risk yields the same attenuation coefficient as in the noisy case.
\begin{corollary}\label{II}
The sum of \(\risk{\tilde{P}U}\) and \(\risk{U\tilde{N}}\) is consistent with the true AUC risk \(\Rpn \).
\begin{gather*}
  {R}_{\tilde{P}U} + {R}_{U\tilde{N}} - \frac{1}{2} = \tilde{a}  \Rpn  + \frac{1-\tilde{a} }{2}\,,
  \\\tilde{a}  = 1-\eta_P -\eta_N \,.
\end{gather*}
\end{corollary}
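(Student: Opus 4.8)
The plan is to reuse the unified formulation of \cref{inaccurate} twice, once for each of the two pairwise risks, and then add the results. Both $\risk{\tilde P U}$ and $\risk{U \tilde N}$ are special cases of the generic contaminated risk $\Rab$: the first pairs $p_{\tilde P}$ (mixing proportion $1-\eta_P$) against $p_U$ (mixing proportion $\pi_P$), and the second pairs $p_U$ against $p_{\tilde N}$ (mixing proportion $\eta_N$). So I would first read off the appropriate $(\theta_A,\theta_B)$ for each risk and apply the linear identity $\Rab = (\theta_A-\theta_B)\Rpn + \frac{1-(\theta_A-\theta_B)}{2}$ established in \cref{inaccurate}.

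Concretely, the first application (with $\theta_A = 1-\eta_P$, $\theta_B = \pi_P$) produces the coefficient $1-\eta_P-\pi_P$ on $\Rpn$, and the second (with $\theta_A = \pi_P$, $\theta_B = \eta_N$) produces the coefficient $\pi_P-\eta_N$. Adding the two expressions, the key observation is that the unknown prior $\pi_P$ cancels between the coefficients, leaving $(1-\eta_P-\pi_P)+(\pi_P-\eta_N) = 1-\eta_P-\eta_N = \tilde a$. Summing the two constant terms $\frac{\eta_P+\pi_P}{2}$ and $\frac{1-\pi_P+\eta_N}{2}$ gives $\frac{1+\eta_P+\eta_N}{2}$, so subtracting the $\frac12$ recovers precisely the claimed bias $\frac{\eta_P+\eta_N}{2}=\frac{1-\tilde a}{2}$. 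Consistency with $\Rpn$ then follows exactly as in the earlier corollaries, since under the standard noise assumption $\eta_P+\eta_N<\frac12$ the combined coefficient $\tilde a$ is strictly positive.

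The one point that requires care is the ordering hypothesis $\theta_A>\theta_B$ attached to \cref{inaccurate}. Neither individual pairing is guaranteed to satisfy it: whether $1-\eta_P>\pi_P$ or $\pi_P>\eta_N$ holds depends on the unknown prior, and indeed the individual coefficients $1-\eta_P-\pi_P$ and $\pi_P-\eta_N$ may be negative. However, the identity derived inside the proof of \cref{inaccurate} is a pure algebraic rewriting of $\Rab$ into its $\Rpn$, $\Rpp$, $\Rnn$, and $\risk{NP}$ components, and it holds verbatim regardless of the sign of $\theta_A-\theta_B$; the ordering assumption is invoked only to guarantee positivity of the single-risk coefficient, i.e.\ for the consistency conclusion, not for the identity itself. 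I would therefore either invoke the identity directly while noting this, or, to stay fully self-contained, expand both risks into their four cross-class components exactly as in the proof of the semi-supervised theorem and collect terms. The main obstacle is thus conceptual rather than computational: recognizing that the dependence on the unknown class prior $\pi_P$ cancels in the sum, which is exactly what makes the resulting estimator usable without any knowledge of $\pi_P$.
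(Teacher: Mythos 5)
Your proof is correct, and it reaches the identity by a somewhat different route than the paper does. The paper offers no explicit proof of \cref{II}: it is presented as following ``similar to the former case,'' meaning the direct-expansion argument used for the semi-supervised theorem, in which each mixture risk is expanded into its four cross-class components and one uses $\Rpp=\Rnn=\tfrac12$ and $\risk{NP}=1-\Rpn$ before collecting terms. You instead invoke \cref{inaccurate} twice, with $(\theta_A,\theta_B)=(1-\eta_P,\pi_P)$ and $(\theta_A,\theta_B)=(\pi_P,\eta_N)$, and add; the underlying algebra is identical, so the two arguments are equivalent in content, but yours is more modular and makes the cancellation of the unknown prior $\pi_P$ the visible centerpiece. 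Your route also surfaces, and correctly resolves, a point the paper glosses over: \cref{inaccurate} is stated under the ordering $\theta_A>\theta_B$, which neither individual pairing is guaranteed to satisfy, but the identity $\Rab=(\theta_A-\theta_B)\Rpn+\frac{1-(\theta_A-\theta_B)}{2}$ is a sign-free algebraic fact---the ordering is needed only so that the coefficient of a \emph{single} contaminated risk is positive (i.e., for consistency), not for the linear relation itself. With that observation, the two applications are legitimate even when $1-\eta_P-\pi_P$ or $\pi_P-\eta_N$ is negative, and positivity of the summed coefficient $\tilde a=1-\eta_P-\eta_N$ (under the paper's standing assumption $\eta_P+\eta_N<\tfrac12$) then yields consistency exactly as you say.
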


To reduce the variance of the risk estimator, we define the following risk:
\begin{equation}\label{eq:risk:sslnl}
  \risk{\tilde{P}\tilde{N}U} = \gamma \risk{\tilde{P}\tilde{N}} + (1-\gamma)(\risk{\tilde{P}U} + \risk{U\tilde{N}} - \frac{1}{2})\,.
\end{equation}
And the ERM problem can be solved like the previous semi-supervised case.

\subsubsection{Summary}
In this section, we show that various scenarios of weakly supervised AUC optimization problems can be addressed by minimizing the sum or weighted average of one or more AUC risk terms over instance set pairs. This type of optimization problem can be easily solved by incorporating additional instance pairs into a standard pairwise AUC optimization algorithm. The settings discussed are summarized in \cref{tab:summary}. In \cref{sec:theory} we introduce the theoretical analysis of the framework. In \cref{sec:robust}, we further propose a robust learning solution for the problems.

\section{Theoretical Analyses}\label{sec:theory}
In this section, we theoretically analyze the proposed risk functions, 
which consistent with the AUC risk over the true distribution.
Briefly, we (1) prove the excess risk bounds for the general case and inaccurate and incomplete case, which can be easily applied to all of the WSL scenarios discussed above, and
(2) discuss the variance reduction of the incomplete supervised learning scenario, showing that by introducing the unlabeled data, we can achieve better risk estimation with lower variance.

Here, we consider a kernel $K$ over $\mathcal{X}^2$, a strictly positive real number $C_w$. 
Let $ \mathcal{F}_{K} $ be a class of functions:
$$
\mathcal{F}_{K} = \{f_w:\mathcal{X}\to R,f_w(x)=K(w,x)|\|w\|_K\leq  C_w \}\,,
$$
where $\|x\|_K=\sqrt{K(x,x)}$.
We also assume that the surrogate loss $\ell$ is $L$-Lipschitz continuous, bounded by a strictly positive real number $C_{\ell}$,
and satisfies inequality $\ell \geq \ell_{01}$. For example, the squared loss and exponential loss satisfy these condition.

\subsection{Excess Risk}
In this part, we prove the excess risk bounds when we minimize the proposed risk functions.

Denote by $\hat f^*_{AB}$ the minimizer of empirical risk $\empRab(f)$, 
we introduce the following excess risk bound, showing that the risk of $\hat f^*_{AB}$
converges to risk of the optimal decision function in the function family $\mathcal{F}_{K}$.

\begin{theorem}[Excess Risk of General Case]\label{er1}
    Assume that $\hat f^*_{AB} \in \mathcal{F}_{K}$ is the minimizer of empirical risk $\empRab(f)$,
    $f^*_{PN} \in \mathcal{F}_{K}$ is the minimizer of true risk $\Rpn (f)$.
    For any $\delta>0$, with the probability at least $1-\delta$, we have
    $$
    \Rpn (\hat  f^*_{AB})-\Rpn (f^*_{PN}) \leq \frac{h(\delta)}{a}\sqrt{\frac{n_A+n_B}{n_A n_B}}\,,
    $$
    where $h(\delta)=8\sqrt{2}C_\ell C_w C_x+5\sqrt{2\ln{(2/\delta)}}$, and $n_A, n_B$ is the size of sampled contaminated instance set.
\end{theorem}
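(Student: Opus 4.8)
The plan is to prove the bound in three moves: reduce the excess $\Rpn$ risk to an excess $\Rab$ risk via the linear relation of \cref{inaccurate}, bound that excess risk by a uniform deviation, and finally control the deviation with a concentration-plus-Rademacher argument adapted to the pairwise structure of $\empRab$.

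First I would exploit \cref{inaccurate}. Since $\Rab = a\Rpn + b$ with $a=\theta_A-\theta_B>0$ and $b$ independent of $f$, the two risks are minimized by exactly the same functions, so $f^*_{PN}$ also minimizes $\Rab$ over $\mathcal{F}_K$. Subtracting and dividing by $a$ then gives the exact identity
$$
\Rpn(\hat f^*_{AB}) - \Rpn(f^*_{PN}) = \frac{1}{a}\left[\Rab(\hat f^*_{AB}) - \Rab(f^*_{PN})\right],
$$
which is precisely where the factor $1/a$ in the statement originates. It therefore suffices to bound the excess $\Rab$ risk of $\hat f^*_{AB}$ by $h(\delta)\sqrt{(n_A+n_B)/(n_A n_B)}$.

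Next I would apply the standard ERM decomposition, writing the excess $\Rab$ risk as the telescoping sum
$$
[\Rab(\hat f^*_{AB}) - \empRab(\hat f^*_{AB})] + [\empRab(\hat f^*_{AB}) - \empRab(f^*_{PN})] + [\empRab(f^*_{PN}) - \Rab(f^*_{PN})].
$$
The middle bracket is nonpositive because $\hat f^*_{AB}$ minimizes $\empRab$, and each remaining bracket is at most $\sup_{f\in\mathcal{F}_K}|\Rab(f)-\empRab(f)|$. Hence the excess $\Rab$ risk is bounded by $2\sup_{f\in\mathcal{F}_K}|\Rab(f)-\empRab(f)|$, and the whole problem reduces to controlling this uniform deviation.

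The hard part will be bounding the uniform deviation, because $\empRab$ is a two-sample U-statistic, the normalized double sum $\frac{1}{n_A n_B}\sum_{\xp\in\Xcala}\sum_{\xn\in\Xcalb}\ell(f(\xp,\xn))$ whose summands are not independent, so textbook i.i.d. concentration and symmetrization do not apply directly. I would proceed in two stages. For concentration around the mean I would use McDiarmid's bounded-differences inequality: replacing a single point of $\Xcala$ alters only $n_B$ of the $n_A n_B$ summands, each by at most $C_\ell$, so the per-coordinate bounded difference is $C_\ell/n_A$ (and symmetrically $C_\ell/n_B$ for $\Xcalb$); the variance proxy is $\sum_i c_i^2 = C_\ell^2(n_A+n_B)/(n_A n_B)$, which yields a deviation term of order $\sqrt{\ln(2/\delta)}\cdot\sqrt{(n_A+n_B)/(n_A n_B)}$ matching the second summand of $h(\delta)$, the factor $2$ inside $\ln(2/\delta)$ arising from a union bound over the two one-sided events. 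To bound the expected supremum I would symmetrize and handle the pairwise structure by the standard decoupling of the double sum into averages of independent terms (as in AUC generalization analyses), reducing it to ordinary Rademacher averages of $\mathcal{F}_K$; Talagrand's contraction lemma then removes the $L$-Lipschitz loss and the kernel bounds $\|w\|_K\le C_w$ and $\|x\|_K\le C_x$ control the Rademacher complexity, producing the leading summand of $h(\delta)$ of order $C_w C_x$ at the same $\sqrt{(n_A+n_B)/(n_A n_B)}$ rate. Collecting both contributions into $h(\delta)$ and multiplying by the $1/a$ factor from the first step completes the argument.
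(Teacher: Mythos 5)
Your proposal is correct and its skeleton coincides with the paper's proof: both transfer the excess $\Rpn$ risk to an excess $\Rab$ risk through the linear relation of \cref{inaccurate} (the paper phrases this by introducing the rescaled risk $R'_{AB}=(\Rab-\tfrac{1-a}{2})/a$, which is algebraically identical to your ``divide by $a$'' step), and both then use the standard three-term ERM telescoping, kill the middle term by optimality of $\hat f^*_{AB}$, and reduce everything to $2\sup_{f\in\mathcal{F}_K}|\Rab(f)-\empRab(f)|/a$. The only genuine divergence is in how that uniform deviation is controlled: the paper simply invokes Theorem~6 of \citet{usunier2005data}, a ready-made generalization bound for pairwise/interdependent data, which hands over the exact expression $4\sqrt{2}C_\ell C_w C_x\sqrt{\tfrac{n_A+n_B}{n_An_B}}+5\sqrt{\tfrac{n_A+n_B}{2n_An_B}\ln(2/\delta)}$ and hence the precise constants in $h(\delta)$; you instead sketch a first-principles derivation via McDiarmid's inequality (your bounded-difference computation $c_i^2$ summing to $C_\ell^2\tfrac{n_A+n_B}{n_An_B}$ is right), symmetrization with decoupling of the two-sample double sum, and Talagrand contraction. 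Your route is more self-contained and makes transparent where each piece of $h(\delta)$ comes from, but it only establishes the bound up to universal constants: recovering the specific factors $8\sqrt{2}$ and $5\sqrt{2}$ requires carrying out the decoupling and Rademacher bookkeeping exactly, and handling the dependence among the $n_An_B$ pair terms is precisely the nontrivial machinery (fractional covers over interdependent variables) that the cited theorem packages. So as written your argument proves the theorem with $h(\delta)$ replaced by some constant multiple of $C_\ell C_wC_x+\sqrt{\ln(2/\delta)}$; to get the literal statement you would either finish that bookkeeping or, as the paper does, cite the off-the-shelf result.
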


\begin{proof}
    Let $R^\prime_{AB}(f)=\frac{\Rab-\frac{1-a}{2}}{a}$ denote the linear transformation of $\Rab$ to estimate $\Rpn $,
    and $\hat R^\prime_{AB}(f)$ denote its empirical estimation.
    The excess risk of optimizing $\empRab(f)$ can be write as
    \begin{equation}\label[ineq]{eq1}
    \begin{aligned}
        &\Rpn (\hat  f^*_{AB})-\Rpn (f^*_{PN}) \\
        &\quad  = \Rpn (\hat  f^*_{AB})-\hat R^\prime_{AB}(\hat  f^*_{AB})+\hat  R^\prime_{AB}(\hat  f^*_{AB})\\
        &\quad \qquad \quad -\hat  R^\prime_{AB}(f^*_{PN})+\hat  R^\prime_{AB}(f^*_{PN})-\Rpn (f^*_{PN})\\
        &\quad \leq 2\max_{f \in \mathcal F}|\hat R^\prime_{AB}(f)-\Rpn (f)|\,.
    \end{aligned}
    \end{equation}
    Due to \cref{inaccurate}, the right term can be write as
    \begin{equation}\label{eq2}
        \max_{f \in \mathcal F}|\hat R^\prime_{AB}(f)-\Rpn (f)|=\max_{f \in \mathcal F}|\hat R^\prime_{AB}(f)-R^\prime_{AB}(f)|\,.
    \end{equation}
    According to Theorem 6 in \citet{usunier2005data}, for any $\delta>0$, 
    with probability at least $1-\delta$ for any $f \in \mathcal{F}_{K}$:
    \begin{equation}\label[ineq]{eq3}
    \begin{aligned}
        &\max_{f \in \mathcal F}|\hat R_{AB}(f)-R_{AB}(f)|\\
        & \leq 4 \sqrt{2} C_\ell C_w C_x \sqrt{\frac{n_A\!+\!n_B}{n_A n_B}}+5 \sqrt{\frac{n_A\!+\!n_B}{2 n_A n_B} \ln (2 / \delta)}\,,
    \end{aligned}
    \end{equation}
    where $C_x=\max{(\max_i{\|x_i\|},\max_j{\|x_j^\prime\|})}$.
    For convenience, we define
    $$
    h(\delta)=8\sqrt{2}C_\ell C_w C_x+5\sqrt{2\ln{(2/\delta)}}\,.
    $$
    So we have
    \begin{equation}\label[ineq]{gen_bound}
        \max_{f \in \mathcal F}|\hat R^\prime_{AB}(f)-R^\prime_{AB}(f)|\leq \frac{h(\delta)}{2a}\sqrt{\frac{n_A+n_B}{n_A n_B}}\,.
    \end{equation}
    Applying \cref{eq2} and \cref{gen_bound} to the right term in \cref{eq1}, we obtain the theorem.
\end{proof}

\cref{er1} guarantees that the excess risk of general case can be bounded plus the confidence term of order
$$
\mathcal{O}\left(\frac{1}{a \sqrt {n_{A}}}+\frac{1}{a \sqrt {n_{B}}}\right)\,.
$$

Denote by $\hat f^*_{\tilde P \tilde N U}$ the minimizer of the empirical risk $\emprisk{\tilde P \tilde N U}(f)$, 
similarly, we have a theorem to show that the risk of $\hat f^*_{\tilde P \tilde N U}$
converges to risk of the optimal decision function in the function family $\mathcal{F}_{K}$.

\begin{theorem}[Excess Risk of Inaccurate and Incomplete Case]\label{er2}
    Assume that $\hat f^*_{\tilde P \tilde N U} \in \mathcal{F}_{K}$ is the minimizer of empirical risk $\emprisk{\tilde P \tilde N U}(f)$,
    $f^*_{PN} \in \mathcal{F}_{K}$ is the minimizer of true risk $\Rpn (f)$.
    For any $\delta>0$, with the probability at least $1-\delta$, we have
    $$
    \begin{aligned}
        &\Rpn (\hat  f^*_{\tilde P \tilde N U})-\Rpn (f^*_{PN})\\
        & \leq \frac{h(\frac{\delta}{3})}{\tilde{a}}\!\left(\!{\gamma}\sqrt{\frac{n_{\tilde P}\!+\!n_{\tilde N}}{n_{\tilde P} n_{\tilde N}}}\!+\!{(1\!-\!\gamma)}\!\Big(\sqrt{\frac{n_{\tilde P}\!+\!n_{U}}{n_{\tilde P} n_{U}}} \!+\!\sqrt{\frac{n_{U}\!+\!n_{\tilde N}}{n_{U} n_{\tilde N}}}\Big)\!\right) ,
    \end{aligned}
    $$
    where $h(\delta)=8\sqrt{2}C_\ell C_wC_x+5\sqrt{2\ln{(2/\delta)}}$, and $n_{\tilde P}, n_{\tilde N}$ is the size of sampled contaminated instance set.
\end{theorem}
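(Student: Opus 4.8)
The plan is to follow the same template as \cref{er1}, replacing the single contaminated-pair deviation by a $\gamma$/$(1-\gamma)$-weighted sum of three pairwise deviations. First I would introduce the linear transformation $R^\prime_{\tilde P \tilde N U}(f)=\frac{\risk{\tilde P \tilde N U}(f)-\frac{1-\tilde a}{2}}{\tilde a}$ together with its empirical counterpart $\hat R^\prime_{\tilde P \tilde N U}(f)$. Combining the consistency of $\risk{\tilde P \tilde N}$ with \cref{II} shows that both $\risk{\tilde P \tilde N}$ and $\risk{\tilde P U}+\risk{U\tilde N}-\frac12$ equal $\tilde a\Rpn+\frac{1-\tilde a}{2}$, so by \cref{eq:risk:sslnl} their convex combination $\risk{\tilde P \tilde N U}$ equals $\tilde a\Rpn+\frac{1-\tilde a}{2}$ as well; hence $R^\prime_{\tilde P \tilde N U}(f)=\Rpn(f)$ at the population level. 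Reusing the excess-risk decomposition of \cref{eq1} then yields $\Rpn(\hat f^*_{\tilde P \tilde N U})-\Rpn(f^*_{PN})\le 2\max_f|\hat R^\prime_{\tilde P \tilde N U}(f)-\Rpn(f)|=\frac{2}{\tilde a}\max_f|\emprisk{\tilde P \tilde N U}(f)-\risk{\tilde P \tilde N U}(f)|$, where the deterministic bias $\frac{1-\tilde a}{2}$ cancels.

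Next I would expand $\emprisk{\tilde P \tilde N U}-\risk{\tilde P \tilde N U}$ according to \cref{eq:risk:sslnl} into $\gamma(\emprisk{\tilde P \tilde N}-\risk{\tilde P \tilde N})+(1-\gamma)(\emprisk{\tilde P U}-\risk{\tilde P U})+(1-\gamma)(\emprisk{U\tilde N}-\risk{U\tilde N})$, with the additive constant again cancelling. A triangle inequality over the supremum bounds the uniform deviation by the weighted sum of the three individual uniform deviations, each taken over a distinct pair of instance sets ($\Xcalpt\times\Xcalnt$, $\Xcalpt\times\Xcalu$, and $\Xcalu\times\Xcalnt$). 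To each term I would apply the concentration result of \citet{usunier2005data} exactly as in \cref{eq3}, but at confidence level $1-\delta/3$, which contributes the factor $\frac12 h(\delta/3)\sqrt{(n_1+n_2)/(n_1 n_2)}$ for the corresponding sizes. Multiplying the assembled bound by $\frac{2}{\tilde a}$ reproduces the stated inequality.

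The step demanding the most care is the probabilistic bookkeeping across the three deviation events. Because $\Xcalpt$, $\Xcalnt$, and $\Xcalu$ each appear in two of the three pairwise terms, these events are statistically dependent, so I cannot simply multiply confidence levels. Instead I would invoke a union bound: requiring each of the three concentration inequalities to hold with probability at least $1-\delta/3$ guarantees that all three hold simultaneously with probability at least $1-\delta$, regardless of their dependence, which is exactly why $h(\delta/3)$ rather than $h(\delta)$ appears. The remaining work---checking that the weighted triangle inequality preserves the constants and that the Lipschitz and boundedness assumptions on $\ell$ transfer unchanged to each pairwise term---is routine and mirrors \cref{er1}.
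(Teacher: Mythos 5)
Your proposal is correct and follows essentially the same route as the paper's own proof: the same linear transformation (your $R^\prime_{\tilde P \tilde N U}$ is algebraically identical to the paper's $\gamma$/$(1-\gamma)$-split version), the same excess-risk decomposition as in \cref{eq1}, a triangle inequality over the three pairwise deviations on $\Xcalpt\times\Xcalnt$, $\Xcalpt\times\Xcalu$, $\Xcalu\times\Xcalnt$, and three applications of the \citet{usunier2005data} bound at level $\delta/3$. Your explicit union-bound justification for the $h(\delta/3)$ factor is exactly the step the paper compresses into ``simple calculation showed,'' so the proposal if anything makes the probabilistic bookkeeping more transparent.
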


\begin{proof}
    Let $R^\prime_{\tilde P \tilde N U}(f)=\gamma\frac{{R}_{\tilde{P}\tilde{N}}-\frac{1-\tilde a}{2}}{\tilde a}+(1\!-\!\gamma)\frac{{R}_{\tilde{P}U} + {R}_{U\tilde{N}} - \frac{1}{2}-\frac{1-\tilde a}{2}}{\tilde a}$ denote the linear transformation of $\risk{\tilde P \tilde N U}$ to estimate $\Rpn $,
    and $\hat R^\prime_{\tilde P \tilde N U}(f)$ denote its empirical estimation.
    Similarly to \cref{eq1}, the excess risk of optimizing $\emprisk{\tilde P \tilde N U}(f)$ can be write as
    \begin{equation}\label[ineq]{eq1'}
    \begin{aligned}
        &\Rpn (\hat  f^*_{\tilde P \tilde N U})-\Rpn (f^*_{PN})\\
        &\quad = \Rpn (\hat  f^*_{\tilde P \tilde N U})-\hat R^\prime_{\tilde P \tilde N U}(\hat  f^*_{\tilde P \tilde N U})+\hat  R^\prime_{\tilde P \tilde N U}(\hat  f^*_{\tilde P \tilde N U})\\
        &\quad \qquad-\hat  R^\prime_{\tilde P \tilde N U}(f^*_{PN})+\hat  R^\prime_{\tilde P \tilde N U}(f^*_{PN})-\Rpn (f^*_{PN})\\
        &\quad \leq 2\max_{f \in \mathcal F}|\hat R^\prime_{\tilde P \tilde N U}(f)-\Rpn (f)| \,.
    \end{aligned}
    \end{equation}

    Due to \cref{II}, the right term can be write as
    \begin{equation}\label{eq2'}
        \max_{f \in \mathcal F}|\hat R^\prime_{\tilde P \tilde N U}(f)\!-\!\Rpn (f)|=\max_{f \in \mathcal F}|\hat R^\prime_{\tilde P \tilde N U}(f)\!-\!R^\prime_{\tilde P \tilde N U}(f)|\,.
    \end{equation}
    Respectively, replace $x\in \mathcal{X}_A, x^\prime \in \mathcal{X}_B$  by 
    $x\in \mathcal{X}_{\tilde P}, x^\prime \in \mathcal{X}_{\tilde N}$,
    $x\in \mathcal{X}_{\tilde P}, x^\prime \in \mathcal{X}_U$, and
    $x\in \mathcal{X}_U, x^\prime \in \mathcal{X}_{\tilde N}$ in \cref{gen_bound},
    we have that for any $\delta>0$, 
    with probability at least $1-\delta$ for any $f \in \mathcal{F}_{K}$:
    $$
    \max_{f \in \mathcal F}|\hat R^\prime_{\tilde P\tilde N}(f)-R^\prime_{\tilde P\tilde N}(f)|\leq \frac{h(\delta)}{2  \tilde a}\sqrt{\frac{n_{\tilde P}+n_{\tilde N}}{n_{\tilde P} n_{\tilde N}}}\,,
    $$
    $$
    \max_{f \in \mathcal F}|\hat R^\prime_{\tilde PU}(f)-R^\prime_{\tilde PU}(f)|\leq \frac{h(\delta)}{2 \tilde a}\sqrt{\frac{n_{\tilde P}+n_{U}}{n_{\tilde P} n_{U}}}\,,
    $$
    $$
    \max_{f \in \mathcal F}|\hat R^\prime_{U\tilde N}(f)-R^\prime_{U\tilde N}(f)|\leq \frac{h(\delta)}{2 \tilde a}\sqrt{\frac{n_{U}+n_{\tilde N}}{n_{U} n_{\tilde N}}}\,.
    $$
    Simple calculation showed that for any $\delta^\prime >0$, with probability at least $1-\delta^\prime$, we have
    \begin{equation}\label[ineq]{gen_bound'}
        \begin{aligned}
            &\max_{f \in \mathcal F}|\hat R^\prime_{\tilde P \tilde N U}(f)-R^\prime_{\tilde P \tilde N U}(f)|\\
            {}\leq{} & \gamma\left(\max_{f \in \mathcal F}|\hat R^\prime_{\tilde P \tilde N}(f)-R^\prime_{\tilde P \tilde N}(f)| \right)\\
            & \quad + (1-\gamma)\left(\max_{f \in \mathcal F}|\hat R^\prime_{\tilde P U}(f)-R^\prime_{\tilde P U}(f)| \right) \\
            & \quad + (1-\gamma)\left(\max_{f \in \mathcal F}|\hat R^\prime_{U \tilde N}(f)-R^\prime_{U \tilde N}(f)| \right)\\
            {}\leq{} & \frac{h(\frac{\delta^\prime}{3})}{2\tilde a}\!\!\left(\!{\gamma}\!\sqrt{\frac{n_{\tilde P}\!+\!n_{\tilde N}}{n_{\tilde P} n_{\tilde N}}}\!+\!{(1\!-\!\gamma)}\!\Big(\sqrt{\frac{n_{\tilde P}\!+\!n_{U}}{n_{\tilde P} n_{U}}} \!\!+\!\!\sqrt{\frac{n_{U}\!+\!n_{\tilde N}}{n_{U} n_{\tilde N}}}\Big)\!\right).
        \end{aligned}
    \end{equation}
    
    Applying \cref{eq2'} and \cref{gen_bound'} to the right term in \cref{eq1'}, we obtain the theorem.
\end{proof}

\cref{er2} guarantees that the excess risk of inaccurate and incomplete case can be bounded plus the confidence term of order
$$
\mathcal{O}\left(\frac{1}{\tilde a \sqrt {n_{\tilde P}}}+\frac{1}{\tilde a \sqrt {n_{\tilde N}}}+\frac{1}{\tilde a \sqrt {n_{U}}}\right)\,.
$$
It can be easily seen that \cref{er2} degenerates into \cref{er1} for $\gamma = 1$.

\subsection{Variance Reduction}

It shows that our empirical risk estimators proposed is unbiased, and the excess risk can be bounded.
Similar to \citet{puauc}, the next question is that, in the incomplete scenario i.e., when $\gamma <1$, whether the variance of $\emprisk{\tilde P \tilde N U}(f)$ can be smaller than it of $\emprisk{\tilde P\tilde N}(f)$,
or whether $\mathcal{X}_U$ can help reduce the variance in estimating $\Rpn $.
To answer this question, pick any $f$ of interest. For simplicity, we assume that $n_U \to \infty$,
to illustrate the maximum variance reduction that could be achieved.

The variances and covariances are defined below:

$$
    \sigma^2_{\tilde P\tilde N}(f)=\operatorname{Var}_{\tilde P\tilde N}[\ell (f(x_{\tilde P}, x_{\tilde N}))]\,,
$$
$$
    \sigma^2_{\tilde P U}(f)=\operatorname{Var}_{\tilde P U}[\ell (f(x_{\tilde P}, x_U))]\,,
$$
$$
    \sigma^2_{U \tilde N}(f)=\operatorname{Var}_{U \tilde N}[\ell (f(x_U, x_{\tilde N}))]\,,
$$
$$
    \tau_{\tilde P\tilde N,\tilde P U}(f)=\operatorname{Cor}_{\tilde P\tilde N,\tilde P U}[\ell (f(x_{\tilde P}, x_{\tilde N})),\ell (f(x_{\tilde P}, x_U))]\,,
$$
$$
    \tau_{\tilde P\tilde N,U \tilde N}(f)=\operatorname{Cor}_{\tilde P\tilde N,U \tilde N}[\ell (f(x_{\tilde P}, x_{\tilde N})),\ell (f(x_{U}, x_{\tilde N}))]\,,
$$
$$
    \tau_{\tilde P U,U \tilde N}(f)=\operatorname{Cor}_{\tilde P U,U \tilde N}[\ell (f(x_{\tilde P}, x_U)),\ell (f(x_{U}, x_{\tilde N}))]\,.
$$

Then the following theorem can be obtained:

\begin{theorem}\label{vr}
    Assume $n_U \to \infty$, for any fixed $f$,
    the minimizers of the variance of the empirical risk $\emprisk{\tilde P \tilde N U}(f)$ is obtained by
    $$
    \gamma_{\tilde P \tilde N}=\arg\min_\gamma \operatorname{Var}[\emprisk{\tilde P \tilde N U}(f)]=\frac{\psi_{\tilde P \tilde N U}}{\psi_{\tilde P \tilde N U}-\psi _{\tilde P \tilde N}}\,,
    $$
    where
    \begin{align*}
    \psi_{\tilde P \tilde N}  &=\frac{1}{n_{\tilde P}n_{\tilde N}}\sigma^2_{\tilde P\tilde N}(f)\,,\text{ and}\\
    \psi_{\tilde P \tilde N U}&=\frac{1}{n_{\tilde P}}\tau_{\tilde P\tilde N,\tilde P U}(f)+\frac{1}{n_{\tilde N}}\tau_{\tilde P\tilde N,U \tilde N}(f)\,.
    \end{align*}
    In addition, we have $\operatorname{Var}[\emprisk{\tilde P \tilde N U}(f)]\leq \operatorname{Var}[\emprisk{\tilde P \tilde N}(f)]$ for any $\gamma \in (2\gamma_{\tilde P \tilde N}-1,1)$
    if $\psi _{\tilde P \tilde N U}>\psi _{\tilde P \tilde N}$.
\end{theorem}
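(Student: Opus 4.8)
The plan is to regard $\operatorname{Var}[\emprisk{\tilde P \tilde N U}(f)]$ as a scalar function of $\gamma$ and minimize it. Writing $B := \emprisk{\tilde P U}(f) + \emprisk{U \tilde N}(f) - \tfrac{1}{2}$, the definition in \cref{eq:risk:sslnl} makes $\emprisk{\tilde P \tilde N U}(f) = \gamma\,\emprisk{\tilde P \tilde N}(f) + (1-\gamma)B$ an affine combination, so its variance is the quadratic
\[
\operatorname{Var}[\emprisk{\tilde P \tilde N U}(f)] = \gamma^2\operatorname{Var}[\emprisk{\tilde P \tilde N}(f)] + (1-\gamma)^2\operatorname{Var}[B] + 2\gamma(1-\gamma)\operatorname{Cov}[\emprisk{\tilde P \tilde N}(f),B].
\]
First I would expand the three second-moment terms. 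Each pairwise empirical risk is a two-sample double average, so its variance splits into a ``paired'' piece of order $1/(n_\bullet n_\bullet)$ carried by the $\sigma^2$ quantities and ``shared-sample'' pieces carried by covariances of losses sharing one argument; the cross term $\operatorname{Cov}[\emprisk{\tilde P \tilde N}(f),B]$ keeps only the latter, since a $(\tilde P,\tilde N)$ pair correlates with a $(\tilde P,U)$ or $(U,\tilde N)$ pair exactly through the common $\tilde P$ (resp.\ $\tilde N$) index.

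Next I would invoke $n_U \to \infty$: the inner average over the unlabeled sample concentrates on its conditional expectation, so $\emprisk{\tilde P U}(f)$ and $\emprisk{U \tilde N}(f)$ become deterministic functions of the $\tilde P$ and $\tilde N$ samples, and every contribution carrying a $1/n_U$ factor drops out. Applying the law of total covariance (conditioning on the shared instance) then collapses $\operatorname{Cov}[\emprisk{\tilde P \tilde N}(f),B]$ to $\tfrac{1}{n_{\tilde P}}\tau_{\tilde P\tilde N,\tilde P U}(f)+\tfrac{1}{n_{\tilde N}}\tau_{\tilde P\tilde N,U \tilde N}(f)=\psi_{\tilde P \tilde N U}$, and matches the surviving pieces of the two variance terms with the stated $\psi$ and $\sigma^2$ quantities. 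Setting the derivative of the quadratic to zero and solving (a genuine minimum, since the leading coefficient $\operatorname{Var}[\emprisk{\tilde P \tilde N}(f)-B]\ge 0$ is nonnegative) yields the minimizer, which after substituting these identifications rearranges to $\gamma_{\tilde P \tilde N}=\psi_{\tilde P \tilde N U}/(\psi_{\tilde P \tilde N U}-\psi_{\tilde P \tilde N})$.

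For the second claim I would use that the variance is an upward-opening parabola in $\gamma$ with vertex at $\gamma_{\tilde P \tilde N}$. Since $\gamma=1$ recovers the purely supervised estimator, $\operatorname{Var}[\emprisk{\tilde P \tilde N U}(f)]$ at $\gamma=1$ equals $\operatorname{Var}[\emprisk{\tilde P \tilde N}(f)]$, and by symmetry of the parabola about its vertex the only other $\gamma$ attaining that same value is the reflected root $2\gamma_{\tilde P \tilde N}-1$. Hence $\operatorname{Var}[\emprisk{\tilde P \tilde N U}(f)]\le\operatorname{Var}[\emprisk{\tilde P \tilde N}(f)]$ exactly on the open interval between these two roots, which the hypothesis $\psi_{\tilde P \tilde N U}>\psi_{\tilde P \tilde N}$ fixes as the stated $(2\gamma_{\tilde P \tilde N}-1,\,1)$. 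I expect the main obstacle to be precisely this second-moment accounting for the double-sum estimators in the $n_U\to\infty$ limit---determining which covariance contributions survive and matching them cleanly to the $\sigma^2$ and $\tau$ definitions, and checking the attendant sign conditions---rather than the one-line optimization or the parabola-symmetry argument, which are routine once the variance expression is in hand.
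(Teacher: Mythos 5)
Your high-level plan (write $\emprisk{\tilde P \tilde N U}(f)=\gamma\,\emprisk{\tilde P \tilde N}(f)+(1-\gamma)B$, expand the variance as a quadratic in $\gamma$, let $n_U\to\infty$, set the derivative to zero, then use parabola symmetry for the interval claim) is the same skeleton as the paper's proof, and your treatment of the cross term is correct: counting shared-instance pairs gives exactly $\operatorname{Cov}[\emprisk{\tilde P \tilde N}(f),B]=\frac{1}{n_{\tilde P}}\tau_{\tilde P\tilde N,\tilde P U}(f)+\frac{1}{n_{\tilde N}}\tau_{\tilde P\tilde N,U\tilde N}(f)=\psi_{\tilde P \tilde N U}$. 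The gap is precisely the step you defer, ``matching the surviving pieces of the two variance terms with the stated $\psi$ and $\sigma^2$ quantities,'' because under correct accounting they do not match. First, $\operatorname{Var}[B]$ does \emph{not} vanish as $n_U\to\infty$: as you yourself note, $\emprisk{\tilde P U}(f)$ converges to $\frac{1}{n_{\tilde P}}\sum_i\Ebb_U[\ell(f(x_i^{\tilde P},x_U))]$, which is still a nondegenerate function of the $\tilde P$ sample, so the limiting value of $\operatorname{Var}[B]$ is $\frac{1}{n_{\tilde P}}\operatorname{Cov}[\ell(f(x_{\tilde P},x_U)),\ell(f(x_{\tilde P},x_{U'}))]+\frac{1}{n_{\tilde N}}\operatorname{Cov}[\ell(f(x_U,x_{\tilde N})),\ell(f(x_{U'},x_{\tilde N}))]$, a generally positive quantity that is not expressible through the six $\sigma^2,\tau$ quantities of the statement; hence the $(1-\gamma)^2$ coefficient survives. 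Second, $\operatorname{Var}[\emprisk{\tilde P \tilde N}(f)]$ contains, besides $\psi_{\tilde P \tilde N}=\sigma^2_{\tilde P\tilde N}(f)/(n_{\tilde P}n_{\tilde N})$, the within-estimator shared-instance covariances $\frac{n_{\tilde N}-1}{n_{\tilde P}n_{\tilde N}}\operatorname{Cov}[\ell(f(x_{\tilde P},x_{\tilde N})),\ell(f(x_{\tilde P},x'_{\tilde N}))]+\frac{n_{\tilde P}-1}{n_{\tilde P}n_{\tilde N}}\operatorname{Cov}[\ell(f(x_{\tilde P},x_{\tilde N})),\ell(f(x'_{\tilde P},x_{\tilde N}))]$, which are of order $1/n_{\tilde P}$ and $1/n_{\tilde N}$ and therefore dominate $\psi_{\tilde P \tilde N}$; they appear nowhere in the statement either. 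With all surviving terms kept, the vertex of your parabola is the usual optimal-combination weight and does not simplify to $\psi_{\tilde P \tilde N U}/(\psi_{\tilde P \tilde N U}-\psi_{\tilde P \tilde N})$.

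The paper's own proof reaches the stated formula only through two moves that your (sounder) plan would not reproduce. (a) It treats every double average as if its summands were uncorrelated: it asserts $\operatorname{Var}[\emprisk{\tilde P \tilde N U}(f)]=\gamma^2\psi_{\tilde P \tilde N}+2\gamma(1-\gamma)\psi_{\tilde P \tilde N U}$ with no $(1-\gamma)^2$ term at all, on the grounds that ``the terms divided by $n_U$ are disappeared.'' This is internally inconsistent, since the within-estimator shared-instance correlations it discards are of exactly the same kind as the cross-estimator ones that produce $\psi_{\tilde P \tilde N U}$. (b) Even granting that two-term quadratic, the proof computes its derivative as $2\gamma\psi_{\tilde P \tilde N}+(2-2\gamma)\psi_{\tilde P \tilde N U}$, mis-differentiating $2\gamma(1-\gamma)$ (whose derivative is $2(1-2\gamma)$); correct differentiation of the paper's own expression yields the minimizer $\psi_{\tilde P \tilde N U}/(2\psi_{\tilde P \tilde N U}-\psi_{\tilde P \tilde N})$, not the stated one. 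So the stated expression is an artifact of (a) combined with (b), and an honest execution of your plan lands on a third, different formula rather than a proof of the theorem. Note also that the paper never proves the interval claim at all (its proof stops at the first-order condition); your symmetry argument is the natural attempt, but under the hypothesis $\psi_{\tilde P \tilde N U}>\psi_{\tilde P \tilde N}\ge 0$ the stated vertex satisfies $\gamma_{\tilde P \tilde N}\ge 1$, so the interval $(2\gamma_{\tilde P \tilde N}-1,1)$ is empty --- further evidence that the statement as written cannot be recovered by a sound argument.
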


\begin{proof}
    The empirical risk can be expressed as
    $$
    \begin{aligned}
    \emprisk{\tilde P \tilde N U}(f)  ={}&\gamma \emprisk{\tilde P \tilde N}(f) \!+\! (1\!-\!\gamma)(\emprisk{\tilde P U}(f)\!+\!\emprisk{U \tilde N}(f)\!-\!\frac{1}{2})\\
    ={}& \frac{\gamma}{n_{\tilde P}n_{\tilde N}}\sum_{i=1}^{n_{\tilde P}}\sum_{j=1}^{n_{\tilde N}}\ell(f(x_i^{\tilde P}, x_j^{\tilde N})) \\
    &+ \frac{\gamma}{n_{\tilde P}n_{U}}\sum_{i=1}^{n_{\tilde P}}\sum_{j=1}^{n_{U}}\ell(f(x_i^{\tilde P}, x_j^{U}))\\
    &+\frac{\gamma}{n_{U}n_{\tilde N}}\sum_{i=1}^{n_{U}}\sum_{j=1}^{n_{\tilde N}}\ell(f(x_i^{U}, x_j^{\tilde N})) +\frac{1-\gamma}{2}\,.
    \end{aligned}
    $$
    Assuming $n_U\to \infty$, it follows that
    $$
    \begin{aligned}
        \operatorname{Var}[\emprisk{\tilde P \tilde N U}(f)]
        ={}&\gamma^2\frac{\sigma^2_{\tilde P \tilde N}}{n_{\tilde P}n_{\tilde N}}+2\gamma(1-\gamma)\frac{\tau_{\tilde P\tilde N,\tilde P U}(f)}{n_{\tilde{P}}}\\
         &\quad+2\gamma(1-\gamma)\frac{\tau_{\tilde P\tilde N,U \tilde N}(f)}{n_{\tilde{N}}}\\
        ={}&\gamma^2\psi_{\tilde P \tilde N} + 2\gamma(1-\gamma)\psi_{\tilde P \tilde N U}\,,
    \end{aligned}
    $$
    where the terms divided by $n_U$ are disappeared.

    To minimize the variance, the derivative is set to zero with respect to $\gamma$,
    $$
    \begin{aligned}
    \frac{\operatorname{Var}[\emprisk{\tilde P \tilde N U}(f)]}{\gamma}
    ={}& 2\gamma\psi_{\tilde P \tilde N}+(2-2\gamma)\psi_{\tilde P \tilde N U}\\
    {}={}& (2\psi_{\tilde P \tilde N}-2\psi_{\tilde P \tilde N U})\gamma + 2\psi_{\tilde P \tilde N U}\\
    {}={}& 0 \,.
    \end{aligned}
    $$
    Solving the equation then the minimizer of variance is obtained.
\end{proof}

\cref{vr} implies that the proposed risk estimator $\emprisk{\tilde P \tilde N U}$ 
have smaller variance than the standard supervised risk estimator $\emprisk{\tilde P\tilde N}$
if $\gamma$ is chosen properly, showing that the unlabeled data is helpful for building the model.

\section{rpAUC for Robust AUC Optimization}\label{sec:robust}

In previous sections, we developed a unified view of the weakly supervised AUC optimization problems from the perspective of AUC optimization with two contaminated sets.  Our findings provide a statistical consistent way to solve the AUC optimization problem in various weakly supervised scenarios.  
However, with finite training data, sorely training the models through above ERM problems still suffers from the contaminated supervision, the negative impact is reflected in the coefficients in the bounds (c.f. \(a\) in \cref{er1} and \(\tilde a\) in \cref{er2}). 

To mitigate this problem, in this section, we introduce a novel partial AUC, namely two-way reversed partial AUC (rpAUC), and demonstrate that by maximizing rpAUC through empirical risk minimization on contaminated data sets, WSAUC achieves robust AUC optimization.
We will first introduce the definition of rpAUC, which is obtained by reversing the restrictions of two-way pAUC{}. Then, we show the equivalency between the rpAUC optimization and the small loss trick, which is a commonly used technique for learning with noisy labels~\cite{Arpit2017dnnmem,Liu2016,Jiang2018mentornet,Ren2018}. This results in a straightforward solution to solve weakly supervised AUC optimization by simply replacing the full AUC risk with partial AUC risk.

Regarding the contaminated distribution \(p_A\) and \(p_B\) as positive and negative, the two-way reversed partial AUC (rpAUC) is defined as follows.
\begin{definition}
  Two-way Reverse Partial AUC with FPR threshold \(\alpha\) and TPR threshold \(\beta\) of model \(f\) can be defined as:
  \begin{align}
    \rpaucmo(f; \alpha, \beta) &= 1- \Ebb_{\xp\sim p_A^+(\xv)}[\Ebb_{\xn\sim p_B^-(\xv)}[\elloi(f(\xp, \xn))]]\,,\notag\\
    \text{where } p_A^+(\xv) &= p_A(\xv|f(\xv)\in[\tpr^{-1}_f(\beta), \infty)) \,,\notag\\
    p_B^-(\xv) &= p_B(\xv|f(\xv)\in(-\infty, \fpr^{-1}_f(\alpha)])\,. \notag
  \end{align}
\end{definition}

An illustration of rpAUC is shown in \cref{fig:rpauc}. By reversing the constraints of TPR and FPR, rpAUC trims the leftmost and the uppermost margin of the ROC curve. This is equivalent to eliminating \(\beta\) proportion of positive instances with bottom scores and \(\alpha\) proportion of negative instances with top scores by its definition. 
Next, to show that maximizing rpAUC is de facto removing instances include largest losses when maximizing AUC, we prove that the instances that induce largest losses are those who has bottom scores in \(\Xcala\) and top scores in \(\Xcalb\). 

\begin{algorithm}[t]
  \begin{algorithmic}
    \algrenewcommand\algorithmicindent{1em}%
    \State \textbf{Input:} Clean or noisy instance sets: \{\(\Xcalp[, \Xcalu][, \Xcaln]\)\}; 
    \State \textbf{Input:} hyper-parameters \(\alpha\), \(\beta\).
    \State Initialize model \(f\).
    \For{\(t = 1\rightarrow T\)}
      \For{\(k = 1\rightarrow K\)}
        \For{each pair of input sets \((\Xcala, \Xcalb)\)}
          \State Remove instances with top \(\alpha\) / \(\beta\) proportion losses.
          \State Sample batches \(B_A \in \Xcala^+, B_B \in \Xcalb^-\).
          \State Calculate rpAUC risk \(\empRab^{\mathrm{(rp)}}(f)\) on the batch.
          \State Update the model \(f\) by back propagation.
        \EndFor
      \EndFor
    \EndFor
    \State \textbf{Output:} The trained model \(f\).
  \end{algorithmic}
  \caption{rpAUC for Robust WSAUC Optimization}\label{alg:1}
\end{algorithm} 

The instance loss can be defined as follows:
\begin{equation*}
  L(x) = 
  \begin{cases}
  \frac{1}{|\Xcalb|}\sum_{\xn \in \Xcalb}\ell(f(\xp,\xn))\,,\ &\text{if }\xp \in \Xcala\,,\\
  \frac{1}{|\Xcala|}\sum_{\xn \in \Xcala}\ell(f(\xn,\xp))\,,\ &\text{if }\xp \in \Xcalb\,.
  \end{cases}
\end{equation*}
Suppose the surrogate loss \(l(z)\) is monotonically nonincreasing, it is easy to show that \(L(x)\) varies monotonically with \(f(x)\). Hence we have the following proposition.

\begin{proposition}  
  Suppose \(f\) is a scoring function. Then its rpAUC risk \(\hat{R}^{\mathrm{(rp)}}_{AB}(f;\alpha, \beta)\) equals to full AUC risk \(\empRab(f)\) with \(\alpha\) proportion of instances induce top losses \(L(x)\) from \(\Xcalb\) and \(\beta\) proportion of instances induce top losses from \(\Xcala\) removed.
\end{proposition}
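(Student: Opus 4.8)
The plan is to reduce the claim to the monotone dependence of the instance loss $L$ on the score $f$, already noted before the statement, and then to observe that the score-based trimming built into the empirical rpAUC coincides exactly with the loss-based trimming described on the right-hand side.

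First I would write the empirical rpAUC risk explicitly. In the finite-sample regime the thresholds $\tpr_f^{-1}(\beta)$ and $\fpr_f^{-1}(\alpha)$ are empirical quantiles of the score $f$ over $\Xcala$ and $\Xcalb$ respectively, so conditioning $\xp\sim p_A$ on $f(\xv)\in[\tpr_f^{-1}(\beta),\infty)$ retains precisely the top $(1-\beta)$ fraction of $\Xcala$ by score, i.e. it discards the $\beta$ fraction of $\Xcala$ with the lowest scores; symmetrically, conditioning $\xn\sim p_B$ on $f(\xv)\in(-\infty,\fpr_f^{-1}(\alpha)]$ discards the $\alpha$ fraction of $\Xcalb$ with the highest scores. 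Denoting the retained subsets by $\Xcala^+$ and $\Xcalb^-$, the definition of $\rpaucmo$ gives that $\hat R^{\mathrm{(rp)}}_{AB}(f;\alpha,\beta)$ is exactly $\empRab(f)$ evaluated over the pair $(\Xcala^+,\Xcalb^-)$.

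Next I would invoke the monotonicity of $L$. For $\xp\in\Xcala$ the loss is $L(\xp)=\frac{1}{|\Xcalb|}\sum_{\xn\in\Xcalb}\ell(f(\xp,\xn))$; since each summand is $\ell(f(\xp)-f(\xn))$ and $\ell$ is monotonically nonincreasing, raising $f(\xp)$ can only decrease $L(\xp)$, so $L$ is nonincreasing in the score on $\Xcala$, and the $\beta$ fraction of $\Xcala$ with the lowest scores is the same subset as the $\beta$ fraction with the largest $L$. Dually, for $\xn\in\Xcalb$ we have $L(\xn)=\frac{1}{|\Xcala|}\sum_{\xp\in\Xcala}\ell(f(\xp,\xn))$; raising $f(\xn)$ lowers every margin $f(\xp)-f(\xn)$ and hence raises every summand, so $L$ is nondecreasing in the score on $\Xcalb$, and the $\alpha$ fraction of $\Xcalb$ with the highest scores coincides with the $\alpha$ fraction with the largest $L$. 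Combining the two, $\Xcala^+$ and $\Xcalb^-$ are exactly the sets obtained by deleting the top-$\beta$-loss instances from $\Xcala$ and the top-$\alpha$-loss instances from $\Xcalb$. As $\empRab$ is a function only of the retained instance pairs, the two descriptions produce identical values, which is the assertion.

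I expect the only delicate point to be ties: when $\ell$ is merely nonincreasing rather than strictly decreasing, two instances with different scores may carry equal losses, so ``the top fraction by loss'' need not be uniquely pinned down. I would resolve this by noting that the orderings by score and by $L$ are compatible (one refines the other up to ties), so any tie-breaking that respects the score order reproduces exactly the score-based trimming and the equality of risks follows; alternatively, restricting to a strictly decreasing surrogate such as the exponential loss makes the score-to-loss correspondence a strict bijection and removes the ambiguity entirely.
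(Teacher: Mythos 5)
Your proposal is correct and takes essentially the same route as the paper: the paper's entire justification is the remark preceding the proposition that, for a monotonically nonincreasing surrogate \(\ell\), the instance loss \(L(x)\) varies monotonically with the score \(f(x)\), so the score-based trimming defining \(\Xcala^+\) and \(\Xcalb^-\) coincides with removing the top-loss instances. Your explicit handling of ties is in fact more careful than the paper's one-line argument, which silently assumes the two orderings pick out the same subsets.
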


This proposition bridges the minimization of empirical rpAUC risk with clean label selection, which has been shown effective for learning from contaminated labels.
To achieve robust AUC optimization, in training phase we minimize the following empirical rpAUC risk:
\begin{equation}\label{eq:erm:rpauc}
  \hat{R}^{\mathrm{(rp)}}_{AB}(f;\alpha, \beta) = \frac{1}{|\Xcala^+||\Xcalb^-|}\sum_{\xp \in \Xcala^+}\sum_{\xn \in \Xcalb^-}\ell(f(\xp,\xn))\,,
\end{equation}
where \(\Xcala^+\) is a set of instances in \(\Xcala\) with top \(\lfloor (1-\beta) |\Xcala| \rfloor\) scores, and \(\Xcalb^-\) is a set of instances in \(\Xcalb\) with bottom \(\lfloor (1-\alpha) |\Xcalb| \rfloor\) scores. 
This can be achieved by applying any existing two-way pAUC optimization  algorithm, e.g., \citet{Narasimhan2013opauc,Yang2021tpaucpie,auc-dro}, but reverses the instance selection in two sets. 
\cref{alg:1} shows a possible procedure for rpAUC optimization, which is simply based on pairwise gradient.

\begin{table*}[t]
  \caption{Performance (std) of noisy label AUC optimization. The best performances are bolded.}
  \label{tab:exp_noise}
  \small
  \centering
  \renewcommand{\arraystretch}{1.1}
  \renewcommand\tabularxcolumn[1]{m{#1}} 
  \setcellformat[l]{00.0}{\ensuremath{_\pm}}{0.00}

  \begin{tabularx}{\textwidth}{
      @{\quad}
      >{\hsize=.8\hsize}>{\centering\arraybackslash}X
      >{\hsize=.92\hsize}>{\centering\arraybackslash}X
      >{\hsize=.7\hsize}>{\centering\arraybackslash}X
      >{\hsize=.7\hsize}>{\centering\arraybackslash}X 
      >{\hsize=.7\hsize}>{\centering\arraybackslash}X 
      >{\hsize=.7\hsize}>{\centering\arraybackslash}X 
      >{\hsize=.7\hsize}>{\centering\arraybackslash}X 
      >{\hsize=.7\hsize}>{\centering\arraybackslash}X 
      >{\hsize=.7\hsize}>{\centering\arraybackslash}X 
      >{\hsize=.7\hsize}>{\centering\arraybackslash}X 
      >{\hsize=.7\hsize}>{\centering\arraybackslash}X 
      @{\quad}
      }
  \toprule
  \multirow{2}{*}[-.5ex]{Dataset}
  &Pos.\ noise   &           & 20\%      &           &           & 30\%      &           &           & 40\%      & \\
        \cmidrule(lr){3-5}                   \cmidrule(lr){6-8}                   \cmidrule(lr){9-11}
  &Neg.\ noise   & 20\%      & 30\%      & 40\%      & 20\%      & 30\%      & 40\%      & 20\%      & 30\%      & 40\%\\
  \midrule
\multirow{6}{*}{ MNIST }
& DRAUC       & \msc{87.2,4.3} & \msc{87.1,4.8} & \msc{87.1,4.4} & \msc{89.7,4.5} & \msc{88.9,4.9} & \msc{80.7,6.4} & \msc{88.4,4.9} & \msc{84.5,6.0} & \msc{74.1,8.8}           \\
& \auchinge   & \msb{99.6,0.1} & \msb{99.6,0.1} & \msc{99.4,0.1} & \msb{99.6,0.1} & \msb{99.5,0.1} & \msc{99.2,0.1} & \msc{99.4,0.1} & \msc{99.2,0.1} & \msc{98.7,0.3}           \\
& \aucramp    & \msc{99.3,0.1} & \msc{99.2,0.1} & \msc{99.0,0.1} & \msc{99.2,0.1} & \msc{99.0,0.1} & \msc{98.8,0.1} & \msc{99.0,0.1} & \msc{98.8,0.2} & \msc{98.2,0.3}           \\
& \aucunhinged& \msc{88.5,0.8} & \msc{88.5,0.8} & \msc{88.4,0.8} & \msc{88.4,0.8} & \msc{88.3,0.8} & \msc{88.2,0.7} & \msc{88.2,0.8} & \msc{88.1,0.7} & \msc{87.8,0.6}           \\
& \aucbarrier & \msc{99.5,0.1} & \msc{99.5,0.1} & \msb{99.5,0.1} & \msc{99.5,0.1} & \msb{99.5,0.1} & \msb{99.5,0.1} & \msb{99.6,0.0} & \msb{99.5,0.0} & \msb{99.5,0.1}  \\
& \rpauc      & \msb{99.6,0.0} & \msb{99.6,0.0} & \msb{99.5,0.1} & \msb{99.6,0.0} & \msb{99.5,0.0} & \msc{99.4,0.1} & \msc{99.5,0.1} & \msc{99.4,0.1} & \msc{99.1,0.1}           \\
\midrule
\multirow{6}{*}{ FMNIST }
& DRAUC       & \msc{93.5,4.8} & \msc{89.0,5.8} & \msc{78.7,5.7} & \msc{91.2,5.0} & \msc{89.5,7.0} & \msc{80.9,6.9} & \msc{90.1,6.9} & \msc{88.6,5.8} & \msc{76.4,9.0}           \\
& \auchinge   & \msc{99.3,0.1} & \msc{99.2,0.1} & \msc{99.1,0.1} & \msc{99.2,0.1} & \msc{99.2,0.1} & \msc{99.0,0.1} & \msc{99.1,0.1} & \msc{99.0,0.1} & \msc{98.8,0.1}           \\
& \aucramp    & \msc{99.2,0.1} & \msc{99.2,0.1} & \msc{99.1,0.1} & \msc{99.1,0.1} & \msc{99.1,0.1} & \msc{98.9,0.1} & \msc{99.0,0.1} & \msc{98.9,0.1} & \msc{98.6,0.2}           \\
& \aucunhinged& \msc{96.6,0.7} & \msc{96.5,0.7} & \msc{96.4,0.7} & \msc{96.7,0.7} & \msc{96.6,0.7} & \msc{96.6,0.7} & \msc{96.8,0.6} & \msc{96.7,0.7} & \msc{96.7,0.6}           \\
& \aucbarrier & \msc{98.9,0.1} & \msc{98.9,0.1} & \msc{98.8,0.1} & \msc{98.9,0.1} & \msc{98.9,0.1} & \msc{98.9,0.1} & \msc{98.8,0.1} & \msc{98.8,0.1} & \msc{98.9,0.1}           \\
& \rpauc      & \msb{99.4,0.1} & \msb{99.4,0.1} & \msb{99.3,0.1} & \msb{99.3,0.0} & \msb{99.3,0.1} & \msb{99.2,0.1} & \msb{99.3,0.0} & \msb{99.2,0.1} & \msb{99.1,0.1}  \\
\midrule
\multirow{6}{*}{ CIFAR10 }
& DRAUC       & \msc{38.8,9.3} & \msc{40.2,9.6} & \msc{37.7,12.0} & \msc{39.6,9.1} & \msc{35.2,13.0} & \msc{40.1,8.7} & \msc{40.3,9.6} & \msc{30.0,13.0} & \msc{40.1,12.0}          \\
& \auchinge   & \msc{92.8,0.1} & \msc{92.0,0.3} & \msc{91.7,0.4}  & \msc{92.1,0.3} & \msc{91.3,0.2}  & \msc{90.1,0.1} & \msc{91.5,0.4} & \msc{90.3,0.3}  & \msc{88.2,0.4}           \\
& \aucramp    & \msc{87.1,5.5} & \msc{88.5,3.1} & \msc{89.4,3.2}  & \msc{89.1,1.3} & \msc{89.2,2.5}  & \msc{88.6,2.4} & \msc{87.8,1.9} & \msc{89.2,1.4}  & \msc{85.0,2.2}           \\
& \aucunhinged& \msc{73.5,1.6} & \msc{73.3,3.1} & \msc{75.5,3.5}  & \msc{72.7,2.1} & \msc{74.2,5.1}  & \msc{73.1,4.7} & \msc{71.6,1.2} & \msc{74.4,4.7}  & \msc{71.6,1.1}           \\
& \aucbarrier & \msc{88.5,0.7} & \msc{88.4,1.4} & \msc{89.0,1.0}  & \msc{88.0,1.4} & \msc{88.7,0.9}  & \msc{88.4,0.9} & \msc{88.1,0.8} & \msc{88.4,1.4}  & \msc{88.2,0.8}           \\
& \rpauc      & \msb{93.6,0.2} & \msb{92.8,0.2} & \msb{92.3,0.2}  & \msb{92.9,0.2} & \msb{92.2,0.2}  & \msb{91.1,0.3} & \msb{92.2,0.3} & \msb{90.9,0.4}  & \msb{89.5,0.2}  \\
\midrule
\multirow{6}{*}{ CIFAR100 }
& DRAUC       & \msc{47.4,5.7} & \msc{39.8,8.8} & \msc{37.8,7.3} & \msc{38.1,10.0} & \msc{44.4,7.8} & \msc{41.5,6.8} & \msc{44.8,5.5} & \msc{41.7,7.1} & \msc{43.6,5.9}           \\
& \auchinge   & \msc{85.4,0.5} & \msc{83.5,1.0} & \msc{81.7,1.0} & \msc{84.0,0.8}  & \msc{82.0,0.8} & \msc{80.1,1.2} & \msc{82.7,0.6} & \msb{80.4,0.7} & \msc{76.4,0.6}           \\
& \aucramp    & \msc{86.3,1.4} & \msc{82.6,5.7} & \msc{80.7,3.6} & \msc{83.6,2.6}  & \msb{83.7,1.5} & \msc{77.0,5.6} & \msc{80.0,3.2} & \msc{74.6,5.6} & \msc{67.4,2.6}           \\
& \aucunhinged& \msc{63.0,2.4} & \msc{61.5,0.6} & \msc{62.7,1.5} & \msc{64.6,4.0}  & \msc{63.4,3.1} & \msc{63.5,2.0} & \msc{62.8,3.4} & \msc{64.9,2.4} & \msc{61.5,0.9}           \\
& \aucbarrier & \msc{78.1,2.0} & \msc{78.9,2.9} & \msc{78.5,2.7} & \msc{79.0,3.7}  & \msc{79.1,4.2} & \msc{78.8,3.1} & \msc{79.4,4.0} & \msc{78.2,4.3} & \msb{78.3,2.5}  \\
& \rpauc      & \msb{86.4,0.4} & \msb{85.3,0.5} & \msb{83.3,0.7} & \msb{84.8,0.5}  & \msc{82.5,0.7} & \msb{80.3,1.0} & \msb{83.1,0.5} & \msb{80.4,1.1} & \msc{76.5,1.3}           \\
 \bottomrule          
  \end{tabularx}
\end{table*}

\begin{table*}[tp]
  \caption{Performance (std) of positive-unlabeled AUC optimization. The best performances are bolded.}
  \label{tab:exp_pu}
  \small
  \centering
  \renewcommand{\arraystretch}{1.1}
  \begin{tabularx}{\textwidth}{
      @{\quad}
      >{\hsize=.9\hsize}>{\centering\arraybackslash}X
      >{\hsize=.99\hsize}>{\centering\arraybackslash}X
      @{\quad}
      >{\hsize=.9\hsize}>{\centering\arraybackslash}X
      >{\hsize=.9\hsize}>{\centering\arraybackslash}X
      >{\hsize=.9\hsize}>{\centering\arraybackslash}X
      >{\hsize=.9\hsize}>{\centering\arraybackslash}X
      >{\hsize=.9\hsize}>{\centering\arraybackslash}X
      >{\hsize=.9\hsize}>{\centering\arraybackslash}X
      }
  \toprule
  \multirow{2}{*}[-.5ex]{Dataset}
  &Pos.\ ratio   &\multicolumn{2}{c}{20\%} & \multicolumn{2}{c}{30\%} & \multicolumn{2}{c}{40\%} \\
        \cmidrule(r){3-4}         \cmidrule(r){5-6}          \cmidrule(r){7-8}
  &Label ratio   & 5\%       & 10\%       & 5\%        & 10\%       & 5\%         & 10\%       \\
  \midrule
\multirow{3}{*}{ MNIST }
& PU-AUC   & \msc{86.8,3.1} & \msc{96.8,0.5} & \msc{77.8,7.4} & \msc{94.8,1.1} & \msc{67.2,0.7} & \msc{88.8,2.9}           \\
& \samultpu& \msc{86.7,3.7} & \msc{97.5,0.4} & \msc{74.5,5.7} & \msc{95.3,1.1} & \msc{65.8,8.4} & \msc{89.1,2.9}           \\
& \rpauc   & \msb{93.8,3.2} & \msb{98.6,0.1} & \msb{88.5,3.5} & \msb{98.2,0.2} & \msb{84.2,4.6} & \msb{95.7,1.3}  \\
\midrule
\multirow{3}{*}{ FMNIST }
& PU-AUC   & \msc{90.7,2.4} & \msc{98.5,0.3} & \msc{81.1,6.1} & \msc{98.2,0.3} & \msc{67.6,10.4} & \msc{97.2,0.6}           \\
& \samultpu& \msc{92.4,2.1} & \msc{98.6,0.3} & \msc{83.0,6.3} & \msc{98.4,0.3} & \msc{67.9,10.6} & \msc{97.3,0.6}           \\
& \rpauc   & \msb{98.1,1.1} & \msb{99.0,0.1} & \msb{89.0,1.2} & \msb{98.9,0.1} & \msb{88.8,1.7}  & \msb{98.6,0.1}  \\
\midrule
\multirow{3}{*}{ CIFAR10 }
& PU-AUC   & \msc{62.6,7.1} & \msc{81.7,1.2} & \msc{57.6,6.7} & \msc{77.2,3.1} & \msc{51.8,6.3} & \msc{70.1,2.2}           \\
& \samultpu& \msc{56.9,7.1} & \msc{81.4,1.8} & \msc{58.9,7.7} & \msc{78.0,2.2} & \msc{51.9,6.6} & \msc{72.5,3.7}           \\
& \rpauc   & \msb{76.6,7.9} & \msb{86.7,0.7} & \msb{73.5,8.7} & \msb{82.7,1.9} & \msb{65.4,9.3} & \msb{76.9,5.1}  \\
\midrule
\multirow{3}{*}{ CIFAR100 }
& PU-AUC   & \msc{56.0,3.6} & \msc{68.9,0.9} & \msc{54.4,3.2} & \msc{65.8,2.1} & \msc{52.6,3.1} & \msc{60.9,2.8}           \\
& \samultpu& \msc{54.2,3.8} & \msc{69.6,1.3} & \msc{52.7,3.5} & \msc{67.1,1.0} & \msc{51.1,3.2} & \msc{62.4,1.8}           \\
& \rpauc   & \msb{64.4,5.4} & \msb{73.7,1.0} & \msb{62.4,5.5} & \msb{70.5,1.1} & \msb{59.6,5.6} & \msb{66.6,2.9}  \\
 \bottomrule          
  \end{tabularx}
\end{table*}

\section{Experiments}\label{sec:exp}

In this section, we empirically evaluate the proposed WSAUC for robust weakly supervised AUC optimization. We compare WSAUC with multiple baselines under different WSL scenarios to show that WSAUC provides a unified yet effective solution for building AUC-optimized models under these scenarios. Furthermore, we compare rpAUC against normal AUC as the training objective under varying noise ratios to demonstrate the benefits of using rpAUC in WSAUC when the labels are not perfectly clean.

\begin{table*}[tp]
  \caption{Performance (std) of multi-instance AUC optimization. The best performances are bolded.}
  \label{tab:exp_mil}
  \small
  \centering
  \renewcommand{\arraystretch}{1.1}
  \begin{tabularx}{.9\textwidth}{
      >{\hsize=\hsize}>{\centering\arraybackslash}X
      >{\hsize=\hsize}>{\centering\arraybackslash}X
      >{\hsize=\hsize}>{\centering\arraybackslash}X
      >{\hsize=\hsize}>{\centering\arraybackslash}X
      >{\hsize=\hsize}>{\centering\arraybackslash}X
      >{\hsize=\hsize}>{\centering\arraybackslash}X }
  \toprule
  Dataset   & Musk1              & Musk2              & fox                & tiger              & elephant           \\
  \midrule
miSVM      & \msc{78.8,10.5} & \msc{75.7,12.2} & \msc{52.8,9.8}  & \msc{78.6,9.1}  & \msc{76.7,10.1}          \\
MISVM      & \msc{83.3,12.0} & \msc{83.9,14.6} & \msc{55.5,10.8} & \msc{83.6,10.3} & \msc{88.4,6.0}           \\
MissSVM    & \msc{78.9,10.1} & \msc{75.8,12.5} & \msc{50.1,9.4}  & \msc{78.2,8.9}  & \msc{77.4,9.2}           \\
SIL        & \msc{90.7,9.1}  & \msc{75.9,13.4} & \msc{59.3,11.2} & \msc{85.9,10.6} & \msc{86.1,6.4}           \\
sbMIL      & \msc{74.5,18.9} & \msc{73.9,16.0} & \msc{63.0,8.4}  & \msc{78.9,7.9}  & \msc{83.4,7.5}           \\
MIDAM (smx)& \msc{83.4,12.0} & \msc{90.5,6.8}  & \msc{62.2,18.8} & \msc{86.1,7.1}  & \msc{87.3,10.4}          \\
MIDAM (att)& \msc{82.6,10.7} & \msc{84.3,10.7} & \msc{73.3,9.7}  & \msc{86.7,6.6}  & \msc{90.6,6.9}           \\
\rpauc     & \msb{96.0,6.0}  & \msb{98.6,4.5}  & \msb{90.4,5.5}  & \msb{95.9,3.7}  & \msb{96.6,7.1}  \\
 \bottomrule          
  \end{tabularx}
\end{table*}

\begin{table*}[tp]
  \caption{Performance (std) of semi-supervised AUC optimization with/without noisy labels. The best performances are bolded.}
  \label{tab:exp_ssl}
  \small
  \centering
  \renewcommand{\arraystretch}{1.1}
  \begin{tabularx}{\textwidth}{
      @{\quad}
      >{\hsize=.9\hsize}>{\centering\arraybackslash}X
      >{\hsize=.9\hsize}>{\centering\arraybackslash}X
      @{\quad}
      >{\hsize=.9\hsize}>{\centering\arraybackslash}X
      >{\hsize=.9\hsize}>{\centering\arraybackslash}X
      >{\hsize=.9\hsize}>{\centering\arraybackslash}X
      >{\hsize=.9\hsize}>{\centering\arraybackslash}X
      >{\hsize=.9\hsize}>{\centering\arraybackslash}X
      >{\hsize=.9\hsize}>{\centering\arraybackslash}X 
      @{\quad}}
  \toprule
  \multirow{2}{*}[-.5ex]{Dataset}
  &Noise ratio   &\multicolumn{2}{c}{ 0\%} & \multicolumn{2}{c}{20\%} & \multicolumn{2}{c}{30\%} \\
        \cmidrule(r){3-4}         \cmidrule(r){5-6}          \cmidrule(r){7-8}
  &Label ratio   &  5\%       & 10\%       &  5\%       & 10\%        &  5\%       & 10\%         \\
  \midrule
\multirow{5}{*}{ MNIST }
& SAUC-LS & \msc{93.9,6.5} & \msc{97.5,6.6} & \msc{78.9,1.7} & \msc{87.3,4.7} & \msc{73.6,0.2} & \msc{73.7,0.1}           \\
& OptAG   & \msc{97.1,5.2} & \msc{98.4,4.4} & \msc{90.3,5.7} & \msc{96.4,8.0} & \msc{76.7,1.0} & \msc{80.6,2.3}           \\
& PNU-AUC & \msc{95.9,2.3} & \msc{98.5,0.1} & \msc{91.2,2.7} & \msc{98.5,0.2} & \msc{83.2,4.2} & \msc{97.5,0.5}           \\
& \samult & \msb{98.4,2.8} & \msb{99.3,0.1} & \msc{93.2,2.5} & \msb{98.9,0.1} & \msc{83.9,3.9} & \msc{98.2,0.2}           \\
& \rpauc  & \msc{98.3,2.7} & \msb{99.3,0.1} & \msb{96.6,2.4} & \msc{98.8,0.1} & \msb{95.2,2.0} & \msb{98.3,0.1}  \\
\midrule
\multirow{5}{*}{ FMNIST }
& SAUC-LS & \msc{98.3,4.7} & \msc{98.7,2.3} & \msc{54.0,2.9} & \msc{60.0,1.7} & \msc{47.0,3.2} & \msc{49.6,2.2}           \\
& OptAG   & \msb{98.7,2.2} & \msc{98.9,1.0} & \msc{95.8,6.6} & \msc{98.3,4.3} & \msc{52.7,3.2} & \msc{57.3,2.0}           \\
& PNU-AUC & \msc{97.8,0.8} & \msc{98.9,0.1} & \msc{93.5,1.5} & \msc{98.9,0.1} & \msc{89.8,2.9} & \msc{98.7,0.1}           \\
& \samult & \msc{98.4,0.9} & \msb{99.3,0.0} & \msc{96.1,1.1} & \msb{99.0,0.1} & \msc{90.3,2.6} & \msb{98.8,0.1}  \\
& \rpauc  & \msc{98.4,0.9} & \msb{99.3,0.1} & \msb{97.7,0.9} & \msb{99.0,0.1} & \msb{97.4,0.9} & \msb{98.8,0.1}  \\
\midrule
\multirow{5}{*}{ CIFAR10 }
& SAUC-LS & \msc{63.7,0.7} & \msc{63.9,0.6} & \msc{63.4,0.7} & \msc{63.0,0.5} & \msc{63.3,0.6} & \msc{62.5,0.3}           \\
& OptAG   & \msc{63.3,0.8} & \msc{58.2,2.3} & \msc{62.9,0.8} & \msc{58.4,2.0} & \msc{63.2,0.7} & \msc{58.0,1.7}           \\
& PNU-AUC & \msb{78.5,5.8} & \msc{88.9,0.1} & \msc{60.8,7.8} & \msc{85.4,1.1} & \msc{57.9,8.1} & \msc{82.5,0.9}           \\
& \samult & \msc{70.3,7.2} & \msc{89.2,0.4} & \msc{65.8,8.4} & \msb{86.9,0.8} & \msc{60.8,8.7} & \msb{84.5,0.6}  \\
& \rpauc  & \msc{70.9,7.1} & \msb{89.4,0.4} & \msb{66.5,8.0} & \msc{86.7,0.7} & \msb{65.8,0.7} & \msc{83.9,0.7}           \\
\midrule
\multirow{5}{*}{ CIFAR100 }
& SAUC-LS & \msc{42.5,1.2} & \msc{43.5,0.5} & \msc{41.8,1.1} & \msc{42.7,0.5} & \msc{41.8,1.0} & \msc{42.5,0.4}           \\
& OptAG   & \msc{42.2,1.0} & \msc{45.7,1.5} & \msc{41.6,0.8} & \msc{45.8,2.0} & \msc{41.7,1.0} & \msc{46.3,2.0}           \\
& PNU-AUC & \msc{61.0,4.9} & \msc{76.8,0.5} & \msc{56.6,4.1} & \msc{73.8,0.4} & \msc{55.4,4.3} & \msc{69.6,1.3}           \\
& \samult & \msb{64.0,4.5} & \msb{78.1,0.7} & \msc{60.1,4.7} & \msb{75.1,0.3} & \msc{57.4,4.7} & \msc{71.2,1.2}           \\
& \rpauc  & \msb{64.0,4.5} & \msc{77.9,0.6} & \msb{61.4,5.1} & \msb{75.1,0.5} & \msb{60.3,4.0} & \msb{71.7,1.0}  \\
 \bottomrule          
  \end{tabularx}
\end{table*}

\subsection{Experimental Setup}
The experiments under various weakly supervised learning scenarios are conducted with commonly used benchmark datasets. For AUC optimization with inaccurate or incomplete supervision, we utilize image benchmark \mnist{}, \fmnist{}, \cifar{}, and \cifarb{} to synthetic multiple datasets with varying setups per task. The datasets are transformed into binary classification datasets, i.e., odd vs.\ even numbers for \mnist{}, upper vs.\ lower garments for \fmnist{}, and animals vs.\ non-animals for CIFAR datasets. Such a dataset transformation is widely used in th related literature, e.g., \cite{feng2021,lu2019,Haim2022}. For AUC optimization with inexact supervision, we adopt several multi-instance learning datasets\footnote{http://www.uco.es/grupos/kdis/momil/} that are commonly used in the related literature, including Musk1, Musk2, fox, tiger, and elephant.

Our method is compared with multiple baselines designed for different problem settings, which will be introduced in the following subsections for each setting.
All methods are implemented with the same backbone and optimizer where applicable. Specifically, for \cifar{} and \cifarb{} datasets, the backbone is Mini-VGG; for \mnist{}, \fmnist{}, and the MIL datasets, the backbone is a simple fully-connected network with one hidden layer.
The hyper-parameters of the baselines are tuned through grid search or selected according to their original papers.
The hyper-parameter $\gamma$ of WSAUC is set as $0.45$ for all tasks.
All of the experiments are repeated for 10 times to eliminate the impact of the randomness, the AUC performance and the standard deviation are reported. For more details please refer to the released code.

\subsection{WSAUC in Various Weakly Supervised Settings}

\subsubsection{Noisy Label AUC Optimization}
For noisy label AUC optimization, we compare \rpauc{} with several noisy label learning losses/methods for AUC optimization:
\aucbarrier~\cite{Charoenphakdee2019}, a noisy AUC optimization method based on the barrier hinge loss, which is shown to be robust to the corrupted labels.
\auchinge, \aucramp, \aucunhinged{} are variations of \aucbarrier{} with different surrogate loss functions: hinge loss, ramp loss, and unhinged loss, respectively.
DRAUC~\cite{drauc}, a robust self-paced AUC optimization model against noisy and adversarial examples.
We compare the baselines on the task of AUC optimization with label noise, with varying noise ratios.
The noise ratios of the positive data and negative data are selected from \(\{20\%, 30\%, 40\%\}\), respectively, and all different combinations of the positive and negative ratios are tested for showing the performances under symmetrical and asymmetrical noise.

The results are shown in \cref{tab:exp_noise}. The results indicate that the performance of the multiple methods are close on the easy datasets \mnist{} and \fmnist{}, while \rpauc{} achieves relatively larger improvement on the challenging datasets \cifar{} and \cifarb{}. DRAUC shows unstable performance on CIFAR datasets, which may be due to the assumption of the generative models not holding on those datasets.

\subsubsection{Positive-Unlabeled AUC Optimization}

We compare the following baselines on the task of positive-unlabeled AUC optimization: PU-AUC~\cite{puauc} and \samultpu{}~\cite{Xie2018}.
PU-AUC minimizes an unbiased AUC risk by compensating the excess risk induced by the positive data in the unlabeled set. \samultpu{} is the degenerate version of \samult{}. It achieves PU AUC optimization by showing the consistency of risk obtained by treating the unlabeled data as negative.
The positive-unlabeled task is known to be sensitive to the class prior probability and label ratio. Thus, we test the situations where the positive ratios are chosen in \(\{20\%, 30\%, 40\%\}\), and the labeled ratios are chosen in \(\{5\%, 10\%\}\). We compare \rpauc{} with PU-AUC and \samultpu{}, which are the variations of PNU-AUC and \samult{}, respectively. 

The results are shown in \cref{tab:exp_pu}. It can be observed that \rpauc{} shows its advantage on most of the datasets and scenarios, especially when the label ratio is relatively low. This indicates that the advantage of \rpauc{} is greater when the amount of labeled data is small.

\subsubsection{Multi-Instance AUC Optimization}
For multi-instance AUC optimization, we compare \rpauc{} with multi-label learning approaches: MI-SVM and mi-SVM~\cite{mil}, two margin based approach for multi-instance learning. MissSVM~\cite{misssvm}, which uses a semi-supervised learning approach, treating the instances in positive bags as unlabeled data. SIL~\cite{sil}, a multi-instance learning approach that learns from the bag label. The multi-instance learning approach sbMIL~\cite{sbmil}, which is specially designed for the situation that the positive bags are sparse. And two varieties of MIDAM~\cite{midam}, which is a multi-instance AUC optimization method based on non-convex min-max optimization with smoothed-max (smx) and attention-based (att) pooling. Due to the challenge of synthesizing multi-instance tasks from image benchmark datasets, we conduct experiments on several widely used multi-instance learning datasets, i.e., Musk1, Musk2, fox, tiger, and elephant. The AUC is calculated based on the scores of the instance bags. For \rpauc{}, the model is trained at the instance level as described previously, and the bag score is predicted as the maximum instance score in the bag.

The results are shown in \cref{tab:exp_mil}. It is indicated that \rpauc{} achieves relatively large improvement, compared with miSVM, MISVM, MissSVM, SIL, and sbMIL. This may be because the these baselines are not explicitly optimizing AUC, indicating that explicitly optimizing instance-level AUC with \rpauc{} in the context of multi-instance learning is beneficial for achieve better bag-level AUC performance, and the robust training is helpful for the multi-instance learning task. Compared to MIDAM, a dedicated multi-instance method for AUC optimization, WSAUC still demonstrates significant performance improvement. This indicates the effectiveness of introducing rpAUC for multi-instance AUC optimization.

\subsubsection{Semi-Supervised AUC Optimization with Label Noise}
To the best of our knowledge, no existing AUC optimization method learns with both unlabeled and noisily labeled data simultaneously. Therefore, we compare the following baselines in the task of semi-supervised AUC optimization: SAUC-LS~\cite{Fujino2016}, a semi-supervised extension of AUC optimization with a novel loss function that utilizes the unlabeled data. OptAG~\cite{Fujino2016}, a semi-supervised AUC optimization method based on generative models. PNU-AUC~\cite{puauc}, a semi-supervised AUC optimization approach that combines unbiased PU and NU AUC optimization, where the unbiased PU and NU estimation is achieved by compensating the bias with the risk estimated on the labeled data. \samult{}~\cite{Xie2018}, a semi-supervised AUC optimization method that utilizes the unbiased risk estimation. In addition to showing the performance in the normal semi-supervised setup with 0\% noise ratio, we also evaluate the performance in situations where the labeled data is affected by noise, with noise ratios of 20\% or 30\%. We test the baselines with label ratios of \(\{5\%, 10\%\}\) to show the impact of the amount of labeled data.

The results are shown in \cref{tab:exp_ssl}. We observe that when the data is noisy-free, \rpauc{} achieves similar performance to PNU-AUC and \samult{} on all four datasets. However, when the labeled data is both limited (5\% label ratio) and noisy (20\% or 30\% noise ratio), \rpauc{} outperforms the baselines by a large margin. This indicates that \rpauc{} is particularly useful when dealing with scarce and contaminated labeled data. In other scenarios, \rpauc{} performs just as well as the prior state-of-the-art methods. Such results indicate that WSAUC is more robust to the label incomplete and inaccurate problems.

\subsection{rpAUC vs.\@ AUC as Training Objective}

To support our claim that rpAUC is a robust training objective for weakly supervised AUC optimization,
we empirically demonstrate how much the rpAUC can outperform AUC as the training objective when the noise ratio varies. Optimizing normal AUC can be regarded as a degenerate version of optimizing rpAUC by ablating the removal step of the noisy instance pairs. We follow the problem setup in \cref{subsec:unified} with varying \(\theta_A\) and \(\theta_B\) to show the performance increments of optimizing rpAUC over AUC{}.

We adopt the \cifarb{} dataset and vary the class proportions \(\theta_A\) and \((1-\theta_B)\) in the range of \([0.65, 1.0]\) with step size of \(0.05\). The smaller the values of \(\theta_A\) and \((1-\theta_B)\), the higher the noise ratio in the corresponding dataset. 5\% of the data is used for training, to investigate the algorithm performance when data is relatively scarce. The experiments are repeated 10 times for each class proportion combination. The results are shown in \cref{fig:vs}.

It can be observed in the plot that when the dataset is clean, maximizing AUC or rpAUC yields similar result. As the noise ratio increases, maximizing rpAUC shows greater improvement over maximizing AUC as the optimization objective during training.
This result validates that the rpAUC is a more robust training objective for weakly supervised AUC optimization.

\begin{figure}[t]
	\centering
  \includegraphics[width=.9\columnwidth]{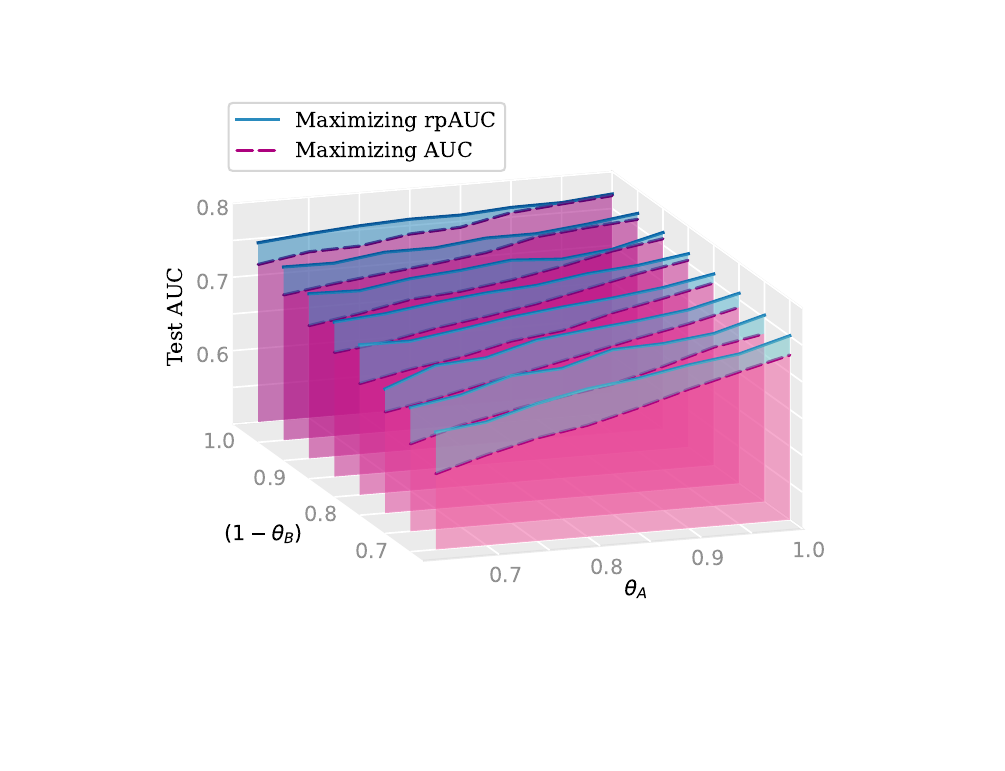} 
  \caption{Performance comparison of using AUC and rpAUC as training objective, under different noise ratios. Blue shading indicates the improvement in test AUC achieved by maximizing rpAUC during training.}
  \label{fig:vs}
\end{figure}

\subsection{Hyperparameter Sensitivity}

\begin{figure}[t]
	\centering
  \includegraphics[width=.99\columnwidth]{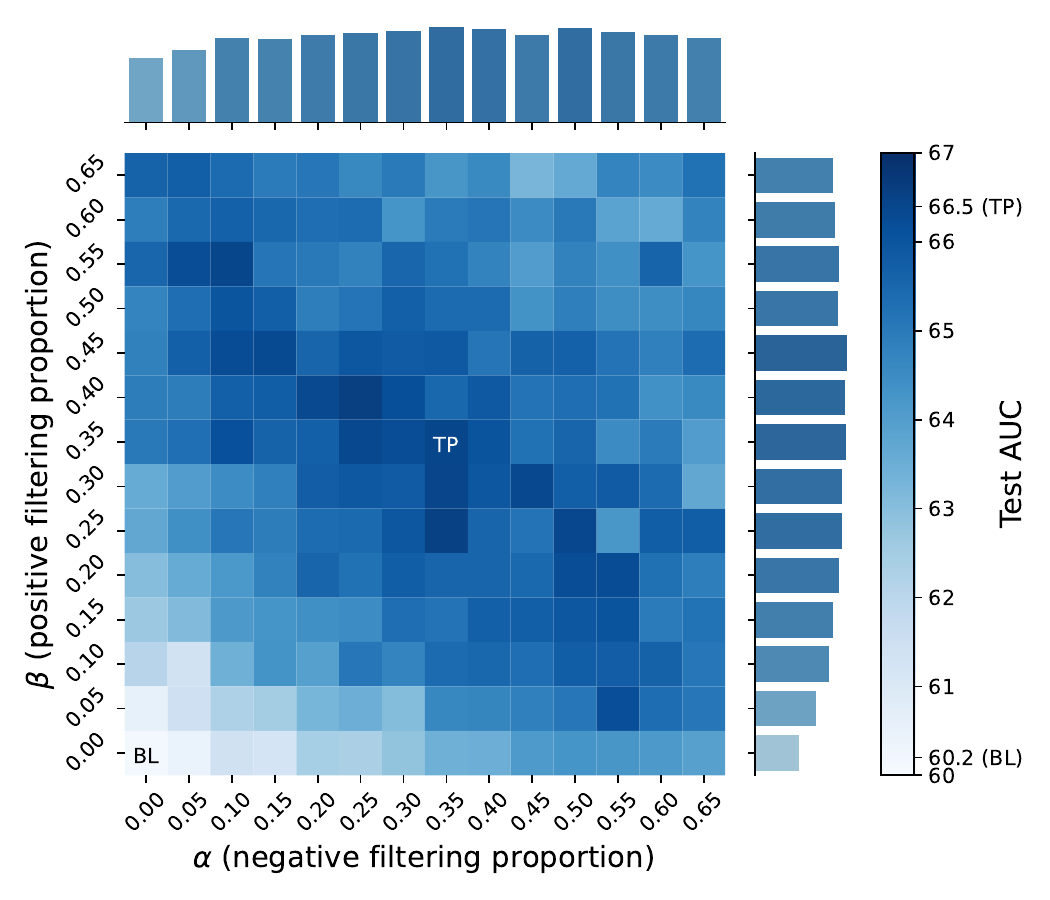} 
  \caption{Hyperparameter sensitivity. BL (baseline): rpAUC degenerates to full AUC when both \(\alpha\) and \(\beta\) are set to zero. TP (true proportion): the filtering proportions are identical to the true proportions, i.e., \(\alpha=1-\theta_A\) and \(\beta=\theta_B\).}
  \label{fig:ps}
\end{figure}

Using rpAUC requires two hyperparameters: the FPR threshold \(\alpha\) and the TPR threshold \(\beta\). These thresholds dictate the filtering proportions in the negative data (or \(\Xcalb\)) and the positive data (or \(\Xcala\)), respectively. To investigate how they impact the model performance, we alter their values and show the resulting variations in the model performance.

We again adopt the \cifarb{} dataset for the experiment, and the proportions \(\theta_A\) and \((1-\theta_B)\) are both set to \(0.65\). The data size is shrink to 5\% to enlarge the impact of the contamination. The hyperparameter \(\alpha\) and \(\beta\) are chosen in the range of \([0.65, 1.0]\) with step size of \(0.05\). The results are shown in \cref{fig:ps}.

In \cref{fig:ps}, the abbreviation ``BL'' denotes the baseline scenario, where both \(\alpha\) and \(\beta\) are set to zero. In this case, rpAUC degenerates to the full AUC\@.
The results reveal that as \(\alpha\) and \(\beta\) increase, the model performance consistently surpasses that of the baseline case, suggesting that a relatively arbitrary assignment of hyperparameter values can lead to performance enhancement. The test AUC reaches its peak when the hyperparameters closely approximate the true proportions  (annotated as ``TP''), aligning with our understanding of maximizing rpAUC\@. When the true proportions are unknown, it is advisable to make larger guesses of \(\alpha\) and \(\beta\). This is illustrated in the bar plot, where the average performance gradually declines as the hyperparameters exceed the true proportions.

\section{Conclusion and Future Work}\label{sec:con}

In this work, we study the weakly supervised AUC optimization problem. We present two main results: 
\begin{enumerate}
  \item We show that the AUC optimization problems in various weakly supervised scenarios can be uniformly framed as minimizing the AUC risk on contaminated sets, and the empirical risk minimization is consistent with the maximization of true AUC{}. 
  \item We propose the reversed partial AUC (rpAUC), which serves as a robust training objective for maximizing AUC under contaminated supervision. 
\end{enumerate}
Based on these results, we present the WSAUC framework, which provides a unified solution for multiple weakly supervised AUC optimization problems.

The unified WSAUC framework may be helpful for adapting specialize weakly supervised AUC optimization methods to new scenarios. It may also enable the development of new methods that can be applied to multiple scenarios, e.g., multi-class AUC optimization. 
These directions are left for future work.

\ifCLASSOPTIONcompsoc
  \section*{Acknowledgments}
\else
  \section*{Acknowledgment}
\fi

The authors would like to thank Prof.\ Lijun Zhang, Prof.\ Yu-Feng Li, and Prof.\ Wei Gao for their valuable advices, and the anonymous reviewers for their comments. This research was supported by NSFC (62076121, 61921006).

\ifCLASSOPTIONcaptionsoff
  \newpage
\fi



\bibliographystyle{IEEEtran}
\bibliography{IEEEabrv,uniauc,response}

\begin{thebibliography}{100}
\providecommand{\url}[1]{#1}
\csname url@samestyle\endcsname
\providecommand{\newblock}{\relax}
\providecommand{\bibinfo}[2]{#2}
\providecommand{\BIBentrySTDinterwordspacing}{\spaceskip=0pt\relax}
\providecommand{\BIBentryALTinterwordstretchfactor}{4}
\providecommand{\BIBentryALTinterwordspacing}{\spaceskip=\fontdimen2\font plus
\BIBentryALTinterwordstretchfactor\fontdimen3\font minus \fontdimen4\font\relax}
\providecommand{\BIBforeignlanguage}[2]{{%
\expandafter\ifx\csname l@#1\endcsname\relax
\typeout{** WARNING: IEEEtran.bst: No hyphenation pattern has been}%
\typeout{** loaded for the language `#1'. Using the pattern for}%
\typeout{** the default language instead.}%
\else
\language=\csname l@#1\endcsname
\fi
#2}}
\providecommand{\BIBdecl}{\relax}
\BIBdecl

\bibitem{auc}
J.~A. Hanley and B.~J. McNeil, ``The meaning and use of the area under a receiver operating characteristic ({ROC}) curve,'' \emph{Radiol.}, vol. 143, no.~1, pp. 29--36, 1982.

\bibitem{a1}
S.~J. Mason and N.~E. Graham, ``Areas beneath the relative operating characteristics ({ROC}) and relative operating levels ({ROL}) curves: Statistical significance and interpretation,'' \emph{Quart. J. Roy. Meteor. Soc.}, vol. 128, no. 584, pp. 2145--2166, 2002.

\bibitem{a2}
T.~Fawcett, ``An introduction to {ROC} analysis,'' \emph{Pattern Recognit. Lett.}, vol.~27, no.~8, pp. 861--874, 2006.

\bibitem{aucsurvey}
T.~Yang and Y.~Ying, ``{AUC} maximization in the era of big data and {AI}: A survey,'' \emph{ACM Comput. Surv.}, vol.~55, no.~8, 2022.

\bibitem{cortes2003auc}
C.~Cortes and M.~Mohri, ``{AUC} optimization vs. error rate minimization,'' in \emph{Advances Neural Inf. Process. Syst. 16}, 2003, pp. 313--320.

\bibitem{Freund2003}
Y.~Freund, R.~Iyer, R.~E. Schapire, and Y.~Singer, ``An efficient boosting algorithm for combining preferences,'' \emph{J. Mach. Learn. Res.}, vol.~4, pp. 933--969, 2003.

\bibitem{joachims2005}
T.~Joachims, ``A support vector method for multivariate performance measures,'' in \emph{Proc. 22nd Int. Conf. Mach. Learn.}, 2005, pp. 377--384.

\bibitem{aucgd}
A.~Herschtal and B.~Raskutti, ``Optimising area under the {ROC} curve using gradient descent,'' in \emph{Proc. 21st Int. Conf. Mach. Learn.}, 2004, pp. 49--56.

\bibitem{Zhao2011oam}
P.~Zhao, S.~C.~H. Hoi, R.~Jin, and T.~Yang, ``Online {AUC} maximization,'' in \emph{Proc. 28th Int. Conf. Mach. Learn.}, 2011, pp. 233--240.

\bibitem{solam}
Y.~Ying, L.~Wen, and S.~Lyu, ``Stochastic online {AUC} maximization,'' in \emph{Advances Neural Inf. Process. Syst. 29}, 2016, pp. 451--459.

\bibitem{consistency}
W.~Gao and Z.-H. Zhou, ``On the consistency of {AUC} pairwise optimization,'' in \emph{Proc. 24th Int. Joint Conf. Artif. Intell.}, 2015, pp. 939--945.

\bibitem{aucgeb}
S.~Agarwal, T.~Graepel, R.~Herbrich, S.~{Har-Peled}, and D.~Roth, ``Generalization bounds for the area under the roc curve,'' \emph{J. Mach. Learn. Res.}, vol.~6, pp. 393--425, 2005.

\bibitem{Liu2020}
M.~Liu, Z.~Yuan, Y.~Ying, and T.~Yang, ``Stochastic {AUC} maximization with deep neural networks,'' in \emph{Proc. Int. Conf. Learn. Representations}, 2020.

\bibitem{Yuan2022}
Z.~Yuan, Z.~Guo, N.~Chawla, and T.~Yang, ``Compositional training for end-to-end deep {AUC} maximization,'' in \emph{Proc. Int. Conf. Learn. Representations}, 2022.

\bibitem{Narasimhan2013opauc}
H.~Narasimhan and S.~Agarwal, ``{SVM} {pAUC} tight: A new support vector method for optimizing partial {AUC} based on a tight convex upper bound,'' in \emph{Proc. 19th {ACM} {SIGKDD} Int. Conf. Knowl. Discovery Data Mining}, 2013, pp. 167--175.

\bibitem{Yang2019tpauc}
H.~Yang, K.~Lu, X.~Lyu, and F.~Hu, ``Two-way partial {AUC} and its properties,'' \emph{Stat. Methods Med. Res.}, vol.~28, pp. 184--195, 2019.

\bibitem{Cheng2019}
F.~Cheng, G.~Fu, X.~Zhang, and J.~Qiu, ``Multi-objective evolutionary algorithm for optimizing the partial area under the {ROC} curve,'' \emph{Knowl.-Based Syst.}, vol. 170, pp. 61--69, 2019.

\bibitem{Iwata2020sslpauc}
T.~Iwata, A.~Fujino, and N.~Ueda, ``Semi-supervised learning for maximizing the partial {AUC},'' in \emph{Proc. 34nd {AAAI} Conf. Artif. Intell.}, 2020, pp. 4239--4246.

\bibitem{Yang2021tpaucpie}
Z.~Yang, Q.~Xu, S.~Bao, Y.~He, X.~Cao, and Q.~Huang, ``When all we need is a piece of the pie: A generic framework for optimizing two-way partial {AUC},'' in \emph{Proc. 38th Int. Conf. Mach. Learn.}, 2021, pp. 11\,820--11\,829.

\bibitem{Liu2010bio}
Z.~Liu and T.~Hyslop, ``Partial {AUC} for differentiated gene detection,'' in \emph{Proc. 2010 {IEEE} Int. Conf. BioInformatics BioEngineering}, 2010, pp. 310--311.

\bibitem{bai2020icassp}
Z.~Bai, X.-L. Zhang, and J.~Chen, ``Partial {AUC} optimization based deep speaker embeddings with class-center learning for text-independent speaker verification,'' in \emph{Proc. 2020 {IEEE} Int. Conf. Acoust., Speech, Signal Process.}, 2020, pp. 6819--6823.

\bibitem{wsl}
Z.-H. Zhou, ``A brief introduction to weakly supervised learning,'' \emph{Nat. Sci. Rev.}, vol.~5, no.~1, pp. 44--53, 2017.

\bibitem{zhubook}
X.~Zhu, A.~B. Goldberg, R.~Brachman, and T.~Dietterich, \emph{Introduction to Semi-Supervised Learning}.\hskip 1em plus 0.5em minus 0.4em\relax Morgan and Claypool publishers, 2009.

\bibitem{pulsurvey}
J.~Bekker and J.~Davis, ``Learning from positive and unlabeled data: A survey,'' \emph{Mach. Learn.}, vol. 109, no.~4, pp. 719--760, 2020.

\bibitem{Han2021nllsurvey}
B.~Han, Q.~Yao, T.~Liu, G.~Niu, I.~W. Tsang, J.~T. Kwok, and M.~Sugiyama, ``A survey of label-noise representation learning: Past, present and future,'' \emph{arXiv:2011.04406}, 2021.

\bibitem{milsurvey}
M.-A. Carbonneau, V.~Cheplygina, E.~Granger, and G.~Gagnon, ``Multiple instance learning: A survey of problem characteristics and applications,'' \emph{Pattern Recognit.}, vol.~77, pp. 329--353, 2018.

\bibitem{Fujino2016}
A.~Fujino and N.~Ueda, ``A semi-supervised {AUC} optimization method with generative models,'' in \emph{{IEEE} 16th Int. Conf. Data Mining}, 2016, pp. 883--888.

\bibitem{puauc}
T.~Sakai, G.~Niu, and M.~Sugiyama, ``Semi-supervised {AUC} optimization based on positive-unlabeled learning,'' \emph{Mach. Learn.}, vol. 107, pp. 767--794, 2018.

\bibitem{Xie2018}
Z.~Xie and M.~Li, ``Semi-supervised {AUC} optimization without guessing labels of unlabeled data,'' in \emph{Proc. 32nd {AAAI} Conf. Artif. Intell.}, 2018, pp. 4310--4317.

\bibitem{Charoenphakdee2019}
N.~Charoenphakdee, J.~Lee, and M.~Sugiyama, ``On symmetric losses for learning from corrupted labels,'' in \emph{Proc. 36th Int. Conf. Mach. Learn.}, 2019, pp. 961--970.

\bibitem{pll}
T.~Cour, B.~Sapp, C.~Jordan, and B.~Taskar, ``Learning from ambiguously labeled images,'' in \emph{Proc. 2009 {IEEE} Conf. Comput. Vis. Pattern Recognit.}, 2009, pp. 919--926.

\bibitem{uulearning}
N.~Lu, G.~Niu, A.~K. Menon, and M.~Sugiyama, ``On the minimal supervision for training any binary classifier from only unlabeled data,'' in \emph{Proc. Int. Conf. Learn. Representations}, 2019.

\bibitem{Natarajan2013nll}
N.~Natarajan, I.~S. Dhillon, P.~K. Ravikumar, and A.~Tewari, ``Learning with noisy labels,'' in \emph{Advances Neural Inf. Process. Syst. 26}, 2013, pp. 1196--1204.

\bibitem{Menon2015}
A.~K. Menon, B.~Van~Rooyen, C.~S. Ong, and R.~C. Williamson, ``Learning from corrupted binary labels via class-probability estimation,'' in \emph{Proc. 32nd Int. Conf. Mach. Learn.}, 2015, pp. 125--134.

\bibitem{Ghosh2017}
A.~Ghosh, H.~Kumar, and P.~S. Sastry, ``Robust loss functions under label noise for deep neural networks,'' in \emph{Proc. 31st {AAAI} Conf. Artif. Intell.}, 2017, pp. 1919--1925.

\bibitem{Liu2016}
T.~Liu and D.~Tao, ``Classification with noisy labels by importance reweighting,'' \emph{{IEEE} Trans. Pattern Anal. Mach. Intell.}, vol.~38, pp. 447--461, 2016.

\bibitem{Jiang2018mentornet}
L.~Jiang, Z.~Zhou, T.~Leung, L.-J. Li, and L.~Fei-Fei, ``{M}entor{N}et: Learning data-driven curriculum for very deep neural networks on corrupted labels,'' in \emph{Proc. 35th Int. Conf. Mach. Learn.}, 2018, pp. 2304--2313.

\bibitem{Ren2018}
M.~Ren, W.~Zeng, B.~Yang, and R.~Urtasun, ``Learning to reweight examples for robust deep learning,'' in \emph{Proc. 35th Int. Conf. Mach. Learn.}, 2018, pp. 4334--4343.

\bibitem{Yang2021dythre}
H.~Yang, Y.~Jin, Z.~Li, D.-B. Wang, L.~Miao, X.~Geng, and M.-L. Zhang, ``Learning from noisy labels via dynamic loss thresholding,'' \emph{arXiv:2104.02570}, 2021.

\bibitem{Song2022nllsurvey}
H.~Song, M.~Kim, D.~Park, Y.~Shin, and J.-G. Lee, ``Learning from noisy labels with deep neural networks: A survey,'' \emph{{IEEE} Trans. Neural Netw. Learn. Syst.}, pp. 1--19, 2022, {IEEE} Early Access.

\bibitem{Zhang2018gce}
Z.~Zhang and M.~Sabuncu, ``Generalized cross entropy loss for training deep neural networks with noisy labels,'' in \emph{Advances Neural Inf. Process. Syst. 31}, 2018, pp. 8792--8802.

\bibitem{Wang2019sce}
Y.~Wang, X.~Ma, Z.~Chen, Y.~Luo, J.~Yi, and J.~Bailey, ``Symmetric cross entropy for robust learning with noisy labels,'' in \emph{Proc. 2019 {IEEE/CVF} Int. Conf. Comput. Vis.}, 2019, pp. 322--330.

\bibitem{Xiao_2015_CVPR}
T.~Xiao, T.~Xia, Y.~Yang, C.~Huang, and X.~Wang, ``Learning from massive noisy labeled data for image classification,'' in \emph{Proc. 2015 {IEEE} Conf. Comput. Vis. Pattern Recognit.}, 2015, pp. 2691--2699.

\bibitem{goldberger2017adaptlayer}
J.~Goldberger and E.~Ben-Reuven, ``Training deep neural-networks using a noise adaptation layer,'' in \emph{Proc. Int. Conf. Learn. Representations}, 2017.

\bibitem{han2018masking}
B.~Han, J.~Yao, G.~Niu, M.~Zhou, I.~Tsang, Y.~Zhang, and M.~Sugiyama, ``Masking: A new perspective of noisy supervision,'' in \emph{Advances Neural Inf. Process. Syst. 31}, 2018, pp. 5841--5851.

\bibitem{mixup}
H.~Zhang, M.~Cisse, Y.~N. Dauphin, and D.~Lopez-Paz, ``{mixup}: Beyond empirical risk minimization,'' in \emph{Proc. Int. Conf. Learn. Representations}, 2018.

\bibitem{Tanno_2019_CVPR}
R.~Tanno, A.~Saeedi, S.~Sankaranarayanan, D.~C. Alexander, and N.~Silberman, ``Learning from noisy labels by regularized estimation of annotator confusion,'' in \emph{Proc. {IEEE/CVF} Conf. Comput. Vis. Pattern Recognit.}, 2019, pp. 11\,244--11\,253.

\bibitem{lukasik2020labelsmooth}
M.~Lukasik, S.~Bhojanapalli, A.~Menon, and S.~Kumar, ``Does label smoothing mitigate label noise?'' in \emph{Proc. 37th Int. Conf. Mach. Learn.}, 2020, pp. 6448--6458.

\bibitem{R1}
X.~Xia, T.~Liu, N.~Wang, B.~Han, C.~Gong, G.~Niu, and M.~Sugiyama, ``Are anchor points really indispensable in label-noise learning?'' in \emph{Advances Neural Inf. Process. Syst. 32}, 2019, pp. 6838--6849.

\bibitem{R2}
S.~Wu, X.~Xia, T.~Liu, B.~Han, M.~Gong, N.~Wang, H.~Liu, and G.~Niu, ``{Class2Simi}: A noise reduction perspective on learning with noisy labels,'' in \emph{Proc. 38th Int. Conf. Mach. Learn.}, 2021, pp. 11\,285--11\,295.

\bibitem{R4}
C.~Scott, ``A rate of convergence for mixture proportion estimation, with application to learning from noisy labels,'' in \emph{Proc. 18th Int. Conf. Artif. Intell. Stat.}, vol.~38, 2015, pp. 838--846.

\bibitem{semi1994}
B.~M. Shahshahani and D.~A. Landgrebe, ``The effect of unlabeled samples in reducing the small sample size problem and mitigating the hughes phenomenon,'' \emph{{IEEE} Trans. Geosci. Remote Sens.}, vol.~32, no.~5, pp. 1087--1095, 1994.

\bibitem{nigam2000text}
K.~Nigam, A.~K. McCallum, S.~Thrun, and T.~Mitchell, ``Text classification from labeled and unlabeled documents using {EM},'' \emph{Mach. Learn.}, vol.~39, no.~2, pp. 103--134, 2000.

\bibitem{vae}
D.~P. Kingma, S.~Mohamed, D.~Jimenez~Rezende, and M.~Welling, ``Semi-supervised learning with deep generative models,'' in \emph{Advances Neural Inf. Process. Syst. 27}, 2014, pp. 3581--3589.

\bibitem{tsvm}
T.~Joachims, ``Transductive inference for text classification using support vector machines,'' in \emph{Proc. 16th Int. Conf. Mach. Learn.}, 1999, pp. 200--209.

\bibitem{Chapelle2006}
O.~Chapelle, M.~Chi, and A.~Zien, ``A continuation method for semi-supervised {SVM}s,'' in \emph{Proc. 23rd Int. Conf. Mach. Learn.}, 2006, pp. 185--192.

\bibitem{Li2010}
Y.-F. Li, J.~T. Kwok, and Z.-H. Zhou, ``Cost-sensitive semi-supervised support vector machine,'' in \emph{Proc. 24th {AAAI} Conf. Artif. Intell.}, 2010, pp. 500--505.

\bibitem{Blum2001}
A.~Blum and S.~Chawla, ``Learning from labeled and unlabeled data using graph mincuts,'' in \emph{Proc. 18th Int. Conf. Mach. Learn.}, 2001, pp. 19--26.

\bibitem{lpa}
X.~Zhu, Z.~Ghahramani, and J.~Lafferty, ``Semi-supervised learning using gaussian fields and harmonic functions,'' in \emph{Proc. 20th Int. Conf. Mach. Learn.}, 2003, pp. 912--919.

\bibitem{cotraining}
A.~Blum and T.~Mitchell, ``{Combining Labeled and Unlabeled Data with Co-training},'' in \emph{Proc. 11th Annu. Conf. Comput. Learn. Theory}, 1998, pp. 92--100.

\bibitem{tritraining}
Z.-H. Zhou and M.~Li, ``{Tri-Training}: Exploiting unlabeled data using three classifiers,'' \emph{{IEEE} Trans. Knowl. Data Eng.}, vol.~17, no.~11, pp. 1529--1541, 2005.

\bibitem{ladder}
A.~Rasmus, M.~Berglund, M.~Honkala, H.~Valpola, and T.~Raiko, ``Semi-supervised learning with ladder networks,'' in \emph{Advances Neural Inf. Process. Syst. 28}, 2015, pp. 3546--3554.

\bibitem{Miyato2019}
T.~Miyato, S.-I. Maeda, M.~Koyama, and S.~Ishii, ``Virtual adversarial training: A regularization method for supervised and semi-supervised learning,'' \emph{{IEEE} Trans. Pattern Anal. Mach. Intell.}, vol.~41, pp. 1979--1993, 2019.

\bibitem{ent_min}
Y.~Grandvalet and Y.~Bengio, ``Semi-supervised learning by entropy minimization,'' in \emph{Advances Neural Inf. Process. Syst. 17}, 2005, pp. 529--536.

\bibitem{gnn}
F.~Scarselli, M.~Gori, A.~C. Tsoi, M.~Hagenbuchner, and G.~Monfardini, ``The graph neural network model,'' \emph{{IEEE} Trans. Neural Netw.}, vol.~20, no.~1, pp. 61--80, 2008.

\bibitem{gcn}
T.~N. Kipf and M.~Welling, ``Semi-supervised classification with graph convolutional networks,'' in \emph{Proc. Int. Conf. Learn. Representations}, 2017.

\bibitem{ggnn}
Y.~Li, D.~Tarlow, M.~Brockschmidt, and R.~Zemel, ``Gated graph sequence neural networks,'' in \emph{Proc. Int. Conf. Learn. Representations}, 2016.

\bibitem{Liu2002pul}
B.~Liu, W.~S. Lee, P.~S. Yu, and X.~Li, ``Partially supervised classification of text documents,'' in \emph{Proc. 19th Int. Conf. Mach. Learn.}, 2002, pp. 387--394.

\bibitem{Li2003pul}
X.~Li and B.~Liu, ``Learning to classify texts using positive and unlabeled data,'' in \emph{Proc. 18th Int. Joint Conf. Artif. Intell.}, 2003, pp. 587--592.

\bibitem{Lee2003pul}
W.~S. Lee and B.~Liu, ``Learning with positive and unlabeled examples using weighted logistic regression,'' in \emph{Proc. 20th Int. Conf. Mach. Learn.}, 2003, pp. 448--455.

\bibitem{Ward2008}
G.~Ward, T.~Hastie, S.~Barry, J.~Elith, and J.~R. Leathwick, ``Presence-only data and the {EM} algorithm,'' \emph{Biometrics}, vol.~65, no.~2, pp. 554--563, 2008.

\bibitem{Elkan2008}
C.~Elkan and K.~Noto, ``Learning classifiers from only positive and unlabeled data,'' in \emph{Proc. 14th {ACM} {SIGKDD} Int. Conf. Knowl. Discovery Data Mining}, 2008, pp. 213--220.

\bibitem{Plessis2014}
M.~C. du~Plessis, G.~Niu, and M.~Sugiyama, ``Analysis of learning from positive and unlabeled data,'' in \emph{Advances Neural Inf. Process. Syst. 27}, 2014, pp. 719--760.

\bibitem{Kiryo2017}
R.~Kiryo, G.~Niu, M.~C. du~Plessis, and M.~Sugiyama, ``Positive-unlabeled learning with non-negative risk estimator,'' in \emph{Advances Neural Inf. Process. Syst. 30}, 2017, pp. 1674--1684.

\bibitem{Gong2021}
C.~Gong, H.~Shi, T.~Liu, C.~Zhang, J.~Yang, and D.~Tao, ``Loss decomposition and centroid estimation for positive and unlabeled learning,'' \emph{{IEEE} Trans. Pattern Anal. Mach. Intell.}, vol.~43, pp. 918--932, 2021.

\bibitem{Plessis2015}
M.~C. du~Plessis, G.~Niu, and M.~Sugiyama, ``Convex formulation for learning from positive and unlabeled data,'' in \emph{Proc. 32nd Int. Conf. Mach. Learn.}, 2015, pp. 1386--1394.

\bibitem{Dietterich1997}
T.~G. Dietterich, R.~H. Lathrop, and T.~Lozano-Pérez, ``Solving the multiple instance problem with axis-parallel rectangles,'' \emph{Artif. Intell.}, vol.~89, no.~1, pp. 31--71, 1997.

\bibitem{mil}
S.~Andrews, I.~Tsochantaridis, and T.~Hofmann, ``Support vector machines for multiple-instance learning,'' in \emph{Advances Neural Inf. Process. Syst. 15}, 2002, pp. 577--584.

\bibitem{Carbonneau2016}
M.-A. Carbonneau, E.~Granger, A.~J. Raymond, and G.~Gagnon, ``Robust multiple-instance learning ensembles using random subspace instance selection,'' \emph{Pattern Recognit.}, vol.~58, pp. 83--99, 2016.

\bibitem{Xiao2017}
Y.~Xiao, B.~Liu, and Z.~Hao, ``A sphere-description-based approach for multiple-instance learning,'' \emph{{IEEE} Trans. Pattern Anal. Mach. Intell.}, vol.~39, no.~2, pp. 242--257, 2017.

\bibitem{Gaertner2002}
T.~G\"{a}rtner, P.~A. Flach, A.~Kowalczyk, and A.~J. Smola, ``Multi-instance kernels,'' in \emph{Proc. 19th Int. Conf. Mach. Learn.}, 2002, pp. 179--186.

\bibitem{Wei2017}
X.-S. Wei, J.~Wu, and Z.-H. Zhou, ``Scalable algorithms for multi-instance learning,'' \emph{{IEEE} Trans. Neural Netw. Learn. Syst.}, vol.~28, no.~4, pp. 975--987, 2017.

\bibitem{Ilse2018}
M.~Ilse, J.~Tomczak, and M.~Welling, ``Attention-based deep multiple instance learning,'' in \emph{Proc. 35th Int. Conf. Mach. Learn.}, vol.~80, 10--15 Jul 2018, pp. 2127--2136.

\bibitem{Wang2019}
X.~Wang, Y.~Yan, P.~Tang, W.~Liu, and X.~Guo, ``Bag similarity network for deep multi-instance learning,'' \emph{Inf. Sci.}, vol. 504, pp. 578--588, 2019.

\bibitem{safew}
Y.-F. Li, L.-Z. Guo, and Z.-H. Zhou, ``Towards safe weakly supervised learning,'' \emph{{IEEE} Trans. Pattern Anal. Mach. Intell.}, vol.~43, pp. 334--346, 2021.

\bibitem{Zhangzy2022liisp}
Z.-Y. Zhang, P.~Zhao, Y.~Jiang, and Z.-H. Zhou, ``Learning from incomplete and inaccurate supervision,'' \emph{{IEEE} Trans. Knowl. Data Eng.}, vol.~34, no.~12, pp. 5854--5868, 2022.

\bibitem{cege}
C.~Gong, J.~Yang, J.~J. You, and M.~Sugiyama, ``Centroid estimation with guaranteed efficiency: A general framework for weakly supervised learning,'' \emph{{IEEE} Trans. Pattern Anal. Mach. Intell.}, vol.~44, no.~6, pp. 2841--2855, 2020.

\bibitem{R3}
J.~Li, R.~Socher, and S.~C. Hoi, ``{DivideMix}: Learning with noisy labels as semi-supervised learning,'' in \emph{Proc. Int. Conf. Learn. Representations}, 2020.

\bibitem{opauc}
W.~Gao, R.~Jin, S.~Zhu, and Z.-H. Zhou, ``One-pass {AUC} optimization,'' in \emph{Proc. 30th Int. Conf. Mach. Learn.}, 2013, pp. 906--914.

\bibitem{Lei2021}
Y.~Lei and Y.~Ying, ``Stochastic proximal {AUC} maximization,'' \emph{J. Mach. Learn. Res.}, vol.~22, no.~61, pp. 2832--2876, 2021.

\bibitem{ijcai18}
Z.~Xie and M.~Li, ``Cutting the software building efforts in continuous integration by semi-supervised online {AUC} optimization,'' in \emph{Proc. 27th Int. Joint Conf. Artif. Intell.}, 2018, pp. 2875--2881.

\bibitem{Azizi2021}
S.~Azizi, B.~Mustafa, F.~Ryan, Z.~Beaver, J.~Freyberg, J.~Deaton, A.~Loh, A.~Karthikesalingam, S.~Kornblith, T.~Chen, V.~Natarajan, and M.~Norouzi, ``Big self-supervised models advance medical image classification,'' in \emph{Proc. 2021 {IEEE/CVF} Int. Conf. Comput. Vis.}, 2021, pp. 3458--3468.

\bibitem{Feizi2020}
A.~Feizi, ``Hierarchical detection of abnormal behaviors in video surveillance through modeling normal behaviors based on {AUC} maximization,'' \emph{Soft Comput.}, vol.~24, pp. 10\,401--10\,413, 2020.

\bibitem{usunier2005data}
N.~Usunier, M.-R. Amini, and P.~Gallinari, ``A data-dependent generalisation error bound for the {AUC},'' in \emph{ICML'05 Workshop ROC Anal Mach. Learn.}, 2005.

\bibitem{Arpit2017dnnmem}
D.~Arpit, S.~Jastrz{\k{e}}bski, N.~Ballas, D.~Krueger, E.~Bengio, M.~S. Kanwal, T.~Maharaj, A.~Fischer, A.~Courville, Y.~Bengio, and S.~Lacoste-Julien, ``A closer look at memorization in deep networks,'' in \emph{Proc. 34th Int. Conf. Mach. Learn.}, 2017, pp. 233--242.

\bibitem{auc-dro}
D.~Zhu, G.~Li, B.~Wang, X.~Wu, and T.~Yang, ``When {AUC} meets {DRO}: Optimizing partial {AUC} for deep learning with non-convex convergence guarantee,'' in \emph{Proc. 39th Int. Conf. Mach. Learn.}, 2022, pp. 27\,548--27\,573.

\bibitem{feng2021}
L.~Feng, S.~Shu, N.~Lu, B.~Han, M.~Xu, G.~Niu, B.~An, and M.~Sugiyama, ``Pointwise binary classification with pairwise confidence comparisons,'' in \emph{Proc. 38th Int. Conf. Mach. Learn.}, vol. 139, 2021, pp. 3252--3262.

\bibitem{lu2019}
N.~Lu, G.~Niu, A.~K. Menon, and M.~Sugiyama, ``On the minimal supervision for training any binary classifier from only unlabeled data,'' in \emph{Proc. Int. Conf. Learn. Representations}, 2019.

\bibitem{Haim2022}
N.~Haim, G.~Vardi, G.~Yehudai, O.~Shamir, and M.~Irani, ``Reconstructing training data from trained neural networks,'' in \emph{Advances Neural Inf. Process. Syst. 35}, 2022, pp. 22\,911--22\,924.

\bibitem{drauc}
C.~Zhang, W.~Shi, L.~Luo, and B.~Gu, ``Doubly robust {AUC} optimization against noisy and adversarial samples,'' in \emph{Proc. 29th {ACM} {SIGKDD} Int. Conf. Knowl. Discovery Data Mining}, 2023, pp. 3195--3205.

\bibitem{misssvm}
Z.-H. Zhou and J.-M. Xu, ``On the relation between multi-instance learning and semi-supervised learning,'' in \emph{Proc. 24th Int. Conf. Mach. Learn.}, 2007, pp. 1167--1174.

\bibitem{sil}
S.~Ray and M.~Craven, ``Supervised versus multiple instance learning: An empirical comparison,'' in \emph{Proc. 22nd Int. Conf. Mach. Learn.}, 2005, pp. 697--704.

\bibitem{sbmil}
R.~C. Bunescu and R.~J. Mooney, ``Multiple instance learning for sparse positive bags,'' in \emph{Proc. 24th Int. Conf. Mach. Learn.}, 2007, pp. 105--112.

\bibitem{midam}
D.~Zhu, B.~Wang, Z.~Chen, Y.~Wang, M.~Sonka, X.~Wu, and T.~Yang, ``Provable multi-instance deep {AUC} maximization with stochastic pooling,'' in \emph{Proc. 40th Int. Conf. Mach. Learn.}, 2023, pp. 43\,205--43\,227.

\end{thebibliography}
%



%


\begin{IEEEbiography}[{\includegraphics[width=1in,height=1.25in,clip,keepaspectratio]{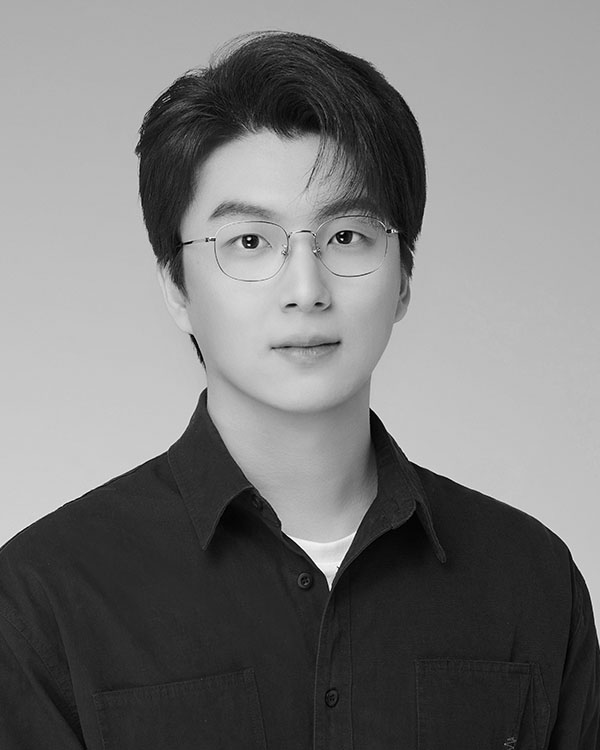}}]{Zheng Xie}
  received the BEng degree in computer science from Xi'an Jiaotong University, in 2016. Currently he is a PhD candidate in the Department of Computer Science and Technology, Nanjing University, and is a member of the LAMDA Group. His research interests mainly include machine learning and data mining, especially in learning from weak supervision or imbalanced supervision. He has received a number of awards including Outstanding Graduate of Xi'an Jiaotong University in 2016, AAAI Scholarship in 2018, selected in the Program for Outstanding PhD Candidate of Nanjing University in 2021, and so forth. 
\end{IEEEbiography}

\begin{IEEEbiography}[{\includegraphics[width=1in,height=1.25in,clip,keepaspectratio]{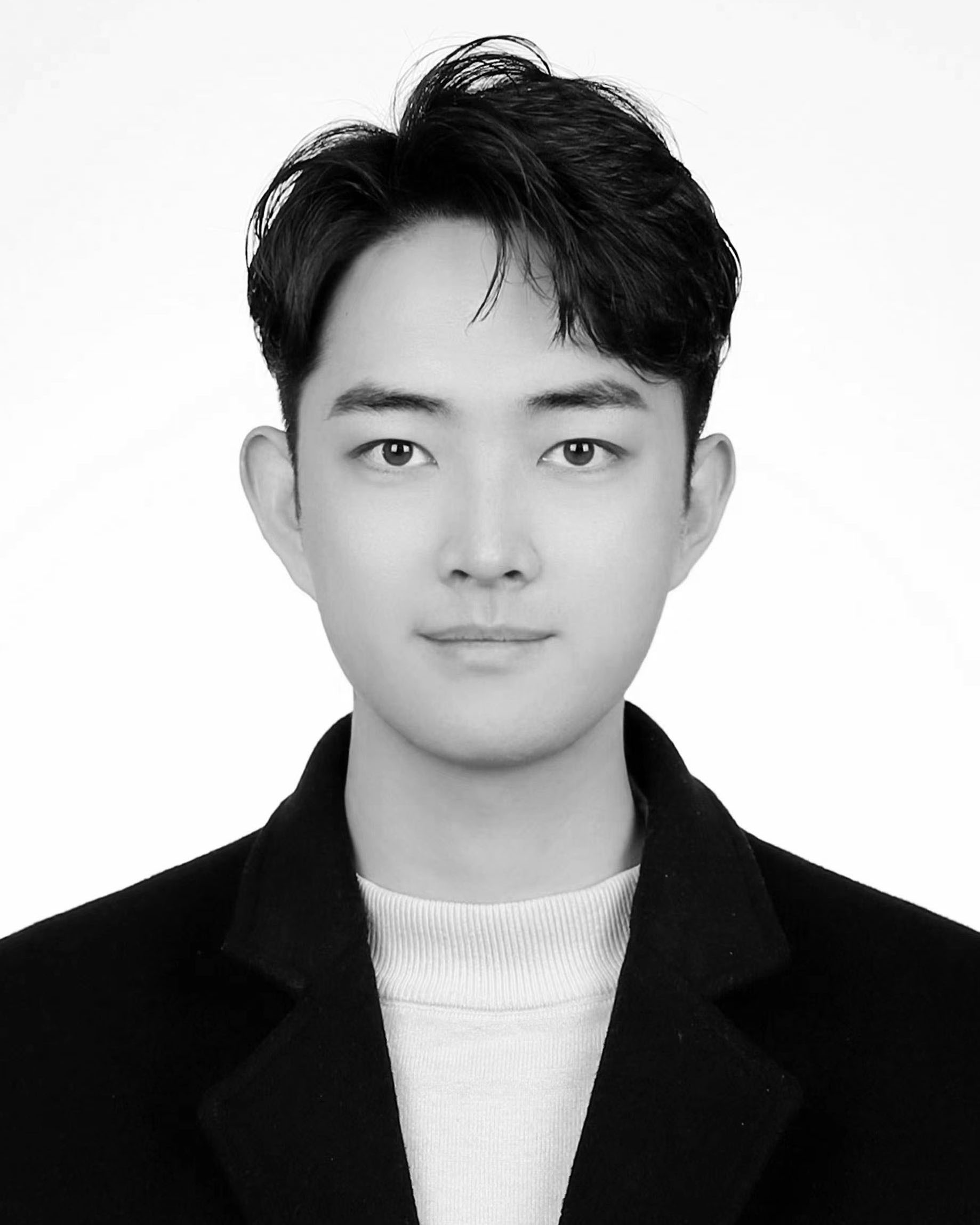}}]{Yu Liu}
  received the BSc degree in mathematics in Nanjing University, in 2021. Currently he is a master student in the School of Artificial Intelligence, Nanjing University, and is a member of the LAMDA Group. His research interests mainly include machine learning and data mining. 
\end{IEEEbiography}

\begin{IEEEbiography}[{\includegraphics[width=1in,height=1.25in,clip,keepaspectratio]{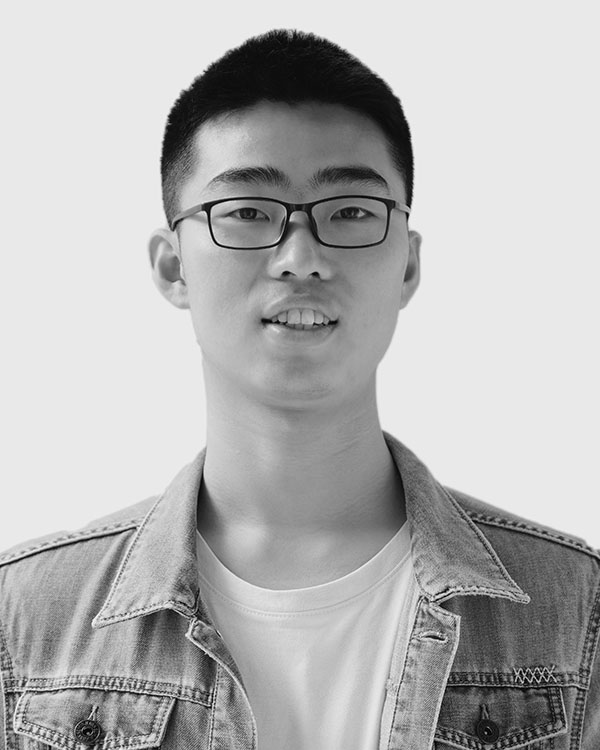}}]{Hao-Yuan He}
  received the BSc degree in computer science in Nanjing Tech University, in 2021. Currently, he is a master student in the School of Artificial Intelligence, Nanjing University, and is a member of the LAMDA Group. His research interests mainly include machine learning and data mining, especially in weakly supervised learning and neuro-symbolic learning. 
\end{IEEEbiography}

\begin{IEEEbiography}[{\includegraphics[width=1in,height=1.25in,clip,keepaspectratio]{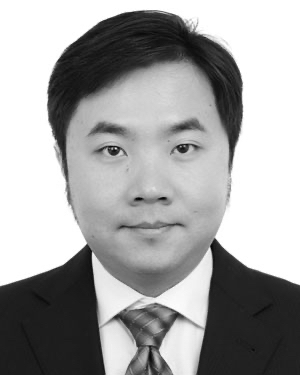}}]{Ming Li}
  (Member, IEEE) is currently a professor with the LAMDA group, the National Key Laboratory for Novel Software Technology, Nanjing University. His major research interests include machine learning and data mining, especially on software mining. He has served as the area chair of IJCAI, IEEE ICDM, etc, senior PC member of the premium conferences in artificial intelligence such as AAAI, and PC members for other premium conferences such as KDD, NeurIPS, ICML, etc. He is the founding chair of the International Workshop on Software Mining. He has been granted various awards including the PAKDD Early Career Award, the NSFC Excellent Youth Award, the New Century Excellent Talents program of the Education Ministry of China, etc.
\end{IEEEbiography}

\begin{IEEEbiography}[{\includegraphics[width=1in,height=1.25in,clip,keepaspectratio]{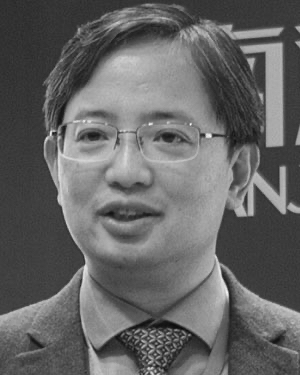}}]{Zhi-Hua Zhou}
(Fellow, IEEE) received the BSc, MSc, and PhD degrees in computer science from Nanjing University, China, in 1996, 1998, and 2000, respectively. He joined the Department of Computer Science \& Technology, Nanjing University as an assistant professor in 2001 and is currently a professor, the head of the Department of Computer Science and Technology, and the dean of the School of Artificial Intelligence. He is also the founding director of the LAMDA Group. He has authored the books Ensemble Methods: Foundations and Algorithms, Evolutionary Learning: Advances in Theories and Algorithms, Machine Learning (in Chinese) and authored or coauthored more than 200 papers in top-tier international journals or conference proceedings. He also holds 24 patents. His research interests include artificial intelligence, machine learning, and data mining. He was the recipient of various awards or honors including the National Natural Science Award of China, the IEEE Computer Society Edward J. McCluskey Technical Achievement Award, the ACML Distinguished Contribution Award, the PAKDD Distinguished Contribution Award, IEEE ICDM Outstanding Service Award, and the Microsoft Professorship Award. He is the editor-in-chief of the Frontiers of Computer Science, the associate editor-in-chief of the Science China Information Sciences, action or associate editor of Machine Learning, IEEE Transactions on Pattern Analysis and Machine Intelligence, and ACM Transactions on Knowledge Discovery from Data. He was the associate editor-in-chief for Chinese Science Bulletin, an associate editor for IEEE Transactions on Knowledge and Data Engineering, IEEE Transactions on Neural Networks and Learning Systems, ACM Transactions on Intelligent Systems and Technology, and Neural Networks. He founded Asian Conference on Machine Learning, was an advisory committee member for IJCAI from 2015 to 2016, steering committee member for ICDM, ACML, PAKDD and PRICAI, and the chair of various conferences, including the program chair of IJCAI 2021, AAAI 2019, the general chair of ICDM 2016, the senior area chair of NeurIPS, the area chair of ICML, AAAI, IJCAI, and KDD. He was the chair of the IEEE CIS Data Mining Technical Committee from 2015 to 2016, the chair of the CCF-AI from 2012 to 2019, and the chair of the CAAI Machine Learning Technical Committee from 2006 to 2015. He is a foreign member of the Academy of Europe, and a fellow of the ACM, AAAI, AAAS, IAPR, IET/IEE, CCF, and CAAI.
\end{IEEEbiography}









\end{document}